\documentclass[12pt]{article}
\usepackage{amsmath}
\usepackage{times}
\usepackage{graphicx}
\usepackage{color}
\usepackage{multirow}
\usepackage[authoryear]{natbib}
\usepackage{hyperref}
\usepackage{rotating}
\usepackage{bbm}
\usepackage{latexsym}

\usepackage{color}
\usepackage{amssymb}

\hyphenation{op-tical net-works semi-conduc-tor}

\usepackage[dvipsnames]{xcolor}
\usepackage{amsfonts}
\usepackage{scalefnt}

\newtheorem{dfn}{Definition}
\newtheorem{thm}{Theorem}
\newtheorem{lem}{Lemma}
\newtheorem{rmk}{Remark}

\newtheorem{prop}{Proposition}

\usepackage[utf8]{inputenc}
\usepackage{caption}
\captionsetup[table]{font={footnotesize}}
\captionsetup[figure]{font={footnotesize}}
\captionsetup[figure]{font={footnotesize}}
\usepackage{amsmath,algorithm,tabularx}
\usepackage[noend]{algpseudocode}

\allowdisplaybreaks

\makeatletter
\newcommand{\multiline}[1]{%
  \begin{tabularx}{\dimexpr\linewidth-\ALG@thistlm}[t]{@{}X@{}}
    #1
  \end{tabularx}
}
\makeatother

\newcommand{\ctext}[1]{\hbox{\textcircled{\scriptsize{\lower0.2ex\hbox{#1}}}}}

\usepackage{latexsym}
\def\qed{\hfill $\Box$}
\usepackage{colortbl}
\usepackage{graphicx}
\usepackage{import}

\usepackage{lscape}

\usepackage{booktabs}

\usepackage{multirow}

\newenvironment{proof}{\noindent{\it Proof.~~}}{\medskip}

\usepackage[symbol]{footmisc}

\textheight 23.4cm
\textwidth 14.65cm
\oddsidemargin 0.375in
\evensidemargin 0.375in
\topmargin  -0.55in
\interfootnotelinepenalty=10000

\newcommand{\captionfonts}{\normalsize}

\makeatletter  
\long\def\@makecaption#1#2{%
  \vskip\abovecaptionskip
  \sbox\@tempboxa{{\captionfonts #1: #2}}%
  \ifdim \wd\@tempboxa >\hsize
    {\captionfonts #1: #2\par}
  \else
    \hbox to\hsize{\hfil\box\@tempboxa\hfil}%
  \fi
  \vskip\belowcaptionskip}
\makeatother   

\renewcommand{\thefootnote}{\normalsize \arabic{footnote}} 	

\begin{document}
\hspace{13.9cm}1

\ \vspace{20mm}\\

{\LARGE Deep Clustering with a Constraint for Topological Invariance based on Symmetric InfoNCE}

\ \\
{\bf \large 
Yuhui Zhang$^{\displaystyle 1, \displaystyle \ast}$, 
Yuichiro Wada$^{\displaystyle 2, 3, \displaystyle \ast}$, 
Hiroki Waida$^{\displaystyle 1, \displaystyle \ast}$\\
Kaito Goto$^{\displaystyle 1, \displaystyle \dag}$, 
Yusaku Hino$^{\displaystyle 1, \displaystyle \dag}$, 
Takafumi Kanamori$^{\displaystyle 1, 3, \displaystyle \ddag}$}\\
{$^{\displaystyle 1}$Tokyo Institute of Technology, Tokyo, Japan}\\
{$^{\displaystyle 2}$Fujitsu Limited, Kanagawa, Japan}\\
{$^{\displaystyle 3}$RIKEN AIP, Tokyo, Japan}

\renewcommand{\thefootnote}{\alph{footnote}}
\footnotetext{$^\ast$Equally Contributions}
\footnotetext{$^\dag$Part of this work was done while K.~Goto and Y.~Hino were  master course students at Tokyo Institute of Technology.}
\footnotetext{$^\ddag$Corresponding author}

\renewcommand{\thefootnote}{\normalsize \arabic{footnote}}

{\bf Keywords:} Clustering, Deep Learning, Mutual Information, Contrastive Learning, Metric Learning

\thispagestyle{empty}
\markboth{}{NC instructions}
\ \vspace{-0mm}\\
\begin{center} {\bf Abstract} \end{center}
We consider the scenario of deep clustering, in which the available prior knowledge is limited.
In this scenario, few existing state-of-the-art deep clustering methods can perform well for both non-complex topology and complex
topology datasets.
To address the problem,  
we propose a constraint utilizing symmetric InfoNCE, which
helps an objective of deep clustering method in the scenario train the
model so as to be efficient for not only non-complex topology but also complex topology datasets.
Additionally, we provide several theoretical explanations of the reason 
why the constraint can enhances performance of deep clustering methods. 
To confirm the effectiveness of the proposed constraint, we introduce a deep clustering method named MIST, which is a combination of an existing deep clustering method and our constraint. 
Our numerical experiments via MIST demonstrate that the constraint is effective. In addition, MIST outperforms other state-of-the-art deep clustering methods for most of the commonly used ten benchmark datasets.

\section{Introduction}
\label{sec:introduction}
\subsection{Background}
\label{subsec:background}
Clustering is one of the most popular and oldest research fields of machine learning. 
Given unlabeled data points, the goal of clustering is to group them according to some criterion. 
In addition, in most of the cases clustering is performed with unlabeled datasets.
Until today, many clustering algorithms have been proposed~\citep{macqueen1967some,day1969estimating,girolami2002mercer,wang2003kernel,ng2001spectral,ester1996density,sibson1973slink}.
For a given unlabeled  dataset, the number of clusters, and a distance metric, 
K-means~\citep{macqueen1967some} 
aims to partition the dataset into the given number clusters. Especially when the squared Euclidean distance is employed as the metric, it returns convex-shaped clusters within short running time.
GMMC (Gaussian Mixture Model Clustering)~\citep{day1969estimating} assigns labels to estimated clusters after fitting GMM to an unlabeled dataset, where the number of clusters can be automatically determined by Bayesian Information Criterion~\citep{schwarz1978estimating}. 
The kernel K-means~\citep{girolami2002mercer}, kernel GMMC~\citep{wang2003kernel} and SC (Spectral Clustering)~\citep{ng2001spectral} can deal with more complicated shapes compared to K-means and GMMC.
On the other hand, 
as examples categorized into a clustering method that does not require the number of clusters, DBSCAN~\citep{ester1996density} and Hierarchical Clustering~\citep{sibson1973slink} are listed.

Although the above-mentioned classical clustering methods are useful for low-dimensional small datasets, they often fail to  
handle large and high-dimensional datasets (e.g., image/text datasets).  
Due to the development of deep learning techniques for DNNs (Deep Neural Networks), 
we can now handle large datasets with high-dimension~\citep{lecun2015deep}. 
Note that a clustering method using DNNs is refereed to as Deep Clustering method.

\subsection{Our Scenario with Deep Clustering}
\label{subsec:our scenario}
The most popular scenario for deep clustering is the domain-specific scenario (\textsf{Scenario1}) \citep{mukherjee2019clustergan,ji2019invariant,asano2019self,van2020scan,NEURIPS2020_70feb62b,NEURIPS2020_6740526b,monnier2020deep,li2020contrastive,Dang_2021_CVPR}. In this scenario, an unlabeled dataset of a specific domain and its number of clusters are given, while specific rich knowledge in the domain can be available. The dataset is often represented by raw data.
An example of the specific knowledge in the image domain is efficient domain-specific data-augmentation techniques~\citep{ji2019invariant}.
It is also known that CNN (Convolutional Neural Network)~\citep{lecun1989backpropagation} can be an efficient DNN to extract useful features from raw image data.   
In this scenario, most of the authors have proposed \textit{end-to-end} methods, while some have done
\textit{sequential} methods.
In the category of the end-to-end methods, a model is defined by an efficient DNN for a specific domain, where the input and output of the DNN are (raw) data and its predicted cluster label, respectively.
The model is trained under a particular criterion as utilizing the domain-specific knowledge.  
In the category of the sequential methods, the clustering DNN is often constructed in the following three steps: 
1) Create a DNN model that extracts features from data in a specific domain, followed by an MLP (Multi-Layer Perceptron) predicting cluster labels.
2) Train the feature-extracting DNN using an unlabeled dataset and domain-specific knowledge. Then, freeze the set of trainable parameters in the feature extracting DNN. 
3) 
Train the MLP with the features obtained from the feature extracting DNN, and domain-specific knowledge.

The secondly important scenario is called the non-domain-specific  scenario (\textsf{Scenario2}) \citep{springenberg2015unsupervised,xie2016unsupervised,jiang2016variational,ijcai2017-243,hu2017,shaham2018,nazi2019generalized,gupta2019unsupervised,9413131}.
In this scenario, an unlabeled dataset and the number of clusters are given, while a few generic assumptions for a dataset can be available, such as 1) the cluster assumption, 2) the manifold assumption, and 3) the smoothness assumption; see~\citet{books/mit/06/CSZ2006} for details. In addition, unlabeled data is often represented by a feature vector.
As well as \textsf{Scenario1},
many authors have proposed end-to-end methods where an MLP model is trained
by utilizing some generic assumption, while some have done sequential methods.

A scenario apart from \textsf{Scenario1} and \textsf{Scenario2} is reviewed in Appendix~\ref{subappend:Review of Deep Clustering Methods without Number of Clusters}.

In this study, we focus on \textsf{Scenario2} due to the following two reasons.
The first reason is that we do not always encounter \textsf{Scenario1} in practice.
The second one is that if we prepare an efficient deep clustering algorithm of \textsf{Scenario2}, this algorithm can be incorporated into the third step of the sequential method of \textsf{Scenario1}.

\subsection{Motivation behind and Goal}
\label{subsec:motivation}

To understand the pros and cons
of the recent state-of-the-art methods under \textsf{Scenario2}, we conduct preliminary experiments with eight deep clustering methods shown in Table~\ref{tab:types of deep clustering methods}. Among the eight methods, SCAN, IIC, CatGAN, and SELA are originally proposed in \textsf{Scenario1}. Therefore, we redefined the four methods to keep fairness in comparison; see Appendix~\ref{append:Implementation Details with Previous Methods} for details of the alternative definitions that fit to our setting.
For the datasets, we employed ten datasets as shown in the first row of Table~\ref{tb:results of clustering accuracy}. Here, Two-Moons and Two-Rings are synthetic, and the remaining eight are real-world; see more details in Section~\ref{subsec:dataset description and evaluation metric} and Appendix~\ref{append:Detail of Dataset}. 
Throughout our experiments, we have found that the real-world datasets can be clustered well by performing K-means with the Euclidean distance, while the synthetic datasets cannot. We therefore regard the synthetic (resp. real-world) datasets as having  \textit{complex topology} (resp. \textit{non-complex topology}).
Intuitively, a dataset is topologically non-complex if it is 
clustered well by K-means with the Euclidean distance.
Otherwise, the dataset  is thought to have a complex topology. For more details of complex and non-complex topologies, see Appendix~\ref{append: Complex and Non-Complex Topology Datasets}.

\newcommand{\STAB}[1]{\begin{tabular}{@{}c@{}}#1\end{tabular}}

\begin{table}[!t]
    \centering
    \caption{
    Representative deep clustering methods (either sequential or end-to-end). They are categorized into six types; $\mathfrak{T}_{1}$ to $\mathfrak{T}_{6}$. The example methods are shown for each type. Here, Seq. is abbreviation of the word "Sequential".
    }
    \scalebox{0.85}{
    \begin{tabular}{cccc}
         & Type & Explanation & Example(s) \\
         \hline
         \multirow{2}{*}{\STAB{\rotatebox[origin=c]{90}{Seq.}}} & $\mathfrak{T}_{1}$ & Plain embedding based & DEC~\citep{xie2016unsupervised}, SCAN~\citep{van2020scan} \\
         & $\mathfrak{T}_{2}$ & Spectral embedding based &  SpectralNet~\citep{shaham2018} \\
         \hline
         \multirow{4}{*}{\STAB{\rotatebox[origin=c]{90}{End-to-End}}}& $\mathfrak{T}_{3}$ & Variational bound based &  VaDE~\citep{jiang2016variational}\\
         & $\mathfrak{T}_{4}$ & Mutual information based & IMSAT~\citep{hu2017}, IIC~\citep{ji2019invariant}\\
         & $\mathfrak{T}_{5}$ & Generative adversarial net based  & CatGAN~\citep{springenberg2015unsupervised}\\
         & $\mathfrak{T}_{6}$ & Optimal transport based  & SELA~\citep{asano2019self} \\
         \hline
    \end{tabular}
    }
    \label{tab:types of deep clustering methods}
\end{table}

\begin{table*}[!t]
  \centering
   \caption{
   Comparison of classical and deep clustering methods in terms of clustering accuracy (\%
   One and seven trials are respectively conducted for the classical (top 3 methods) and the deep clustering methods, respectively. Mean and standard deviation of their accuracy are reported.
   Symbol "-" means that result was not returned by the clustering algorithm within one hour of running.
   Numbers with $\dagger$  are copied from the corresponding studies. 
   The symbol $\S$ means that an original method is redefined for \textsf{Scenario2}.}
   \label{tb:results of clustering accuracy}
   \scalebox{0.65}{
      \begin{tabular}{cccccccccccc} 
      \toprule
                &    & Two-Moons& Two-Rings  & MNIST        & STL              & CIFAR10          & CIFAR100         & Omniglot        & 20news  & SVHN  & Reuters10K  \\ \midrule%
        \multirow{3}{*}{\STAB{\rotatebox[origin=c]{90}{\footnotesize Classical Methods}}} & K-means     & 75.1     & 50.0        & 53.2   & 85.6& 34.4    & 21.5     & 12.0     & 28.5  & 17.9  &  54.1       \\ %
        & SC          & \textbf{100.0}    & \textbf{100.0}       & 63.7   & 83.1& 36.6    & -        & -        & -    & 27.0  &  43.5          \\ %
        & GMMC        & 85.9     & 50.3        & 37.7   & 83.5& 36.7    & 22.5     & 7.6      &\textbf{39.0}   & 14.2 &  67.7           \\ \midrule%
        \multirow{8}{*}{\STAB{\rotatebox[origin=c]{90}{\footnotesize Representative Existing Deep Methods}}} & DEC         & 70.3(7.1) & 50.7(0.3) &$\dagger$84.3&$\dagger$78.1(0.1)&$\dagger$46.9(0.9)&$\dagger$14.3(0.6)&$\dagger$5.7(0.3)&30.1(2.8)&$\dagger$11.9(0.4) &$\dagger$67.3(0.2) \\ %
        & SpectralNet &\textbf{100(0)} & 99.9(0.0) &$\dagger$82.6(3.0)&90.4(2.1) & 44.3(0.6) & 22.7(0.3) & 2.5(0.1) & 6.3(0.1)  &10.4(0.1) &$\dagger$66.1(1.7)    \\ %
        & VaDE       &50.0(0.0) &50.0(0.0) & 83.0(2.6) & 68.8(12.7) & 39.5(0.7) & 12.13(0.2) & 1.0(0.0) & 12.7(5.1) & 32.9(3.2)  & 70.5(2.5) \\ %
        & IMSAT      &86.3(14.8)&71.3(20.4) &98.4(0.4)&93.8(0.5)&45.0(0.5)&27.2(0.4)&\textbf{24.6(0.7)}&37.4(1.4)& 54.8(5.1) &72.7(4.6) \\ %
        & IIC$\S$        &77.2(18.4)&66.2(21.5)&45.4(8.3)&39.0(8.7)&23.9(4.8)&4.4(0.9)&2.3(0.4)& 14.9(5.3)& 17.1(1.1)& 58.3(2.2)     \\ %
        & CatGAN$\S$      &81.6(5.3) &53.7(2.5) & 15.2(3.5) & 32.9(3.0) & 15.1(2.4) & 5.1(0.5) & 3.3(0.2) & 19.5(6.5) & 20.4(0.9) & 43.6(7.3)         \\ %
        & SELA$\S$      & 62.7(9.5) & 52.6(0.1) & 46.1(3.4) &68.6(0.4) & 29.7(0.1) & 18.8(0.3) & 11.3(0.2) & 20.2(0.1) & 19.3(0.3) & 49.1(1.7)\\ %
        & SCAN$\S$       & 85.7(22.6) & 75.1(23.1) & 82.1(3.7) & 92.8(0.5) & 43.3(0.6) & 24.6(0.1) & 17.6(0.4) & 38.4(1.1) & 23.2(1.6) & 63.4(4.2)\\ \midrule%
        & MIST via $\hat{I}_{\rm nce}$ &  \textbf{100(0)}  & 95.2(2.0)  & 98.0(1.0)  & 94.2(0.4) & 48.9(0.7) & \textbf{27.8(0.5)} & 24.0(1.0) & 38.3(2.8) & 58.7(3.5) & 72.7(3.3) \\
        & {\bf MIST} (ours) & \textbf{100(0)}  & 93.3(16.3) &\textbf{98.6(0.1)}&\textbf{94.5(0.1)}&\textbf{49.8(2.1)}&\textbf{27.8(0.5)}&\textbf{24.6(1.1)}&38.8(2.3)&\textbf{60.4(4.2)}&\textbf{73.4(4.6)} \\ %
      \bottomrule
      \end{tabular}
                  }
\end{table*}

The experimental results are shown in
Table~\ref{tb:results of clustering accuracy}.
As shown in the table, 
for the complex topology datasets, all the eight deep clustering methods of Table~\ref{tab:types of deep clustering methods} except for SpectralNet fail to cluster data points, compared to SC of a classical method. 
Note that, among the eight compared deep clustering methods, ones that incorporate the K-NN  (K-Nearest Neighbor)  graph tend to perform better for the complex topology datasets than ones that do not. Here, the methods incorporating the graph are SpectralNet, IMSAT, IIC, and SCAN.
On the other hand, for the non-complex topology datasets, only IMSAT sufficiently outperforms the three classical methods on average. 
In addition, the average clustering performance of IMSAT over the ten datasets is the best among the eight methods.
As Table~\ref{tb:results of clustering accuracy} suggests, to the best of our knowledge, almost none of previous deep clustering methods sufficiently perform well for both non-complex and complex datasets. Those results motivate our study.

Our aim is to propose a constraint that helps an objective of deep clustering method in \textsf{Scenario2}  train the model so as to be efficient for not only non-complex topology but also complex topology datasets. Such a versatile deep clustering objective can be helpful for users.

\subsection{Contributions}
\label{subsec:constributions}
To achieve our goal, we propose the constraint for topological invariance. 
For two data points close to each other, 
the corresponding class probabilities computed by an MLP should be close. 
For example, in b) of Figure~\ref{fig:weakness of imsat and spectralnet}, any pair of two red points is  close, while any pair of a red and a blue point are apart from each other. This constraint is introduced as a  regularization based on the maximization of  {\it symmetric InfoNCE} between the two probability vectors; see Section~\ref{subsec:symmetric infonce}. In order to define the two probability vectors,
    we introduce two kinds of \emph{paired data}. 
    One is used for a non-complex topology dataset, which is based on a K-NN graph with the Euclidean metric; see Definition~\ref{def:process T_e}.  The other is used for a complex topology dataset, and it is based on a K-NN graph with the geodesic metric on the graph; see Definition~\ref{def:process T_g}. 
    Both graphs are defined only with an unlabeled dataset. 
    The geodesic metric is defined by the graph-shortest-path distance on the K-NN graph constructed with the Euclidean distance. 

We emphasize that under \textsf{Scenario2}, it is impossible to incorporate powerful domain-specific knowledge into a deep clustering method. In addition,  the maximization of the symmetric InfoNCE has not been studied yet in the context of deep clustering. 
Moreover, we present the legitimacy of our topological invariant constraint by showing several theoretical findings from mainly two perspectives: 1) in Section~\ref{subsubsubsec: analysis from viewpoints of mi}, the motivations and the potential of the proposed constraint are clarified from the standpoint of statistical dependency measured by MI (Mutual Information), and 2) in Section~\ref{subsubsubsec:futher motivations}, an extended result of the theory on contrastive representation learning derived from \citet{wang2022chaos} is presented to discuss 
advantages of the symmetric InfoNCE over InfoNCE for deep clustering.
Note that InfoNCE~\citep{oord2018representation} was ininitially used for domain-specific representation learning such as vision tasks, NLP, and reinforcement learning.  
Here, a purpose of representation learning is to extract useful features that can be applied to a variety range of machine learning methods~\citep{bengio2013representation}. 
Whereas, the purpose of clustering is to annotate the cluster labels to unlabeled data points.

\begin{figure}[!t]
\centering
\def\svgwidth{\linewidth}
\includegraphics[width=\linewidth]{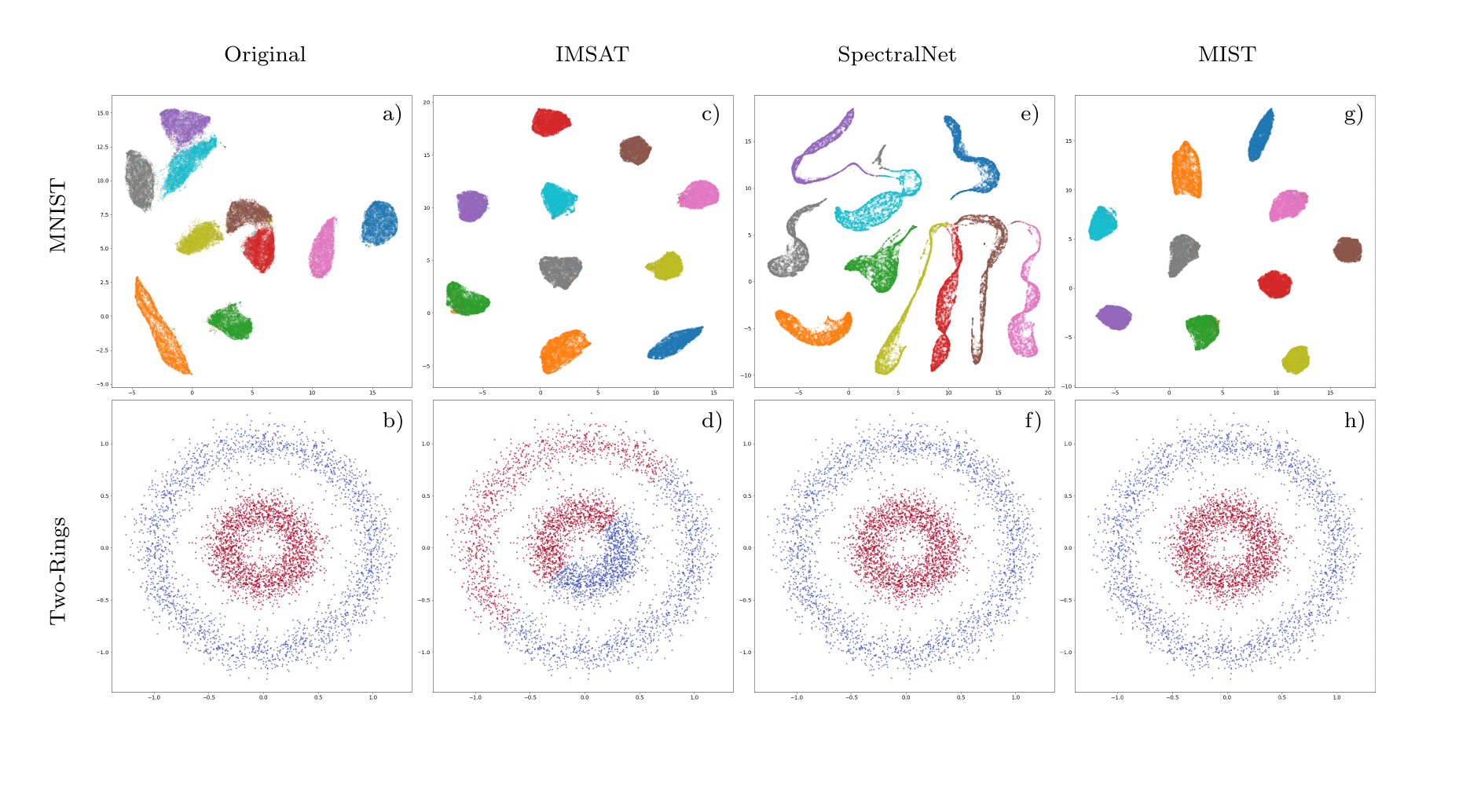}
\caption{
Two-dimensional visualizations of clustering results.  
In the first row except for a), three visualizations are obtained via the following procedure: using the trained MLP, compute the output for each feature in MNIST, where dimension of the feature is 784. Then, the outputs are transformed into two-dimensional vectors by UMAP. Thereafter, true labels are assigned to those vectors. As for a), the original features are directly transformed into two-dimensional vectors by UMAP, and then the labels are assigned to the transformed vectors. For the second row, the true labels (resp. predicted cluster labels by the trained MLP) are assigned to the original features for obtaining b) (resp. d), f), and h)). Note that the original features of Two-Rings belong to $\mathbb{R}^2$.
}\label{fig:weakness of imsat and spectralnet}
\end{figure}

The main contributions are summarized as follows:
\begin{enumerate}
    \item[1)] 
    We propose a topological invariant constraint via the symmetric InfoNCE for the purpose of deep clustering in \textsf{Scenario2}, and then show the advantage by providing analysis from several theoretical aspects.

    \item[2)] 
    To evaluate the proposed constraint in numerical experiments, by applying the  constraint to IMSAT, we define a deep clustering method named {\it MIST
    (Mutual Information maximization with local Smoothness and Topologically invariant constraints)}. In the experiments, we confirm that the proposed constraint enhances the accuracy of a deep clustering method.
    Furthermore, to the best of our knowledge,  MIST achieves state‐of‐the‐art clustering accuracy in \textsf{Scenario2} for not only non-complex topology datasets but also complex topology datasets. 
    
\end{enumerate}
In Figure~\ref{fig:weakness of imsat and spectralnet}, a positive impact of the topological invariant constraint toward IMSAT is visualized via UMAP~\citep{mcinnes2018umap}; compare d) and h) in the figure. See further details of Figure~\ref{fig:weakness of imsat and spectralnet} and more two-dimensional visualizations of MIST for other datasets in Appendix~\ref{append:Visualization Details in Fig of imsat and spectralnet weakness}.

This paper is organized as follows. 
In Section~\ref{sec:related-works}, we overview related works.
In Section~\ref{sec:proposed-constraint}, 
we explain details of the topological invariant constraint, and then show the theoretical properties. 
In numerical experiments of Section~\ref{sec:numerical-experiments}, we define MIST. Then, we evaluate the proposed constraint via MIST using two synthetic datasets and eight real-world datasets. 
In the same section, some case studies are also provided. 
In Section~\ref{sec:conclusion-and-future-works}, we conclude this paper.

\section{Related Works}
\label{sec:related-works}

In Section~\ref{subsec: representatitve deep methods in scenario2}, we briefly explain representative deep clustering methods shown as examples in Table~\ref{tab:types of deep clustering methods} since they are compared methods in numerical experiments of Section~\ref{sec:numerical-experiments}. Then, details of InfoNCE, which is closely related to our topological invariant constraint, is introduced in Section~\ref{subsec:infonce}. 
\subsection{Representative Deep Clustering Methods}
\label{subsec: representatitve deep methods in scenario2}

Let us start from  sequential methods of $\mathfrak{T}_1$ and $\mathfrak{T}_2$ in Table~\ref{tab:types of deep clustering methods}. In DEC~\citep{xie2016unsupervised} of $\mathfrak{T}_1$, at first, a stacked denoising Auto-Encoder (AE) is trained with a set of unlabeled data points to extract the feature. Using the trained encoder, we can have the feature vectors. Then, K-means is used on the vectors in order to obtain the set of centroids. After that, being assisted by the centroids, the encoder is refined for the clustering. In SCAN~\citep{van2020scan} of $\mathfrak{T}_1$, a ResNet~\citep{he2016deep} is trained using augmented raw image datasets under SimCLR~\citep{chen2020simple} criterion to extract the features. Then, the clustering MLP added to the trained ResNet is tuned by maximizing Shannon-entropy of the cluster label while two data points in a nearest neighbor relationship are forced to have same cluster label. Then, in SpectralNet~\citep{shaham2018} of $\mathfrak{T}_2$, at first, a Siamese network is trained by the predefined similarity scores on the K-NN  graph. Then, being assisted by the trained Siamese network, a clustering DNN is trained. Note that the two networks (i.e., Siamese net and clustering net) are categorized into this method. 

With regard to end-to-end methods of $\mathfrak{T}_3$ to $\mathfrak{T}_6$, in VaDE~\citep{jiang2016variational} of $\mathfrak{T}_3$, a variational AE is trained so that the latent representation of unlabeled data points has the Gaussian mixture distribution. Here, the number of mixture components is equal to the number of clusters.
For IMSAT and IIC of $\mathfrak{T}_4$, in IMSAT~\citep{hu2017}, the clustering model is trained via maximization of the MI between a data point and the cluster label, while regularizing the model to be locally smooth; see Appendix~\ref{subsec:imsat}. 
Likewise, IIC~\citep{ji2019invariant} returns the estimated cluster labels using the trained model for clustering. The training criterion is based on maximization of the MI between the cluster label of a raw image and the cluster label of the transformed raw image; see Appendix~\ref{subsec:Invariant Information Clustering}. IIC employs a CNN-based clustering model to take advantages of image-specific prior knowledge. Furthermore, in CatGAN~\citep{springenberg2015unsupervised} of $\mathfrak{T}_5$, the neural network for clustering is trained to be robust against noisy data. Here, the noisy data is defined as a set of fake data points obtained from the generator that is trained to mimic the distribution of original data. 
Lastly, in SELA~\citep{asano2019self} of $\mathfrak{T}_6$, a ResNet is trained for clustering using an augmented unlabeled dataset with pseudo labels under the cross-entropy minimization criterion. The pseudo labels are updated at the end of every epoch by solving an optimal transporting problem.

\subsection{Info Noise Contrastive Estimation }
\label{subsec:infonce}
In representation learning, InfoNCE (Info Noise Contrastive Estimation) based on NCE has recently become a popular objective. 
The ($q$-)InfoNCE of the random variables $Z$ and $Z'$ is defined by 
\begin{align}
\label{eq:infonce}
 I_{\mathrm{nce},q}(Z;Z') = \mathbb{E}[q(Z,Z')]-\mathbb{E}_Z\big[ \log\mathbb{E}_{Z'}[e^{q(Z,Z')}] \big], 
\end{align}
where $q$ is called \textit{critic function} that quantifies the dependency between $Z$ and $Z'$. 
For any critic, $q$-InfoNCE provides a lower bound of an MI. 
Furthermore, we can see that the maximum value of $I_{\mathrm{nce},q}$ is the MI, which is attained by $q(z,z')= \log p(z|z') + [\text{any function of $z$}]$;
see~\citet{pmlr-v97-poole19a,pmlr-v80-belghazi18a} and Eq.\eqref{eq:optimal critic for infonce} for details. 
Here, $p(z|z')$ is the conditional probability of $z$ given $z'$.
When it comes to the image processing~\citep{chen2020simple,grill2020bootstrap}, 
the observations, $z$ and $z'$, are often given as different views or augmentations of an image. 
For example, $z$ and $z'$ are observed by rotating, cropping, or saturating the same source image. 
Such a pair of images are regarded as positive samples (pair).
A pair of transformed images coming from different source images are negative samples (pair).

Suppose we have samples $z_1,\cdots,z_m$ and $z_1',\cdots,z_m'$,  such that $z_i$ and $z_i'$ are all positive samples for $i=1,\cdots,m$
and $z_i$ and $z_j'$ for $i\neq j$ are negative samples. 
Then, InfoNCE is empirically approximated by 
\begin{align}
\label{eq:estimated infonce}
 \hat{I}_{\text{nce},q}
 =
 \frac{1}{m}\sum_{i=1}^{m}
 \log\frac{e^{q\left(z_i,z'_{i}\right)}}{\frac{1}{m}\sum_{j=1}^{m}e^{q\left(z_i,z_{j}'\right)}}. 
\end{align}
In order to approximate the MI by InfoNCE, one can use a parameterized model with a critic function $q$. 
In the original work of InfoNCE~\citep{oord2018representation} 
the critic $q_W(z,z')=z^{T}Wz'$ with the weight matrix $W$ is employed.
Then, the maximum value of $\hat{I}_{\text{nce},q_W}$ w.r.t. $W$ is computed to estimate the MI. 
As pointed out by~\citet{oord2018representation,pmlr-v97-poole19a,tschannen2019mutual},
The empirical InfoNCE is bounded above by $\log m$, making the bound loose when $m$ is small, or the MI $I(Z;Z')$ is large.

\section{Proposed Constraint and its Theoretical Analysis}
\label{sec:proposed-constraint}

\subsection{Notations}
\label{subsec:Notations}
In the following, $\mathcal{D} = \{x_i\}_{i=1}^n$ ($\forall i; x_i \in \mathbb{R}^d$)
is a set of unlabeled data, where 
$n$ is the number of data points and $d$ is the dimension of a data point. 
The number of clusters is denoted by $C$. Here, let $y_i$ denote the true label of $x_i$.
Let us define $p\left(y|x\right)$ as the conditional discrete probability of a cluster label $y\in\{1,2,...,C\}$ for a data point $x\in\mathbb{R}^d$.
The random variable corresponding to $x$ (resp. $y$) is denoted by $X$ (resp. $Y$). 
Let $\Delta^{C-1} = \{z \in \mathbb{R}^{C}\;|\; z \geq 0, z^\top \mathbf{1}=1\}$ be the $(C-1)$-dimensional probability simplex,
where $\mathbf{1}$ is the $C$-dimensional vector $(1,1,...,1)^\top$.

\begin{dfn}[MLP model $g_{\theta}$]
\label{def:g_theta}
Consider a DNN model $g_{\theta}(x): \mathbb{R}^{d} \to \Delta^{C-1}$ with trainable set of parameters $\theta$, %
where the activation for the last layer is defined by the $C$-dimensional softmax function. The $y$-th element of $g_{\theta}(x)$ is denoted by $g_{\theta}^y(x)$. %
Let $\theta^\ast$ denote the trained set of parameters via a clustering objective, using an unlabeled dataset $\mathcal{D}$.
The predicted cluster label of $x_i \in \mathcal{D}$ is defined by $\hat{y}_i = {\rm argmax}_{y\in\{1,\cdots,C\}}g_{\theta^\ast}^y(x_i)$. 
\end{dfn}

\subsection{Preliminary}
\label{subsec:Preliminary}
Consider \textsf{Scenario2} of Section~\ref{subsec:our scenario}, where a set of unlabeled data 
$\mathcal{D} = \{x_i\}_{i=1}^n, x_i \in \mathbb{R}^d$ and the number of clusters $C$ are given, while a few
generic assumptions for the dataset can be available. We firstly in Section~\ref{subsec:symmetric infonce} introduce the topological invariant constraint based on symmetric InfoNCE and an MLP model $g_\theta$. %
Then, in Section~\ref{subsubsec:theoretical properties}, some relations between the symmetric InfoNCE and the corresponding MI are theoretically analyzed. Thereafter, based on the analysis, we explain theoretical advantages of the symmetric InfoNCE over existing popular constraints such as IIC and InfoNCE in terms of deep clustering.

Before stating the mathematical definitions and the properties, we briefly explain why the symmetric InfoNCE can enhance a deep clustering method as a topological invariant constraint.
As mentioned in  Section~\ref{subsec:constributions}, 
the topological invariant constraint is expected to regularize $g_\theta$
so as to be
$g_\theta (X) \approx g_\theta (X^\prime) \in \Delta^{C-1}$ for any geodesically-close two data points $X, X^\prime \in \mathcal{D}$ in the original space $\mathbb{R}^d$.
In other words,  predicted cluster labels of $X$ and $X^\prime$ are enforced to be same.
For the regularization,
InfoNCE and its variants %
are potentially useful. The reason is that in representation learning InfoNCE is empirically successful for making the following two feature vectors close to each other: 1) a feature vector returned by a DNN with a raw data as an input, 2) a feature returned by the same DNN with an augmented data from the raw
data~\citep{oord2018representation,chen2020simple}. Note that feature vectors described in 1) and 2) are  not in $\Delta^{C-1}$
but usually in
the high-dimensional Euclidean space.
In this study,  the symmetric InfoNCE between $X$ and $X'$ is proposed as a constraint for topological invariance. 
The pair $(X, X^\prime)$ is given by $(X, T(X))$, where 
$T(X)$ is a transformation of $X$: some practical tranformations are introduced in Definition~\ref{def:process T_e} and~\ref{def:process T_g} of Section~\ref{subsec:symmetric infonce}.

\subsection{Topological Invariant Constraint}
\label{subsec:symmetric infonce}
We aim to design a constraint for topological invariance  that should satisfy the following condition; 
if clusters of $\mathcal{D}$ have a non-complex (resp. complex) topology, the constraint assists a model $g_\theta$ to predict the same cluster labels for $x \in \mathcal{D}$ and $x^\prime \in \mathcal{D}$ whenever $x$ and $x'$ are close to each other in terms of the Euclidean (resp. geodesic)  distance.
In the sequel, we define the constraint via symmetric InfoNCE. Then, we investigate its theoretical properties.

Firstly, let us define a function $q: \Delta^{C-1} \times \Delta^{C-1} \to \mathbb{R}$ as follows:
\begin{align}
\label{eqn:alpha-tau-model}
    q(z, z^\prime) = 
    \log\left(\exp_{\alpha}\big(\tau(z^\top z'-1)\big)\right), 
\end{align}
where $\alpha \in \mathbb{R}$ and $0 \leq \tau \leq |1-\alpha|^{-1}$. In addition, for $u \in \mathbb{R}$,
$\exp_{\alpha}(u)$ is defined by 
$[1+(1-\alpha) u]_{+}^{1 /(1-\alpha)}$ for $\alpha\neq 1$
and $e^u$ for $\alpha=1$, where $[\;\cdot\;]_{+} = \max\{\cdot,0\}$. The function $q$ of Eq.\eqref{eqn:alpha-tau-model} w.r.t. $z$ and $z'$ is maximized if and only if $z$ and $z'$ are the same one-hot vector. On the other hand, it is minimized if and only if $z^\top z' = 0$ (i.e., $z$ and $z'$ are orthogonal to each other).

We then define transformation $T$  for constructing a pair of geodesically-close two data points $(X, T(X))$ based on $X$ as follows. 
\begin{dfn}[Transformation $T$]
\label{def:random transformation function}
Let $X$ be a $d$-dimensional random variable. Then, $T:\mathbb{R}^d\to\mathbb{R}^d$ denote the transformation of $X$, and it is also a random variable. The realization is denoted by $t:\mathbb{R}^d \to \mathbb{R}^d$. Given a data point $x$, the function $t$ is sampled from the conditional probability $p(t|x)$. 
\end{dfn}

The probability $p(t|x)$ is defined through a generative process. In this study, two processes, $\mathcal{T}_{\mathfrak{e}}$ and $\mathcal{T}_{\mathfrak{g}}$, are considered. 
The first (resp. second) one, $\mathcal{T}_{\mathfrak{e}}$ (resp. $\mathcal{T}_{\mathfrak{g}}$), is defined using the K-NN graph with the Euclidean (resp. geodesic) distance, and is employed for non-complex (resp. complex) topology datasets.

\begin{dfn}[Generative process $\mathcal{T}_{\mathfrak{e}}$]
\label{def:process T_e}
Given an unlabeled dataset $\mathcal{D}=\{x_i\}_{i=1}^n$, a natural number $K_0$, and $\beta \in [0,1)$ as inputs, 
then the generative process of a transformation is defined as follows. 1) At the beginning, 
build a K-NN graph with $K=K_0$ on $\mathcal{D}$ based on the Euclidean distance. 2) For all $k=1+\lfloor\beta K_0\rfloor,\cdots,K_0$, 
define a function $t^{(i\to k)}:\mathbb{R}^d \to \mathbb{R}^d$ by $x^{(k)}_i = t^{(i\to k)}(x_i)$, where $x^{(k)}_i$ is a $k$-th nearest neighbor data points of $x_i$ on the graph. 3) Define the conditional distribution $p(t|x_i)$ 
as the uniform distribution on $t^{(i\to k)}, 
k=1+\lfloor\beta K_0\rfloor,\cdots,K_0$.
\end{dfn}

\begin{dfn}[Generative process $\mathcal{T}_{\mathfrak{g}}$]
\label{def:process T_g}
Given an unlabeled dataset $\mathcal{D}=\{x_i\}_{i=1}^n$, a natural number $K_0$, and $\beta \in [0,1)$ as inputs, then firstly build a K-NN graph based on the Euclidean distance with $K=K_0$ on $\mathcal{D}$. 
Then, in order to approximate the geodesic distance 
between $x_i$ and $x_j$, compute the graph-shortest-path distance. 
Let $\mathfrak{g}_{ij}$ be the approximated geodesic distance between $x_i$ and $x_j$, 
and $\mathfrak{G}$ be an $n\times n$ matrix 
$(\mathfrak{g}_{ij})_{i,j=1,\cdots,n}$. 
For each $i$, let $\mathcal{M}_{i} = \{j\;|\;0<\mathfrak{g}_{ij}<\infty\}$ be 
the set of indices whree each $x_{j}$ is a neighborhood of $x_{i}$ under the geodesic distance.
For each $i$, the generative process of the transformation $t$
is given as follows. 1) For all $k=|\mathcal{M}_{i}|-\lfloor\beta K_{\mathfrak{g}}\rfloor+1,\cdots,|\mathcal{M}_{i}|$,
define a function $t^{(i\to k)}:\mathbb{R}^d \to \mathbb{R}^d$ by $x^{(k)}_i = t^{(i\to k)}(x_i)$, 
where $x^{(k)}_i$ is a $k$-th geodesically nearest neighbor data points from $x_i$ on $\mathfrak{G}$ except in the case of $\mathfrak{g}_{ij}=\infty$ and $\mathfrak{g}_{ij}=0$.
2) Define the conditional distribution $p(t|x_i)$ 
as the uniform distribution on $t^{(i\to k)}, 
k=|\mathcal{M}_{i}|-\lfloor\beta K_{\mathfrak{g}}\rfloor+1,\cdots,|\mathcal{M}_{i}|$.

\end{dfn}
The time and memory complexities with $\mathcal{T}_{\mathfrak{e}}$ and $\mathcal{T}_{\mathfrak{g}}$ are provided in Appendix~\ref{append:time and memory complex with t_e and t_g}.
Intuitively, when $\beta=0$, $\mathcal{T}_{\mathfrak{e}}$ picks a random neighbor of $x$ as $T(x)$ in the $K_0$-nearest neighbor graph,
while $\mathcal{T}_{\mathfrak{g}}$ picks the a random neighbor by the geodesic metric induced by the $K_0$-nearest neighbor graph.
Larger $\beta$ disables $\mathcal{T}_{\mathfrak{e}}$ and $\mathcal{T}_{\mathfrak{g}}$ from picking closest neighbors.

Using the function $q$ of Eq.\eqref{eqn:alpha-tau-model}, let us define $I_{\rm nce} \equiv I_{\mathrm{nce},q}(g_{\theta}(X);g_{\theta}(T(X)))$ and $I'_{\rm nce} \equiv I_{\mathrm{nce},q}(g_{\theta}(T(X));g_{\theta}(X))$; see $I_{{\rm nce}, q}$ in Eq.\eqref{eq:infonce}. 
We then define the symmetric InfoNCE by $\left(I_{\mathrm{nce}} + I'_{\mathrm{nce}}\right) / 2$. Then, the topological invariant constraint is defined as follows:
\begin{align}
    \label{eq:topological invarinat constraint}
    -M  \leq -\frac{I_{\mathrm{nce}} + I'_{\mathrm{nce}}}{2} \leq -M + \delta,
\end{align}
where $\delta$ is a small fixed positive value, and $M = \sup_{\theta} \left(I_{\mathrm{nce}} + I'_{\mathrm{nce}}\right) / 2$. 

In practice, given a mini-batch $\mathcal{B} \subseteq \mathcal{D}$, 
we can approximate $-\left(I_{\mathrm{nce}} + I'_{\mathrm{nce}}\right) / 2$ by $-(\hat{I}_{\text{nce}} + \hat{I}'_{\text{nce}}) / 2$ (recall Eq.\eqref{eq:estimated infonce} for $\hat{I}_{\rm nce}$), where  $-\hat{I}_{\text{nce}}$ is given by 
\begin{align}
\label{eq:empirical i_nce with T_s}
    -\hat{I}_{\text{nce}} = -\frac{1}{|\mathcal{B}|}\sum_{x_i \in \mathcal{B}}\log\frac{e^{q\left(g_{\theta}(x_i), g_{\theta}\left(t_i(x_i)\right)\right)}}{\frac{1}{|\mathcal{B}|}\sum_{x_j \in \mathcal{B}}e^{q\left(g_{\theta}(x_i), g_{\theta}\left(t_j(x_j)\right)\right)}}
\end{align}
with the sampled transformation function 
$t_i$ from $p(t|x_i)$
and $-\hat{I}'_{\text{nce}}$ is 
given by switching two inputs in the function $q$ of Eq.\eqref{eq:empirical i_nce with T_s}. Here, 
$|\mathcal{B}|$ denotes the cardinality of $\mathcal{B}$.

\begin{figure}[!t]
\centering
\includegraphics{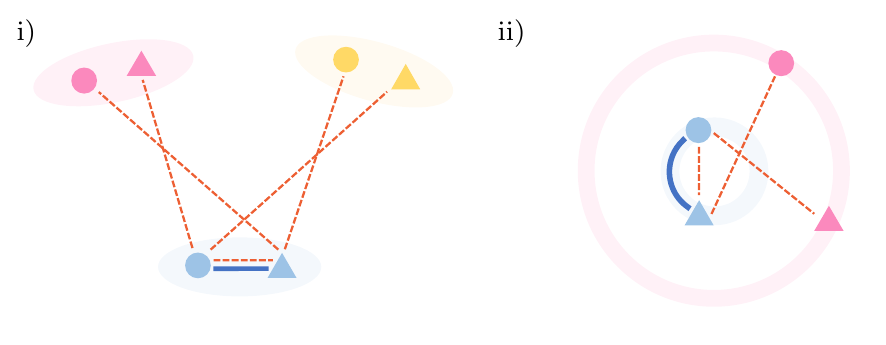}
\caption{
Illustration of the effect by minimizing point-wise positive loss $\ell_{\rm ps}(x_i)$ and point-wise negative loss $\ell_{\rm ng}(x_i)$.
In both i) and ii), the colors (blue, magenta, and yellow) mean different labels, and the light-colored manifolds express true clusters. For i) (resp. ii)), the set of clusters builds Three-Blobs (resp. Two-Rings), and it is an example of non-complex (resp. complex) topology datasets. A pair of the small circle and triangle symbols with the same color means a pair of a data point $x$ and the transformed data point $t(x)$, where such pair is constructed by $\mathcal{T}_{\mathfrak{e}}$ of Definition~\ref{def:process T_e} (resp. $\mathcal{T}_{\mathfrak{g}}$ of Definition~\ref{def:process T_g}) in i) (resp. ii)).
The two data points connected by the red dash line (resp. blue straight or curved line) are enforced to be distant (resp. close) to each other by minimizing $\ell_{\rm ng}(x_i)$ (resp. $\ell_{\rm ps}(x_i)$).
}\label{fig:symmetric InfoNCE}
\end{figure}

To understand the above empirical symmetric InfoNCE more, 
we decompose 
$-(\hat{I}_{\text{nce}} + \hat{I}'_{\text{nce}}) / 2$
into the following three terms:
\begin{equation}
\label{eq:metric learning decomposition}
    \begin{split}
        -\log|\mathcal{B}| \underbrace{-\frac{1}{|\mathcal{B}|}\sum_{x_i \in \mathcal{B}}q\left(g_{\theta}(x_i), g_{\theta}\left(t_i(x_i)\right)\right)}_\text{$L_{\rm ps}$: positive loss} 
        \underbrace{+\frac{1}{|\mathcal{B}|}\sum_{x_i \in \mathcal{B}}\frac{1}{2}\log\left(\sum_{x_j \in \mathcal{B}}\sum_{x_{i'} \in \mathcal{B}}e^{a\left(i,i',j\right)}\right)}_\text{$L_{\rm ng}$: negative loss}, 
    \end{split}
\end{equation}
where $a\left(i,i',j\right)=q\left(g_{\theta}(x_i), g_{\theta}\left(t_j(x_j)\right)\right) + q\left(g_{\theta}(x_{i'}), g_{\theta}\left(t_i(x_i)\right)\right)$. Note that the decomposition is based on the fact that $q(z,z') = q(z',z)$ for all $z, z' \in \Delta^{C-1}$.
In Eq.\eqref{eq:metric learning decomposition}, we call the second and the third term \emph{positive loss} and 
\emph{negative loss}, respectively. 
These names are natural in the sense of metric learning~\citep{sohn2016improved}.
Indeed, 
making $L_{\rm ps}$ smaller w.r.t. $\theta$ leads to 
$g_{\theta}(x_i)\approx g_{\theta}\left(t_i(x_i)\right)$ for all $i$ due to the definition of $q$. %
Thus, since $t_i(x_i)$ is a neighbor data point of $x_i$, 
via minimization of $L_{\rm ps}$, the model predicts
the same cluster labels for $x_i$ and $t_i(x_i)$.
Here, the neighborhood is defined with the Euclidean (resp. the geodesical) neighborhood on K-NN graph of $\mathcal{D}$ through $\mathcal{T}_{\mathfrak{e}}$ (resp. $\mathcal{T}_{\mathfrak{g}}$). 
On the other hand, making $L_{\rm ng}$ smaller leads to $g_{\theta}(x_i) \neq g_{\theta}\left(t_j(x_j)\right)$ for all $i,j$ and $g_{\theta}(x_{i'}) \neq g_{\theta}\left(t_i(x_i)\right)$ for all $i'$ and $i$ due to the definition of $q$. %
Thus, 
via minimization of $L_{\rm ng}$, the model can return non-degenerate clusters (i.e., not all the predicted cluster labels are the same).

In Figure~\ref{fig:symmetric InfoNCE}, for simplicity, we illustrate effects brought by minimizing point-wise positive loss $\ell_{\rm ps}(x_i)$ and point-wise negative loss $\ell_{\rm ng}(x_i)$, which are defined in Eq.\eqref{eq:metric learning decomposition} as follows;
$
    \ell_{\rm ps}(x_i) = -q\left(g_{\theta}(x_i), g_{\theta}\left(t_i(x_i)\right)\right),\;\ell_{\rm ng}(x_i)=\frac{1}{2}\log\left(\sum_{x_j \in \mathcal{B}}\sum_{x_{i'} \in \mathcal{B}}e^{a\left(i,i',j\right)}\right).
$

\subsection{Theoretical Analysis}
\label{subsubsec:theoretical properties}
In this section, we investigate theoretical properties of the symmetric InfoNCE loss. 
In Section~\ref{subsubsubsec: analysis from viewpoints of mi}, we study the relationship between MI and symmetric InfoNCE. %
In Section~\ref{subsubsubsec:futher motivations}, we show a theoretical difference between InfoNCE and symmetric InfoNCE.

\subsubsection{Relationship between Symmetric InfoNCE and MI}
\label{subsubsubsec: analysis from viewpoints of mi}
First we make clear the reason for selecting Eq.\eqref{eqn:alpha-tau-model} as a critic function.
Our explanation begins by deriving the optimal critic of the symmetric InfoNCE loss.
\begin{prop}
\label{prop: relation between mi and symmetric infonce and optimal critic}
Let $Z$ and $Z'$ denote two random variables having the joint probability density $p$. Let $I_{\mathrm{nce},q}(Z;Z')$ the InfoNCE loss defined in Eq.\eqref{eq:infonce}. Let us define $I_{\mathrm{nce},q}(Z';Z)$ by switching $Z$ and $Z'$
of $I_{\mathrm{nce},q}(Z;Z')$. Then, the following MI, 
$I(Z;Z') := \mathbb{E}_{p(Z,Z')}\left[\log \frac{p(Z,Z')}{p(Z)p(Z')}\right],$
is an upper bound of the symmetric InfoNCE,  $\frac{I_{\mathrm{nce},q}(Z;Z') + I_{\mathrm{nce},q}(Z';Z)}{2}$. 
Moreover, if the function $q$ satisfies
\begin{equation}
\label{eq:optimal critic with symmetric infonce}
    q(z,z')=\log{\frac{p(z,z')}{p(z)p(z')}}+c,\,c\in\mathbb{R},
\end{equation}
then the equality
$I(Z;Z') = 
\frac{I_{\mathrm{nce},q}(Z;Z') + I_{\mathrm{nce},q}(Z';Z)}{2}$ holds. In other words, $q$ satisfying Eq.\eqref{eq:optimal critic with symmetric infonce} is the optimal critic.
\end{prop}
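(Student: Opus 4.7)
The plan is to handle the two claims of Proposition~\ref{prop: relation between mi and symmetric infonce and optimal critic} separately. First I would establish the inequality $I(Z;Z') \geq \tfrac{1}{2}(I_{\mathrm{nce},q}(Z;Z') + I_{\mathrm{nce},q}(Z';Z))$ for an arbitrary critic $q$ by applying the known single-sided InfoNCE lower bound twice. Then I would directly substitute the proposed critic into each InfoNCE term and verify that both equal $I(Z;Z')$.

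For the upper bound, the standard one-sided InfoNCE bound, recalled in Section~\ref{subsec:infonce} and due to \citet{pmlr-v97-poole19a}, gives $I(Z;Z') \geq I_{\mathrm{nce},q}(Z;Z')$. I would re-derive this briefly from the non-negativity of the KL divergence $\mathrm{KL}\!\left(p(z'|z)\,\|\,\tilde p_q(z'|z)\right)\geq 0$, where the reference distribution is $\tilde p_q(z'|z) = p(z')e^{q(z,z')}/\mathbb{E}_{Z'\sim p(z')}[e^{q(z,Z')}]$. Expanding this KL and taking expectation over $Z\sim p(z)$ yields the bound. Since the roles of $Z$ and $Z'$ in $I_{\mathrm{nce},q}$ are asymmetric but $I(Z;Z')=I(Z';Z)$, exchanging the roles and repeating the argument gives $I(Z;Z')\geq I_{\mathrm{nce},q}(Z';Z)$. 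Averaging the two inequalities yields the claimed upper bound.

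For the equality claim, I would substitute $q(z,z') = \log\frac{p(z,z')}{p(z)p(z')}+c$ into $I_{\mathrm{nce},q}(Z;Z')$ and compute term by term. The first term gives $\mathbb{E}_{p(Z,Z')}[q(Z,Z')] = I(Z;Z') + c$. For the second, the inner expectation simplifies via
\begin{align*}
    \mathbb{E}_{Z'\sim p(z')}\!\big[e^{q(z,Z')}\big]
    = e^{c}\!\int\! p(z')\cdot\frac{p(z,z')}{p(z)p(z')}\,dz'
    = \frac{e^{c}}{p(z)}\!\int\! p(z,z')\,dz' = e^{c},
\end{align*}
so $\mathbb{E}_{Z}[\log \mathbb{E}_{Z'}[e^{q(Z,Z')}]] = c$, and combining gives $I_{\mathrm{nce},q}(Z;Z')= I(Z;Z')$. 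Because the proposed critic is symmetric in $(z,z')$ (the log-density ratio is symmetric and $c$ is a constant), the identical calculation with $Z$ and $Z'$ interchanged gives $I_{\mathrm{nce},q}(Z';Z) = I(Z;Z')$, so the average also equals $I(Z;Z')$ and saturates the upper bound.

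The only conceptual subtlety — rather than any real technical obstacle — is the insistence in Eq.~\eqref{eq:optimal critic with symmetric infonce} that the additive shift be a genuine constant, not a function of $z$ alone or $z'$ alone. The single-sided bound already saturates under any critic of the form $\log\frac{p(z,z')}{p(z)p(z')}+h(z)$, but a non-constant $h(z)$ breaks equality for $I_{\mathrm{nce},q}(Z';Z)$; only a constant shift simultaneously preserves equality in both orderings, which is precisely what the symmetric formulation demands. Apart from this remark, the proof reduces to one KL-divergence inequality and one direct substitution.
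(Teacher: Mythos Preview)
Your proposal is correct and follows essentially the same route as the paper: the upper bound is obtained exactly as you describe (apply the single-sided InfoNCE bound twice and average, using $I(Z;Z')=I(Z';Z)$), and the equality claim is handled by appealing to the equality case of that same KL-based argument. The only cosmetic difference is that the paper verifies optimality by invoking the general equality condition $q(z,z')=\log p(z'|z)+[\text{function of }z]$ for the one-sided bound and then arguing that simultaneous equality in both orderings forces the additive term to be a constant, whereas you verify equality by direct substitution of the proposed critic; your closing remark about why a genuine constant (rather than $h(z)$) is required is exactly the paper's observation, so the two presentations coincide in content.
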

The proof is shown in Appendix~\ref{append: proof of relation between mi and symmetric infonce and optimal critic}.

Consider $g_\theta(X)$ and $g_\theta(T(X))$ as $Z$ and $Z'$ of Proposition~\ref{prop: relation between mi and symmetric infonce and optimal critic}, respectively. Then, the symmetric InfoNCE  $\left(I_{\mathrm{nce}} + I'_{\mathrm{nce}}\right) / 2$ of Eq.\eqref{eq:topological invarinat constraint} can be upper-bounded by 
\begin{equation}
\label{eq:upper-bound of proposed symmetric infonce}
    I(g_{\theta}(X);g_{\theta}(T(X))).
\end{equation}
Thus, maximization of the symmetric InfoNCE (i.e., the constraint of Eq.\eqref{eq:topological invarinat constraint}) is a reasonable approach to maximize the MI. 
Note that the computation of $I(g_{\theta}(X);g_{\theta}(T(X)))$ is difficult, since density-estimation on $\Delta^{C-1}$ is required.

It is interesting that the optimal critic of the symmetric InfoNCE loss is the pointwise MI of $g_{\theta}(X)$ and $g_{\theta}(T(X))$ up to an additive constant.
Moreover, we remark that the function $q$ of Eq.\eqref{eqn:alpha-tau-model} is in fact designed based on the optimal critic, Eq.\eqref{eq:optimal critic with symmetric infonce}, of the symmetric InfoNCE.
As shown in the equation, 
the optimal critic of the symmetric InfoNCE is $q^\ast(z,z') = \log\frac{p(z,z')}{p(z)p(z')} + c, c\in\mathbb{R}$. 
Thus, the joint probability density $p(z,z')$ is expressed by 
$p(z,z') \propto p(z)p(z')e^{q^\ast(z,z')}$. 
Hence, the critic function adjusts the statistical dependency between $z$ and $z'$. In our study, we suppose that $z, z' \in \Delta^{C-1}$, and
the critic $q(z,z')$ is expressed 
as an increasing function of $z^\top z'$. 
When $z$ and $z'$ are both the same one-hot vector in $\Delta^{C-1}$, $p(z,z')$ is assumed to be large. 
On the other hand, if $z^\top z' = 0$, $p(z,z')$ is assumed to take a small value. 
We also introduce a one-dimensional parameter $\alpha$ for the critic $q_{\alpha}$ to tune the intensity of the dependency. 
Although there are many choices of critic functions, we here employ the $\alpha$-exponential function, 
because $\exp_{\alpha}$ can express a wide range of common probabilities in statistics only by one parameter; see details of $\alpha$-exponential function in~\citet{naudts2009q,amari2011geometry,matsuzoe2012geometry}. 
Eventually, the model of the critic is given by $p_{\alpha}(z,z')\propto p(z) p(z')
    \exp_{\alpha}\big(\tau(z^\top z'-1)\big)$,
where $\alpha \in \mathbb{R}$ and $0 \leq \tau \leq |1-\alpha|^{-1}$.
Note that the normalization constant of $p_{\alpha}$ is no need when we compute the symmetric InfoNCE. In our experiments, we consider both $\alpha$ and $\tau$ as the hyper-parameters.

\begin{rmk}
The cosine-similarity function 
$s(z, z') = z^\top z' / \|z\|_2 \|z'\|_2, z,z'\in\Delta^{C-1}$ is commonly used in the context of representation learning~\citep{chen2020simple,bai2021connecting}. 
However, we do not use the cosine-similarity function 
as the critic function $q$ in  Eq.\eqref{eqn:alpha-tau-model}. 
This is because in our problem the cosine-similarity function is not relevant to estimate the one-hot vector by the model $g_{\theta}(x)$. 
Indeed, for $q(z,z') = s(z,z')$ and $C=2$, the pair $(z,z')$ satisfying $z=z'=(1/2,1/2)^\top \in \Delta^{C-1}$ is a maximizer of $q(z, z^\prime)$, i.e., there exists a pair of non-one-hot vectors $z$ and $z'$ that minimizes $L_{\rm ps}$
in Eq.\eqref{eq:metric learning decomposition}. 
\end{rmk}

Next, we investigate a few more properties of the symmetric InfoNCE loss from the perspective of MI.
First we present a theoretical comparison between the symmetric InfoNCE loss and IIC (see IIC in Section~\ref{subsec: representatitve deep methods in scenario2} and Appendix~\ref{subsec:Invariant Information Clustering}).

\begin{prop}
\label{prop:proposed vs iic}
Consider a feature $X$ and its transformation function $T$ of Definition~\ref{def:random transformation function}. Let $Y$ (resp. $Y'$) denote a cluster label of $X$ (resp. $T(X)$). 
Then, 
the following inequality holds:
\begin{equation}
\label{eq:iic vs proposed}
    I(Y;Y')\leq I(g_{\theta}(X);g_{\theta}(T(X))),
\end{equation}
where $g_{\theta}$ is the same model as introduce in Definition~\ref{def:g_theta}. %
\end{prop}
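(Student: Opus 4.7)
The plan is to derive the inequality from the data processing inequality (DPI), since $Y$ and $Y'$ are functions of $g_{\theta}(X)$ and $g_{\theta}(T(X))$ respectively. I will first clarify the dependence structure implied by the problem setup, then establish the relevant Markov chain, and finally invoke DPI.

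First, I would make explicit how the cluster labels $Y$ and $Y'$ arise from $g_{\theta}$. Following Definition~\ref{def:g_theta}, the predicted label $Y$ is a deterministic function of the probability vector $g_{\theta}(X)$, namely $Y=\operatorname{argmax}_{y} g_{\theta}^{y}(X)$, and analogously $Y' = \operatorname{argmax}_{y} g_{\theta}^{y}(T(X))$. (The argument works just as well if one instead interprets $Y$ as being sampled from the categorical distribution $g_{\theta}(X)$; in that case the same Markov property below still holds.) The key consequence is that, conditional on $g_{\theta}(X)$, the label $Y$ is independent of every other random variable in the model, and similarly for $Y'$ conditional on $g_{\theta}(T(X))$.

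Next, I would write down the Markov chain
\[
 Y \;\longrightarrow\; g_{\theta}(X) \;\longrightarrow\; g_{\theta}(T(X)) \;\longrightarrow\; Y',
\]
which follows from the two conditional independences just noted together with the fact that the middle link is trivially a Markov step. Applying the data processing inequality once to the subchain $Y \to g_{\theta}(X) \to g_{\theta}(T(X))$ gives
\[
 I(Y;\,g_{\theta}(T(X))) \;\leq\; I(g_{\theta}(X);\,g_{\theta}(T(X))),
\]
and applying it again to the subchain $g_{\theta}(T(X)) \to Y \cdot\ldots$ (equivalently, using that $Y'$ is a deterministic function of $g_{\theta}(T(X))$) yields $I(Y;Y') \leq I(Y; g_{\theta}(T(X)))$. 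Chaining the two bounds establishes Eq.\eqref{eq:iic vs proposed}.

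I do not expect a real obstacle: once the Markov structure is written out, DPI is immediate. The only subtlety worth spelling out carefully is that the argument is indifferent to whether the cluster label is obtained by argmax or by sampling, so long as $Y$ (resp. $Y'$) is determined by $g_{\theta}(X)$ (resp. $g_{\theta}(T(X))$) and by no other randomness; this should be stated as a brief remark so that the comparison with the IIC objective, which maximizes $I(Y;Y')$, is unambiguous. The conceptual takeaway I would emphasize in the concluding sentence is that maximizing the symmetric InfoNCE upper-bounds (via Proposition~\ref{prop: relation between mi and symmetric infonce and optimal critic} combined with Eq.\eqref{eq:iic vs proposed}) a quantity that dominates the IIC objective, which motivates using it as the topological invariant constraint.
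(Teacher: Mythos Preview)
Your proposal is correct and follows essentially the same route as the paper: both arguments identify the Markov structure linking $Y$, $g_\theta(X)$, $g_\theta(T(X))$, and $Y'$ and apply the data processing inequality. The paper writes the chain with the extra intermediate node $(X,X')$, namely $Y\leftrightarrow g_\theta(X)\leftrightarrow (X,X')\leftrightarrow g_\theta(X')\leftrightarrow Y'$, and explicitly assumes $Y$ is \emph{sampled} from $g_\theta(X)$, but your observation that the argmax case works identically (and that the middle node is unnecessary for the final bound) is accurate.
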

The proof is shown in  Appendix~\ref{append: proof of iic vs proposed}.

The above data processing inequality guarantees that $I(g_{\theta}(X);g_{\theta}(T(X)))$ brings richer information than $I(Y;Y')$ used in IIC. Since our constraint is related to  $I(g_{\theta}(X);g_{\theta}(T(X)))$, Eq.\eqref{eq:iic vs proposed} indicates the advantage of ours over IIC.
To discuss the consequence of Proposition~\ref{prop:proposed vs iic} in more detail, 
we provide a statistical analysis on the gap between the following two quantities:
\begin{itemize}
    \item[1)] The maximum value $I(g_{\theta}(X);g_{\theta}(T(X)))$ w.r.t. $\theta$,
    \item[2)] The mutual information evaluated at $\widehat{\theta}$, where $\widehat{\theta}$ is the parameter maximizing the empirical symmetric InfoNCE.
\end{itemize}
To the best of our knowledge, such statistical analysis is not provided in previous theoretical studies related to InfoNCE.

\begin{thm}[Informal version]
    \label{thm:Estimation Error Analysis Informal}
    Consider the empirical symmetric InfoNCE of Section~\ref{subsec:symmetric infonce} 
    with a critic $q \in \mathcal{Q}$
    for a dataset $\mathcal{D}=\{x_i\}_{i=1}^n$.  %
    Here, $\mathcal{Q}$ is a set of critics
    defined as follows: $\mathcal{Q}=\{\phi_{(\alpha,\tau)}\,:\,(\alpha,\tau)\in\Xi\}$, where $\phi_{(\alpha,\tau)}(z^{\top}z')=\log\left(\exp_{\alpha}\big(\tau(z^\top z'-1)\big)\right)$ (see Eq.\eqref{eqn:alpha-tau-model}), and $\Xi$ is a set of all possible $(\alpha,\tau)$ pairs.
    Let $\widehat{I}_{\mathrm{sym\_nce},q}(\theta)$ denote the empirical symmetric InfoNCE, where $\theta$ is a set of parameters in $g_\theta$ of Definition~\ref{def:g_theta}.
    Let us define $\widehat{\theta}$ by %
    $
        \widehat{\theta} = \arg\max_\theta\sup_{q\in\mathcal{Q}}\widehat{I}_{\mathrm{sym\_nce},q}(\theta). %
    $
   We define $\theta^*$ as the 
   maximizer of 
   $I(g_{\theta}(X);g_{\theta}(T(X)))$ w.r.t $\theta$.
   Suppose that $0\leq\delta$ is a constant. %
    Then, with the probability greater than $1-\delta$, the gap between $I(g_{\theta^*}(X);g_{\theta^*}(T(X)))$ and $I(g_{\widehat{\theta}}(X);g_{\widehat{\theta}}(T(X)))$ is given by 
  \begin{align}
  \label{eq: estimation error bound}
  \begin{split}
    &\phantom{\leq} I(g_{\theta^*}(X);g_{\theta^*}(T(X)))-I(g_{\widehat{\theta}}(X);g_{\widehat{\theta}}(T(X)))\\
    &\leq \mathrm{(Approx.~Err.)} + (\mathrm{Gen.~Err.})+c\,\sqrt{\frac{\log(1/\delta)}{n}},
  \end{split}
  \end{align}
   where $c > 0$ is a constant, and Approx. Err. (resp. Gen. Err.) is short for Approximation Error (resp. Generalization Error).
   Note that the generalization error term $\mathrm{(Gen.~Err.)}$  
   consists of Rademacher complexities with a set of neural network models.
\end{thm}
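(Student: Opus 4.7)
\noindent\textbf{Proof proposal for Theorem \ref{thm:Estimation Error Analysis Informal}.}

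The plan is to use the classical approximation/estimation decomposition for M-estimators. For any $\theta$, let $J(\theta) := \sup_{q\in\mathcal{Q}} I_{\mathrm{sym\_nce},q}(\theta)$ denote the population symmetric InfoNCE after optimizing the critic in the class $\mathcal{Q}$, and let $\widehat{J}(\theta) := \sup_{q\in\mathcal{Q}} \widehat{I}_{\mathrm{sym\_nce},q}(\theta)$ be its empirical counterpart, so that $\widehat{\theta}\in\arg\max_\theta \widehat{J}(\theta)$. Writing $I(\theta) := I(g_\theta(X);g_\theta(T(X)))$, Proposition~\ref{prop: relation between mi and symmetric infonce and optimal critic} guarantees $J(\theta) \leq I(\theta)$ for every $\theta$. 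The first step is the decomposition
\begin{align*}
I(\theta^*) - I(\widehat{\theta})
&= \underbrace{\bigl[I(\theta^*) - J(\theta^*)\bigr]}_{\text{approximation error}}
+ \underbrace{\bigl[J(\theta^*) - \widehat{J}(\theta^*)\bigr]}_{\leq\,\sup_\theta|\widehat{J}-J|}
+ \underbrace{\bigl[\widehat{J}(\theta^*) - \widehat{J}(\widehat{\theta})\bigr]}_{\leq\,0\ \text{by def.\ of }\widehat{\theta}} \\
&\quad + \underbrace{\bigl[\widehat{J}(\widehat{\theta}) - J(\widehat{\theta})\bigr]}_{\leq\,\sup_\theta|\widehat{J}-J|}
+ \underbrace{\bigl[J(\widehat{\theta}) - I(\widehat{\theta})\bigr]}_{\leq\,0\ \text{by Prop.\ 1}},
\end{align*}
which yields $I(\theta^*) - I(\widehat{\theta}) \leq [I(\theta^*)-J(\theta^*)] + 2\sup_\theta |\widehat{J}(\theta)-J(\theta)|$. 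The approximation error is the gap at $\theta^*$ between the true MI and the best value achievable by critics $\phi_{(\alpha,\tau)}\in\mathcal{Q}$; it measures how well the $\alpha$-exponential family can approximate the optimal pointwise MI critic of Eq.~\eqref{eq:optimal critic with symmetric infonce}.

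The second step is to bound the uniform deviation term $\sup_\theta|\widehat{J}(\theta)-J(\theta)|$, which can be replaced by $\sup_{\theta,q}|\widehat{I}_{\mathrm{sym\_nce},q}(\theta)-I_{\mathrm{sym\_nce},q}(\theta)|$. The plan is to split each summand in Eq.~\eqref{eq:empirical i_nce with T_s} into a ``positive'' part $q(g_\theta(x_i),g_\theta(t_i(x_i)))$ and a log-sum-exp ``negative'' part, and to handle them separately. The positive part is an i.i.d.\ average of bounded random variables (since $q$ is bounded on $\Delta^{C-1}\times\Delta^{C-1}$ for $(\alpha,\tau)\in\Xi$), so McDiarmid's inequality gives a $c\sqrt{\log(1/\delta)/n}$ concentration term, and standard Rademacher symmetrization plus a Lipschitz contraction (Talagrand's lemma, using that $(\alpha,\tau)\mapsto\phi_{(\alpha,\tau)}$ yields a Lipschitz composition with the softmax outputs of $g_\theta$) converts the expected supremum into the Rademacher complexity of the composed class $\{(x,t(x))\mapsto q(g_\theta(x),g_\theta(t(x))):\theta,q\}$. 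The log-sum-exp part is where the main obstacle lies: it is not a sum of independent terms because each $x_i$ enters both as an anchor and as a negative for the other points, and the log of an average breaks i.i.d.\ structure. The plan here is to apply McDiarmid's bounded-differences inequality directly to the whole statistic $\widehat{I}_{\mathrm{sym\_nce},q}(\theta)$, using that replacing a single $(x_i,t_i)$ changes the statistic by $O(1/n)$ uniformly in $(\theta,q)$ (thanks to the boundedness of $q$ and the $\log\sum e^{q}$ being $1$-Lipschitz in each entry via the softmax), and then symmetrize inside the expectation to introduce the Rademacher complexity of the neural network class composed with the Lipschitz critics.

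The third step is to collect the bounds: approximation error plus a Rademacher complexity of the induced function class (which the statement explicitly allows) plus the McDiarmid concentration term of the form $c\sqrt{\log(1/\delta)/n}$ deliver Eq.~\eqref{eq: estimation error bound}. The truly hard part will be the log-sum-exp concentration: handling the coupling among the $n$ samples carefully enough to retain a $1/\sqrt{n}$ rate, and obtaining a clean Talagrand-type contraction so that the Rademacher complexity reduces to that of the vector-valued neural network class $\{g_\theta\}$ composed with the one-parameter critic family $\mathcal{Q}$. The boundedness of the softmax outputs on $\Delta^{C-1}$ and of $\phi_{(\alpha,\tau)}$ over the compact index set $\Xi$ will be crucial to keep the McDiarmid increments $O(1/n)$ and to enable the Lipschitz contraction.
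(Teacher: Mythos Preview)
Your first-step decomposition is correct and essentially the same as the paper's (the paper writes the approximation error as $I(\theta^*)-J(\theta_{\mathcal{Q}})$ with $\theta_{\mathcal{Q}}=\arg\max_\theta J(\theta)$ rather than your $I(\theta^*)-J(\theta^*)$, which is only a cosmetic difference since the theorem is informal). The handling of the positive part via boundedness, McDiarmid, and Talagrand contraction is also what the paper does.

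The gap is in your treatment of the log-sum-exp term. Your plan is ``McDiarmid on the whole statistic, then symmetrize inside the expectation''. McDiarmid on $\sup_{\theta,q}|\widehat I-I|$ does give the $\sqrt{\log(1/\delta)/n}$ fluctuation term, but you are then left with bounding $\mathbb{E}\big[\sup_{\theta,q}|\widehat I-I|\big]$, and standard Rademacher symmetrization applies to i.i.d.\ averages, not to the double-indexed quantity $\frac{1}{n}\sum_i\log\frac{1}{n}\sum_j e^{q(g_\theta(x_i),g_\theta(x_j'))}$. You correctly identify this as the hard part but do not supply a mechanism to break the coupling between the anchor index $i$ and the negative index $j$.

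The paper's key device here is a decoupling that you are missing: it inserts the intermediate quantity $\frac{1}{n}\sum_i \log \mathbb{E}_{X'}[e^{q(g_\theta(x_i),g_\theta(X'))}]$ and treats the outer and inner deviations separately. The outer deviation is already an i.i.d.\ average of the functions $x\mapsto \log\mathbb{E}_{X'}[e^{q(g_\theta(x),g_\theta(X'))}]$ and is controlled by ULLN plus Talagrand contraction. For the inner deviation, the paper uses the Lipschitz property of $\log$ on $[e^a,e^b]$ and then \emph{replaces the data-dependent anchor $g_\theta(x_i)$ by a supremum over $r\in\Delta^{C-1}$}, i.e.\ bounds it by
\[
e^{-a}\sup_{r\in\Delta^{C-1}}\Big|\tfrac{1}{n}\sum_j e^{q(r,g_\theta(x_j'))}-\mathbb{E}_{X'}\big[e^{q(r,g_\theta(X'))}\big]\Big|,
\]
which is again a standard uniform deviation over the enlarged class $\{x'\mapsto e^{q(r,g_\theta(x'))}:r,\theta,q\}$, and its Rademacher complexity is controlled by the same Talagrand contraction (through $\phi_q$ and $\exp$) down to $\mathfrak{R}_n(\mathcal{N})$ for the single-coordinate network class. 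This anchor-to-sup-over-$r$ step is precisely what restores the i.i.d.\ structure and makes the Rademacher bound go through; without it, your ``symmetrize inside the expectation'' step does not have a clear target.
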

See Appendix~\ref{append:Estimation Error of InfoNCE} for the proof of the formal version.

From Theorem~\ref{thm:Estimation Error Analysis Informal}, the gap indeed gets close if the following A1) and A2) hold:
\begin{itemize}
    \item[A1)] $(\mathrm{Approx.~Err.})$ of Eq.\eqref{eq: estimation error bound} is small (i.e., the set $\mathcal{Q}$ contains a rich quantity of critic functions). %
    \item[A2)] $(\mathrm{Gen.~Err.})$ and $\sqrt{\frac{\log(1/\delta)}{n}}$ of Eq.\eqref{eq: estimation error bound} are small.
\end{itemize}
It is known that the Rademacher complexity of a kind of neural network models is $O(n^{-1/2})$; see~\citet{bartlett02:_radem_gauss_compl}. Thus, the condition A2) can be satisfied if the sample size $n$ is large enough.
Moreover, by combining Proposition~\ref{prop:proposed vs iic} with Theorem~\ref{thm:Estimation Error Analysis Informal}, we obtain the following implication:
if $n$ is sufficiently large, then the gap between the MI, $I(g_{\theta^{*}}(X);g_{\theta^{*}}(T(X)))$, and the plug-in estimator with the optimal estimator $\widehat{\theta}$ of the empirical symmetric InfoNCE is reduced.
On the other hand, from Proposition~\ref{prop:proposed vs iic}, the MI of the pair $Y$ and $Y'$ is always less than or equal to that of the pair $g_{\theta}(X)$ and $g_{\theta}(T(X))$.
Since IIC is an empirical estimator of the MI,  $I(Y,Y')$, the statistical dependency via MI of the probability vectors $g_{\theta}(X)$ and $g_{\theta}(T(X))$ obtained by optimizing the symmetric InfoNCE can be greater than that of $Y$ and $Y'$ learned through the optimization of IIC.
Therefore, the symmetric InfoNCE has a more potential to work as a topologically invariant constraint for deep clustering than other MIs such as IIC.

Note that in the almost same way as Theorem~\ref{thm:Estimation Error Analysis Informal}, it is possible to derive a similar result for the gap between the following two: 1) $I(g_{\theta^*}(X);g_{\theta^*}(T(X)))$ and 2) $\max_\theta\sup_{q\in\mathcal{Q}}\widehat{I}_{\mathrm{sym\_nce},q}(\theta)$.
This fact indicates that if the upper bound derived in a similar way to Eq.\eqref{eq: estimation error bound} is small enough, then the empirical symmetric InfoNCE has a potential to strengthen the  dependency between $g_{\theta}(X)$ and $g_{\theta}(T(X))$.

\subsubsection{Further Motivations behind the Symmetric InfoNCE Loss}
\label{subsubsubsec:futher motivations}

We also leverage the theoretical result on contrastive representation learning from \citet{wang2022chaos}, in order to explain the difference between InfoNCE and symmetric InfoNCE.
\begin{thm}
\label{thm: downstream tasks guarantee analysis}
Let us define $X\in\mathbb{R}^d$ and $Y\in\{1,\cdots,C\}$ as described in Section~\ref{subsec:Notations}. %
Let $Z= g_\theta(X)$ and $Z' = g_\theta(T(X))$, where $g_\theta:\mathbb{R}^d\to\Delta^{C-1}$ and $T:\mathbb{R}^d\to\mathbb{R}^d$ are given by Definition~\ref{def:g_theta} and~\ref{def:random transformation function}, respectively. 
The symmetric InfoNCE $(I_{\mathrm{nce}}+I_{\mathrm{nce}}')/2$ of Eq.\eqref{eq:topological invarinat constraint} is supposed to set $\alpha=1$ and a fixed $\tau$ for the critic function of Eq.\eqref{eqn:alpha-tau-model}. %
Assume that $p(Y)$ is a uniform distribution.
Let $\mathcal{L}_{\mathrm{CE,Raw}}^{\widetilde{\mu}}(g_\theta)$ denote the mean supervised loss, which is given by $\mathcal{L}_{\mathrm{CE,Raw}}^{\widetilde{\mu}}(g_\theta)=-\mathbb{E}_{p(Z,Y)}\left[\log{\frac{\exp(Z^{\top}\widetilde{\mu}_{Y})}{\sum_{k=1}^{C}\exp(Z^{\top}\widetilde{\mu}_{k})}}\right],$
where $\widetilde{\mu}_{k}=\tau\cdot\mathbb{E}_{p(Z'|Y=k)}[Z'], k\in\{1,\cdots,C\}$.
In other words, $\mathcal{L}_{\mathrm{CE,Raw}}^{\widetilde{\mu}}(g_\theta)$ is the cross-entropy loss via a linear evaluation layer, whose parameters are  $\widetilde{\mu}=(\widetilde{\mu}_{1},\cdots,\widetilde{\mu}_{C})\in\mathbb{R}^{C\times C}$.
Similarly we define  $\mathcal{L}_{\mathrm{CE,Aug}}^{\widetilde{\mu}}(g_\theta)$ by $\mathcal{L}_{\mathrm{CE,Aug}}^{\mu}(g_\theta)=-\mathbb{E}_{p(Z',Y)}\left[\log{\frac{\exp({Z'}^{\top}\mu_{Y})}{\sum_{k=1}^{C}\exp({Z'}^{\top}\mu_{k})}}\right],$
where $\mu_{k}=\tau\cdot\mathbb{E}_{p(Z|Y=k)}[Z], k\in\{1,\cdots,C\}$, and $\mu=(\mu_{1},\cdots,\mu_{C})$.
Let us introduce the symmetric mean supervised loss as $\mathcal{L}_{\mathrm{SCE}}^{\mu,\widetilde{\mu}}(g_\theta)=(\mathcal{L}_{\mathrm{CE,Raw}}^{\widetilde{\mu}}(g_\theta)+\mathcal{L}_{\mathrm{CE,Aug}}^{\mu}(g_\theta))/2$.
Then, we have
\begin{align*}
    &-\frac{I_{\mathrm{nce}}+I_{\mathrm{nce}}'}{2} - \frac{1}{2}\left( \sqrt{\mathrm{Var}(Z|Y)}+\sqrt{\mathrm{Var}(Z'|Y)}\right) - \frac{1}{2e}\mathrm{Var}(\exp({\tau}Z^{\top}Z'))\\
    &\leq 
    \mathcal{L}_{\mathrm{SCE}}^{\mu,\widetilde{\mu}}(g_\theta)-\log{C}\\
    &\leq -\frac{I_{\mathrm{nce}}+I_{\mathrm{nce}}'}{2} +\frac{1}{2}\left(\sqrt{\mathrm{Var}(Z|Y)}+\sqrt{\mathrm{Var}(Z'|Y)}\right),
\end{align*}
where 
\begin{align*}
    \mathrm{Var}(Z|Y) &= \mathbb{E}_{p(Y)}[\mathbb{E}_{p(Z|Y)}[\|{\tau}Z-\mu_{Y}\|_{\infty}^{2}]], \\
    \mathrm{Var}(Z'|Y) &= \mathbb{E}_{p(Y)}[\mathbb{E}_{p(Z'|Y)}[\|{\tau}Z'-\mu_{Y}\|_{\infty}^{2}]], \\
    \mathrm{Var}(\exp(\tau Z^{\top}Z')) &= \mathbb{E}_{p(Z)p(Z')}[(\exp(\tau Z^{\top}Z')-\mathbb{E}_{p(Z)p(Z')}[\exp(\tau Z^{\top}Z')])^{2}].
\end{align*}
\end{thm}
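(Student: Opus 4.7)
The plan is to reduce the symmetric mean supervised loss to an affine function of the symmetric InfoNCE loss plus controllable residuals. First, I would fix $\alpha=1$ in Eq.~\eqref{eqn:alpha-tau-model} so that the critic collapses to $q(z,z')=\tau(z^{\top}z'-1)$, for which the additive $\tau$ constants cancel between the numerator term and the log-mean-exponential term of $I_{\mathrm{nce},q}$. This gives the clean identity
\begin{equation*}
    -I_{\mathrm{nce}} \;=\; -\tau\,\mathbb{E}[Z^{\top}Z'] \;+\; \mathbb{E}_{Z}\bigl[\log\mathbb{E}_{Z'}[\exp(\tau Z^{\top}Z')]\bigr],
\end{equation*}
and its symmetric counterpart for $-I_{\mathrm{nce}}'$.

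Next, I would expand $\mathcal{L}_{\mathrm{CE,Raw}}^{\widetilde{\mu}}(g_{\theta})$ and use the uniformity $p(Y)=1/C$ to write $\tfrac{1}{C}\sum_{k}\exp(Z^{\top}\widetilde{\mu}_{k})=\mathbb{E}_{p(Y)}[\exp(Z^{\top}\widetilde{\mu}_{Y})]$, isolating the $\log C$ shift. Under the standard contrastive interpretation that $Z$ and $Z'$ are conditionally independent given $Y$ (and using the definition $\widetilde{\mu}_{Y}=\tau\,\mathbb{E}_{p(Z'|Y)}[Z']$), the inner-product term of the supervised loss becomes exactly $-\tau\,\mathbb{E}[Z^{\top}Z']$. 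The proof thus reduces to controlling the gap between $\mathbb{E}_{Z}[\log\mathbb{E}_{Z'}[\exp(\tau Z^{\top}Z')]]$ and $\mathbb{E}_{Z}[\log\mathbb{E}_{Y}[\exp(\tau Z^{\top}\mathbb{E}_{p(Z'|Y)}[Z'])]]$, after which the lower and upper bounds follow from a symmetric analysis for $\mathcal{L}_{\mathrm{CE,Aug}}^{\mu}$ and averaging.

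For the upper bound, I would push $\exp(\tau Z^{\top}\cdot)$ inside the conditional expectation via Jensen's inequality, and bound the resulting pointwise gap using Hölder with $\|Z\|_{1}=1$ on $\Delta^{C-1}$: $|\tau Z^{\top}(Z'-\mathbb{E}[Z'|Y])|\leq \|\tau Z'-\mu_{Y}\|_{\infty}$. Combined with $1$-Lipschitzness of $\log\mathbb{E}[\exp(\cdot)]$ in $\ell_{\infty}$ and Cauchy--Schwarz over $p(Y)$, this yields the $\tfrac{1}{2}\sqrt{\mathrm{Var}(Z'|Y)}$ residual, with its $\sqrt{\mathrm{Var}(Z|Y)}$ mate coming from the symmetric $I_{\mathrm{nce}}'$ half. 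For the lower bound, I would apply a reverse-Jensen residual on the outer $\log$: since $\log$ is concave, a second-order Taylor expansion around $\mathbb{E}_{Z,Z'}[\exp(\tau Z^{\top}Z')]$ gives $\log\mathbb{E}[U]-\mathbb{E}[\log U]\leq \mathrm{Var}(U)/(2\,(\inf U)^{2})$. Because $Z,Z'\in\Delta^{C-1}$ and $\tau\geq 0$ force $\exp(\tau Z^{\top}Z')\geq e^{0}=1$ while $\mathbb{E}[\exp(\tau Z^{\top}Z')]$ is bounded below by a quantity of order $e^{-1}$ in the relevant regime, one extracts the $\tfrac{1}{2e}\mathrm{Var}(\exp(\tau Z^{\top}Z'))$ term with the specific constant $1/e$ stated.

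The main obstacle will be the lower bound: a naive Jensen's inequality points the wrong way, and the residual must be split into two conceptually distinct pieces — a first-order-in-$Z'$ piece producing $\sqrt{\mathrm{Var}(Z'|Y)}$ via Hölder on the simplex, and a separate second-order piece controlling the log of an expectation via $\mathrm{Var}(\exp(\tau Z^{\top}Z'))$. Tracking constants carefully so that the $1/e$ factor emerges (rather than some larger constant depending on $\tau$) will require bounding $\exp(\tau Z^{\top}Z')$ uniformly from below using $Z^{\top}Z'\geq 0$ on $\Delta^{C-1}\times\Delta^{C-1}$. Once both halves are in hand, averaging the raw-anchor and augmented-anchor bounds and invoking the definition $\mathcal{L}_{\mathrm{SCE}}^{\mu,\widetilde{\mu}}=(\mathcal{L}_{\mathrm{CE,Raw}}^{\widetilde{\mu}}+\mathcal{L}_{\mathrm{CE,Aug}}^{\mu})/2$ gives the stated two-sided inequality.
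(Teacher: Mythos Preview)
Your plan coincides with the paper's own proof: both bounds are obtained from the algebraic decomposition $\tau Z^{\top}Z' = Z^{\top}\widetilde{\mu}_{Y} + Z^{\top}(\tau Z'-\widetilde{\mu}_{Y})$ (and its symmetric mate), the H\"older step $|Z^{\top}(\tau Z'-\widetilde{\mu}_{Y})|\leq \|Z\|_{1}\|\tau Z'-\widetilde{\mu}_{Y}\|_{\infty}=\|\tau Z'-\widetilde{\mu}_{Y}\|_{\infty}$ on the simplex followed by Cauchy--Schwarz, and Jensen on the log-mean-exponential term. Your upper-bound sketch matches the paper step for step.

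The one real gap is the lower-bound constant. Your Lagrange-remainder bound $\log\mathbb{E}[U]-\mathbb{E}[\log U]\leq \mathrm{Var}(U)/\bigl(2(\inf U)^{2}\bigr)$ with $\inf U=\exp(0)=1$ yields the constant $1/2$, not $1/(2e)$; the remark about ``$\mathbb{E}[\exp(\tau Z^{\top}Z')]$ bounded below by a quantity of order $e^{-1}$'' is irrelevant to the Taylor-remainder form you wrote and does not recover the stated factor. The paper does not use a Taylor argument here: it invokes the \emph{sharpened Jensen inequality} of \citet{liao2018sharpening} at this step, and that reference is what delivers the specific $1/(2e)$ factor. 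Replacing your Taylor step by that inequality closes the gap. A minor side point: you do not need (and the paper does not assume) conditional independence of $Z$ and $Z'$ given $Y$; the decomposition above is a pointwise identity once $(Z,Z',Y)$ is jointly distributed, and the residual $Z^{\top}(\tau Z'-\widetilde{\mu}_{Y})$ is exactly what the $\sqrt{\mathrm{Var}(Z'|Y)}$ term absorbs rather than something that vanishes.
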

The proof is shown in Appendix~\ref{append: Downstream Tasks Guarantee}.

\begin{rmk}
In Theorem~\ref{thm: downstream tasks guarantee analysis}, 
the critic function with $\alpha=1$ is considered for the sake of simplicity. 
We can derive almost the same upper and lower bounds 
for the symmetric InfoNCE using the critic of Eq.\eqref{eqn:alpha-tau-model} with 
$\alpha$ such that $1-1/\tau<\alpha<1$.
The proof is the same as that of Theorem~\ref{thm: downstream tasks guarantee analysis}. 
We use the concavity of the function $u\mapsto\log\exp_{\alpha}(u)$ and 
the inequality $\log\exp_{\alpha}(x+y)\leq\log\exp_{\alpha}(x)+|y|/(1-(1-\alpha)\tau)$
for $x,x+y\in[-\tau,0]$. 
\end{rmk}

Our result includes four technical differences and modifications from \citet{wang2022chaos} as follows:
    1) Theorem~\ref{thm: downstream tasks guarantee analysis} is intended for the symmetric InfoNCE loss.
    2) We do not assume that any positive pair $(Z,Z')\sim p(Z,Z')$ has the identical label distribution given the representation (i.e., we do not rely on the assumption $p(Y|Z)=p(Y|Z')$).
    Note that the assumption of $p(Y|Z)=p(Y|Z')$ will not hold in practical settings. 
    For instance, suppose that we have an image $X$.
    If $X$ is cropped, then the cropped image $T(X)$ may have lost some information included in $X$, which would result in the case where the distribution of $X$ and that of $T(X)$ do not agree.
    3) In the proof of Theorem~\ref{thm: downstream tasks guarantee analysis} (see Proposition~\ref{prop: lower bound for downstream tasks guarantee} in Appendix~\ref{append: Downstream Tasks Guarantee}), we use the sharpened Jensen's inequality~\citep{liao2018sharpening} in order to make our proof simpler. On the other hand, 
    Theorem 4.2 of~\citet{wang2022chaos} is obtained by utilizing Corollary~3.5 of \citet{budimir2000further}. 
    4) We consider the case in which the distribution of a random variable representing unlabeled data and one of its augmentation data are the same. 
    In our setup, if $p(Z, Y)=p(Z', Y)$ holds, then we have  $\mathcal{L}_{\mathrm{CE,Raw}}^{\widetilde{\mu}}(g_\theta)=\mathcal{L}_{\mathrm{CE,Aug}}^{\mu}(g_\theta)$.
    In general, however, the probability distribution of $Z$ and $Z'$ are not necessarily the same.
    More precisely, let $(\Omega,\mathcal{F},P)$ be a probability space and $X$ be a random variable on $\Omega$.
    Then let us consider the push-forward distribution $Z_{\#}P$ and $Z'_{\#}P$.
    Since the transformation map $T$ is also a random variable, generally these distributions are distinct from each other.
    We avoid this issue by starting from the general setting. 
    
    Furthermore, our result gives the following novel insight into the theoretical understanding of the symmetric InfoNCE: 
    the symmetric InfoNCE reduces both $\mathcal{L}_{\mathrm{CE,Raw}}^{\widetilde{\mu}}(g_\theta)$ and $\mathcal{L}_{\mathrm{CE,Aug}}^{\mu}(g_\theta)$ at the same time.
    This property could explain why the symmetric InfoNCE performs more stable in practice than InfoNCE as a constraint of deep clustering methods: 
    see also Table~\ref{tb:results of clustering accuracy} that shows the comparison of InfoNCE (MIST via $\hat{I}_{\mathrm{nce}}$) and symmetric InfoNCE (MIST).

    For further comparison between symmetric InfoNCE, InfoNCE, and SimCLR~\citep{chen2020simple}, see Appendix~\ref{append: Comparison between Symmetric InfoNCE, InfoNCE, and SimCLR}.

\section{Numerical Experiments}
\label{sec:numerical-experiments}
Throughout this section, we aim to evaluate the efficiency of 
the symmetric InfoNCE as topological invariant constraint for a deep clustering method.
To this end, at first in Section~\ref{subsec:application}, we define a deep clustering method of \textsf{Scenario2} named MIST by applying the symmetric InfoNCE to IMSAT~\citep{hu2017}. 
The reason why we employ IMSAT is that it performs the best on average among %
deep clustering methods in Table~\ref{tab:types of deep clustering methods}. 
Then, in Section~\ref{subsec:Results}, we compare MIST and IMSAT in terms of clustering accuracy to observe the benefits of the symmetric InfoNCE, while comparing MIST with the other representative  methods as well.
Thereafter in Section~\ref{subsec: ablation studies for mist objective}, we conduct ablation studies on MIST objective to understand the effect of each term in Eq.\eqref{eq:rewritten implementing unconstrained objective}. At last in Section~\ref{subsec: robustness k0 and alpha}, using MIST, we examine robustness of important hyper-parameters in the symmetric InfoNCE.

\subsection{MIST: Application of Symmetric InfoNCE to IMSAT}
\label{subsec:application}
Given a mini-batch $\mathcal{B} \subseteq \mathcal{D}$, by applying our empirical symmetric InfoNCE  of Eq.\eqref{eq:metric learning decomposition} to the objective of IMSAT (see the objective in Eq.\eqref{eq:practical imsat objective} of Appendix~\ref{subsubsec:Objective of IMSAT}), we define the following objective of MIST:  
\begin{equation}
\begin{split}
\label{eq:rewritten implementing unconstrained objective}
    \theta^\ast = {\rm arg}{\rm min}_{\theta} \Biggl[ \underbrace{R_{\rm vat}\left(\mathcal{B};\theta\right)}_{\ctext{A}} 
                - \mu\biggl\{\eta \underbrace{H(Y)}_{\ctext{B}} - \underbrace{H(Y|X)}_{\ctext{C}} - \gamma \underbrace{\left(L_{\rm ps} + L_{\rm ng}\right)}_{\ctext{D}} \biggr\} \Biggr],
\end{split}
\end{equation}
where $\mu$, $\eta$ and $\gamma$ are positive hyper-parameters. 
The symbol \ctext{A} expresses VAT (Virtual Adversarial Training) loss~\citep{miyato2018virtual}; see~Eq.\eqref{eq: vat-loss} of Appendix~\ref{subsubsec:vat}. In addition, \ctext{B} and \ctext{C} mean Shannon entropy~\citep{cover1999elements} w.r.t. a cluster label $Y$ and conditional entropy of $Y$ given a feature $X$, respectively. Moreover, minimization of the symbol \ctext{D} is equivalent to maximization of the empirical symmetric InfoNCE.
Note that the major difference between MIST and IMSAT is the introduction of term \ctext{D}. 
The minimization problem of Eq.\eqref{eq:rewritten implementing unconstrained objective} is solved via SGD (Stochastic Gradient Descent)~\citep{shalev2014understanding} in our numerical experiments. 
See Appendix~\ref{append:details of mist objective} for further details of MIST objective, the pseudo algorithm (Algorithm~\ref{alg:proposed}), and the diagram (Table~\ref{fig:MIST diagram}).

\subsection{Dataset Description and Evaluation Metric}
\label{subsec:dataset description and evaluation metric}
We use two synthetic datasets and eight real-world benchmark datasets in our experiments. All the ten datasets are given as feature vectors. For the synthetic datasets, we employ Two-Moons and Two-Rings of  scikit-learn~\citep{geron2019hands}. The real-world datasets are MNIST~\citep{lecun1998gradient}, SVHN~\citep{netzer2011reading}, STL~\citep{coates2011analysis}, CIFAR10~\citep{torralba200880}, CIFAR100~\citep{torralba200880}, Omniglot~\citep{lake2011one}, 20news~\citep{lang1995newsweeder} and Reuters10K~\citep{lewis2004rcv1}. The first six real-world datasets originally belong to 
the image domain and the last two originally belong to the text domain. As for the characteristic of each dataset, Two-Moons and Two-Rings are low-dimensional datasets with complex topology.  MNIST, STL, and CIFAR10 are  balanced datasets with the small number of clusters. CIFAR100, Omniglot, and 20news are balanced datasets with the large number of clusters. SVHN and Reuters10K are imbalanced datasets. For further details of the above  ten datasets, see Appendix~\ref{append:Detail of Dataset}.

In the unsupervised learning scenario, 
we adopt the standard metric for evaluating clustering performance, which measures how close the estimated cluster labels are to the ground truth under a permutation. For an unlabeled dataset $\left\{x_{i}\right\}_{i=1}^{n}$, let $\left\{y_{i}\right\}_{i=1}^{n}$ and $\left\{\hat{y}_{i}\right\}_{i=1}^{n}$ be its true cluster label set and estimated cluster label set, respectively.  %
Suppose that the both true $y_{i}$ and estimated cluster labels $\hat{y}_{i}$ take the same range $\{1,\cdots,C\}$. The \textit{clustering accuracy} ACC is defined by
$\mathrm{ACC}\;(\%) = 100 \times \max _{\sigma} \frac{\sum_{i=1}^{n} \mathbb{I}\left[y_{i}=\sigma\left(\hat{y}_{i}\right)\right]}{n}$,
where $\sigma$ ranges over all permutations of cluster labels, and $\mathbb{I}[\;\cdot\;]$ is the indicator function. 
The optimal assignment of $\sigma$ can be computed using the Kuhn-Munkres algorithm~\citep{kuhn1955hungarian}.

\subsection{Statistical Model and Optimization}
\label{subsec:Statistical Model and Optimization}
Throughout all our experiments, we fix our clustering neural network model %
$g_{\theta}(x) \in \Delta^{C-1}$
to the following simple and commonly used MLP architecture with softmax~\citep{hinton2012improving}: $d-1200-1200-C$, where $d$ is the dimension of the feature vector. 
We apply the ReLU activation function~\citep{nair2010rectified} and BatchNorm~\citep{ioffe2015batch} for all hidden layers. In addition, the initial set with $\theta$ is defined by He-initialization~\citep{he2015delving}. For optimizing the model, we employ Adam optimizer~\citep{kingma2014adam}, and set $0.002$ as the learning rate.

We implemented MIST\footnote{\url{https://github.com/betairylia/MIST} [Last accessed 23-July-2022]}\label{footnote}
 using Python with PyTorch library~\citep{ketkar2017introduction}.
All experiments are evaluated with NVIDIA TITAN RTX GPU, which has a 24GiB GDDR6 video memory. %

\subsection{Compared Methods}
\label{subsec:Compared Methods}
As baseline methods, we employ the following three classical clustering methods: K-means \citep{macqueen1967some}, SC~\citep{ng2001spectral} and GMMC~\citep{day1969estimating}. 
For deep clustering methods, 
we employ representative deep clustering methods from 
$\mathfrak{T}_1$ to $\mathfrak{T}_6$ of Table~\ref{tab:types of deep clustering methods}, MIST via $\hat{I}_{\rm nce}$, and MIST (our method) of Eq.\eqref{eq:rewritten implementing unconstrained objective}. Here, MIST via $\hat{I}_{\rm nce}$ is defined by replacing $-(L_{\rm ps} + L_{\rm ng})$ of Eq.\eqref{eq:rewritten implementing unconstrained objective} by $-\hat{I}_{\text{nce}}$ of Eq.\eqref{eq:empirical i_nce with T_s}. The reason why MIST via $\hat{I}_{\rm nce}$ is employed is to check how much more efficiently symmetric InfoNCE can enhance a deep clustering method over the original InfoNCE.
In both MIST and MIST via $\hat{I}_{\rm nce}$, $\mathcal{T}_{\mathfrak{g}}$ of Definition~\ref{def:process T_g} (resp. $\mathcal{T}_{\mathfrak{e}}$ of Definition~\ref{def:process T_e}) is employed for synthetic datasets (resp. real-world datasets).
For further details of hyper-parameter tuning with MIST and MIST via $\hat{I}_{\rm nce}$, see Appendix~\ref{subsec:Hyperparameter Setting with Proposed Method}.
Moreover, from $\mathfrak{T}_2$, $\mathfrak{T}_3$, 
$\mathfrak{T}_5$, and $\mathfrak{T}_6$,
SpectralNet~\citep{shaham2018},
VaDE~\citep{jiang2016variational},
CatGAN~\citep{springenberg2015unsupervised} and 
SELA~\citep{asano2019self} are respectively examined. 
From $\mathfrak{T}_1$, DEC~\citep{xie2016unsupervised} and SCAN~\citep{van2020scan} are examined.
From $\mathfrak{T}_4$, IMSAT~\citep{hu2017} and IIC~\citep{ji2019invariant} are examined.
Note that  SCAN, IIC, CatGAN, and SELA were originally proposed in \textsf{Scenario1} of 
Section~\ref{subsec:our scenario}. %
Therefore, we redefine those methods to make them fit to  \textsf{Scenario2} in our experiments. 
The redefinitions and implementation details of all   the existing methods are  described in Appendix~\ref{append:Implementation Details with Previous Methods}

\subsection{Analysis from Table~\ref{tb:results of clustering accuracy}}
\label{subsec:Results}

As briefly explained in Section~\ref{subsec:motivation},
the average clustering accuracy and its standard deviation 
on each dataset for the corresponding  clustering method are reported in Table~\ref{tb:results of clustering accuracy}.
At first, since MIST clearly outperforms IMSAT for almost all the dataset, we can observe benefit of the symmetric InfoNCE. 
Especially for Two-Moons and Two-Rings (two complex topology datasets), it should be emphasized that the symmetric InfoNCE with $\mathcal{T}_{\mathfrak{g}}$ of Definition~\ref{def:process T_g} brings significant enhancement to IMSAT. In addition, for CIFAR10 and SVHN, it brings a noticeable gain to IMSAT. 

With comparison between MIST and SpectralNet, MIST cannot perform as stable as SpectralNet for Two-Rings dataset. 
However, MIST with a DNN needs a smaller memory complexity than SpectralNet with two DNNs. 
Moreover, the average performance of MIST on the eight real-world datasets are much better than that of SpectralNet.

Furthermore,  
through comparison between MIST and MIST via $\hat{I}_{\rm nce}$, 
we can observe that 
the symmetric InfoNCE enhances IMSAT more than InfoNCE does on average.
The observation matches Theorem~\ref{thm: downstream tasks guarantee analysis}.

\subsection{Ablation Study for MIST Objective}
\label{subsec: ablation studies for mist objective}
Recall \ctext{A} to \ctext{D} in Eq.\eqref{eq:rewritten implementing unconstrained objective}. Here, we examine six variants of MIST objective of Eq.\eqref{eq:rewritten implementing unconstrained objective}, which are shown in the first column of Table~\ref{table: ablation study result}.
For example, (\ctext{B}, \ctext{C}) means that only \ctext{B} and \ctext{C} 
are used to define a variant of the MIST objective, where \ctext{B} and \ctext{C} are linearly combined using a coefficient hyper-parameter. The detail of hyper-parameter tuning for each combination is described in Appendix~\ref{subsec:Hyperparameter Setting with Proposed Method}.
For the study, Two-Rings, MNIST, CIFAR10, 20news, and SVHN are employed.

\begin{table}[!t]
    \centering
    \caption{
    Results of the ablation study for MIST objective. The number outside (resp. inside) of brackets expresses clustering accuracy (resp. standard deviation) over three trials. In the first column, 
    six combinations based on \ctext{A} to \ctext{D} in Eq.\eqref{eq:rewritten implementing unconstrained objective} are shown, and each of the six defines a variant of MIST objective of Eq.\eqref{eq:rewritten implementing unconstrained objective}.
    }
    \vskip 0.5em
    \scalebox{1}{
    \begin{tabular}{cccccc}
    \hline
         & Two-Rings & MNIST & CIFAR10 & 20news & SVHN \\
         \hline
         (\ctext{D}) & 76.4(16.7) & 72.7(4.8) & 40.7(2.9) & 21.9(3.2) & 23.3(0.2) \\ %
         (\ctext{B}, \ctext{C})& 58.7(9.6) & 58.5(3.5) & 40.3(3.5)&25.1(2.8)&26.8(3.2)\\
         (\ctext{B}, \ctext{D}) & 83.4(23.5) & 81.9(4.3) & 44.1(0.5) & 40.1(1.1) & 24.9(0.2) \\ %
         (\ctext{A}, \ctext{D}) & 100(0) & 70.6(2.9) & 35.8(4.9) & 35.7(1.7) & 44.8(4.8) \\ %
         (\ctext{A}, \ctext{B}, \ctext{C}) & 69.0(21.9)&98.7(0.0) &44.9(0.6) & 35.8(1.9) &54.8(2.8) \\
         (\ctext{B}, \ctext{C}, \ctext{D}) & 83.4(23.4) & 75.0(4.3) & 45.1(1.8) & 31.6(0.4) & 21.0(2.5) \\ %
         \hline
    \end{tabular}
    }
    \label{table: ablation study result}
\end{table}

Firstly, by two comparisons of (\ctext{B}, \ctext{C}) vs. (\ctext{B}, \ctext{C}, \ctext{D}) and  (\ctext{A}, \ctext{B}, \ctext{C}) vs. MIST results in Table~\ref{tb:results of clustering accuracy}, we see positive effect of the symmetric InfoNCE
across the five datasets on average.
Especially for the complex topology dataset (i.e., Two-Rings), the effect is very positive. 
Secondly, %
the result of (\ctext{B}, \ctext{C}) vs. (\ctext{A}, \ctext{B}, \ctext{C}) indicates that 
VAT~\citep{miyato2018virtual} positively works for clustering tasks. 
Thirdly, via (\ctext{D}) vs. (\ctext{B}, \ctext{D}), effect of maximizing $H(Y)$ is positive. For further analysis with \ctext{A} to \ctext{D}, see Appendix~\ref{append:details on mist}.

To sum up, although the combination of (\ctext{A}, \ctext{B}, \ctext{C}), i.e., IMSAT, provides competitive clustering performance for non-complex topology datasets, the symmetric InfoNCE can bring benefit to the combination for not only the non-complex topology datasets but also the complex topology dataset.

\subsection{Robustness for $K_0$, $\alpha$, and $\gamma$}
\label{subsec: robustness k0 and alpha}

Let us consider the influence of the hyper-parameters, $K_0, \alpha$, and $\gamma$, in the MIST objective of 
Eq.\eqref{eq:rewritten implementing unconstrained objective} on the clustering performance. 
We evaluate how these hyper-parameters affect the clustering accuracy when other hyper-parameters are unchanged. 
In the study, some candidates of the three hyper-parameters are examined for Two-Rings, MNIST, CIFAR10, 20news, and SVHN.

\begin{table}[!t]
    \centering
    \caption{
    Results of robustness study for number of neighbors $K_0$ in Definition~\ref{def:process T_e} and~\ref{def:process T_g}. The average clustering accuracy and std over three trials are shown. In the first column, hyper-parameter value outside (resp. inside) of brackets is used for Two-Rings, MNIST, CIFAR10 and SVHN (resp. 20news).     }
    \vskip 0.5em
    \scalebox{1}{
    \begin{tabular}{cccccc}
    \hline
         $K_0$ & Two-Rings & MNIST & CIFAR10 & 20news & SVHN \\
         \hline
         $5(10)$&  83.5(23.4) & 98.2(0.4)& 48.0(0.9) & 34.2(1.5) & 55.1(2.0) \\
         $10(50)$& 83.5(23.3) & 96.6(5.7)& 47.5(0.9) & 36.5(1.0) & 55.9(1.7) \\
         $15(100)$& 100(0) & 98.4(0.0)& 47.8(1.4) & 38.8(0.9) & 56.3(3.2) \\
         $50(150)$& 50.7(0.5) & 93.6(7.2)& 48.6(1.8) & 36.9(2.2) & 63.3(1.2) \\
         \hline
    \end{tabular}
    }
    \label{table: robustness k0}
\end{table}

\begin{table}[!t]
    \centering
    \caption{
    Results of robustness study for $\alpha$ in Eq.\eqref{eqn:alpha-tau-model}. The average clustering accuracy and std over three trials are shown.
    }
    \vskip 0.5em
    \scalebox{1}{
    \begin{tabular}{cccccc}
    \hline
         $\alpha$  & Two-Rings & MNIST & CIFAR10 & 20news & SVHN \\
         \hline
         $0$& 67.2(23.2) & 98.7(0.0) & 49.5(0.3) & 38.1(1.8) & 61.4(2.1) \\
         $1$& 100(0) & 97.6(1.1) & 48.4(0.4) & 39.9(3.3) & 57.0(1.5) \\
         $2$& 66.7(23.5) & 97.8(1.3) & 46.6(0.4) & 39.5(2.5) & 57.6(2.4) \\
         \hline
    \end{tabular}
    }
    \label{table: robustness alpha}
\end{table}

\begin{table}[!t]
    \centering
    \caption{
    Results of robustness study for $\gamma$ in MIST objective of Eq.\eqref{eq:rewritten implementing unconstrained objective}.  The average clustering accuracy and std over three trials are shown. In the first column, number outside (resp. inside) of brackets means value of $\gamma$ used for real-world datasets (resp. synthetic dataset). 
    }
    \vskip 0.5em
    \scalebox{1}{
    \begin{tabular}{cccccc}
    \hline
         $\gamma$ & Two-Rings &  MNIST  & CIFAR10  & 20news  & SVHN   \\
         \hline
         $0.1 (1)$  & 83.8(22.9)  & 93.7(7.0) & 49.0(1.5)  & 40.7(1.1) & 52.5(4.6)  \\
         $0.5 (5)$  & 94.4(8.0)  & 98.0(1.0)& 48.7(1.0)  & 35.0(2.3) & 59.6(3.6)  \\
         $1.0 (10)$ & 100(0) &  97.9(1.1) & 46.5(0.8) & 37.4(0.7) & 59.8(3.5)  \\
         \hline
    \end{tabular}
    }
    \label{table: robustness gamma}
\end{table}

\begin{enumerate}
    \item[1)] The number of neighbors, $K_0$, is used in both $\mathcal{T}_{\mathfrak{e}}$ of Definition~\ref{def:process T_e} and $\mathcal{T}_{\mathfrak{g}}$ of Definition~\ref{def:process T_g}. 
    For Two-Rings,  MNIST, CIFAR10, and SVHN (resp. 20news),
    the candidates, $K_0 = 5, 10, 15, 50$ (resp. $K_0 = 10, 50, 100, 150$), are examined. The results are shown in Table~\ref{table: robustness k0}.
    
    \item[2)] The hyper-parameter $\alpha$ is used in the critic function of Eq.\eqref{eqn:alpha-tau-model}. The candidates, $\alpha=0,1,2$, are examined. The results are shown in Table~\ref{table: robustness alpha}.
    
    \item[3)]  The importance weight, $\gamma$, is used for the symmetric InfoNCE in MIST objective Eq.\eqref{eq:rewritten implementing unconstrained objective}.  
    The candidates for real-world datasets (resp. synthetic dataset) are 
    $\gamma=0.1,0.5,1.0$ (resp. $\gamma=1,5,10$). The results are shown in Table~\ref{table: robustness gamma}.
\end{enumerate}
Other hyper-parameters are the same as 
those used in MIST of Table~\ref{tb:results of clustering accuracy}. Details are shown in Table~\ref{tb:selected hyperparameters} of Appendix~\ref{subsec:Hyperparameter Setting with Proposed Method}. 

Firstly, as we can see that for most datasets in Table~\ref{table: robustness k0}, MIST is robust to the change of $K_0$. 
The exception is Two-Rings. The clustering accuracy of MIST with $K_0 = 50$ is much lower than  that with $K_0 = 15$. 
A possible reason is that the K-NN graph with a large $K_0$ has edges connecting two data points belonging to different rings. Therefore, maximization of the symmetric InfoNCE based on such a K-NN graph can negatively affect the clustering performance. 
Secondly, Table~\ref{table: robustness alpha} indicates that for all real-world datasets, MIST is robust to the change of $\alpha$ that controls the intensity of the correlation.   
For Two-Rings, however, the performance of MIST is  sensitive to $\alpha$. 
Finally, Table~\ref{table: robustness gamma} shows that for all   the datasets, MIST is stable to the change of $\gamma$.

\section*{Conclusion}
\label{sec:conclusion-and-future-works}

In this study, 
to achieve the goal described in the end of Section~\ref{subsec:motivation},
we proposed topological invariant constraint, which is based on the symmetric InfoNCE, in Section~\ref{subsec:symmetric infonce}. 
Then,  the theoretical advantages are intensively discussed from a deep clustering point of view in Section~\ref{subsubsec:theoretical properties}. In numerical experiments of Section~\ref{sec:numerical-experiments}, 
the efficiency of topologically invariant constraint is confirmed, using MIST defined by combining the constraint and IMSAT.

Future work will refine the symmetric InfoNCE to have fewer hyper-parameters for better and more robust generalization across datasets.
Also, it is worthwhile to investigate a more advanced transformation function to deal with high-dimensional datasets with complex topology. 
Furthermore, developing an efficient way of incorporating information than the MI will enhance the reliability and prediction performance of deep clustering methods.

\subsection*{Acknowledgments}
This work was supported by Japan Society for the Promotion of Science under KAKENHI Grant Number 17H00764, 19H04071, and 20H00576.

\section*{Appendix}

\appendix
\renewcommand*{\theHsection}{\thesection}
\renewcommand*{\theHsubsection}{\thesubsection}

\section{Review of Related Works}
\label{append:review of related works}

\subsection{Deep Clustering Methods without Number of Clusters}
\label{subappend:Review of Deep Clustering Methods without Number of Clusters}
Except for \textsf{Scenario1} and \textsf{Scenario2} where the number of clusters is given, some authors assume that the number of clusters is not given~\citep{DLNC,yang2016joint,DCULVF,mautz2019deep,9171232}.
For example, in DLNC~\citep{DLNC}, for a given unlabeled dataset, 
the feature is extracted by a deep belief network.
Then, the obtained feature vectors are clustered by NMMC (Nonparametric Maximum Margin Clustering) with the estimated number of clusters. 
In DeepCluster~\citep{DCULVF}, 
starting from an excessive number of clusters, the appropriate number of clusters is estimated.

\subsection{Invariant Information Clustering}
\label{subsec:Invariant Information Clustering}
Given image data $\mathcal{D}=\{x_i\}_{i=1}^n$ and the number of clusters $C$, 
IIC (Invariant Information Clustering)~\citep{ji2019invariant} returns the estimated cluster labels $\{\hat{y}_i\}_{i=1}^n$ using the trained model for clustering. 
The training criterion is based on the maximization of the MI between the cluster label of a raw image and the cluster label of the transformed raw image. 
IIC employs the clustering model of $g_{\theta}(x)$ (see Definition~\ref{def:g_theta}), where a CNN is used so at to take advantages of image-specific prior knowledge.

To be more precise, to learn the parameter $\theta$ of the model, IIC maximizes the MI, $I(Y;Y')$, between random variables $Y$ and $Y'$ that take an element in $\{1,\cdots,C\}$. 
Here, $Y$ denotes the random variable of the cluster label with raw image $X \in \mathcal{X}$.
Let $T:\mathcal{X} \to \mathcal{X}$ be an image-specific transformation function, and then
$Y'$ denotes the random variable of the cluster label for the transformed raw image; $T(X)$.
In IIC, the conditional probability $p(y|x)$ is modeled by $g_{\theta}(x)$. 
During the SGD-based optimization stage, given a mini-batch $\mathcal{B} \subseteq \mathcal{D}$,  
$I(Y;Y')$ is computed as follows: 
\begin{enumerate}
    \item[1)] %
        Define $p\left(y,y'|x, T(x)\right) = g^y_{\theta}(x)g^{y^\prime}_{\theta}(T(x))$, 
        where $y$ and $y'$ are the cluster labels of $x$ and $T(x)$,  respectively.
    \item[2)] Compute $p(y,y')=\frac{1}{|\mathcal{B}|}\sum_{x_i \in \mathcal{B}} g^{y}_{\theta}(x_i)g^{y^\prime}_{\theta}\left(T(x_i)\right)$.
    \item[3)] Define $\bar{p}(y,y')$ as the symmetrized probability 
        $(p(y,y')+p(y',y))/2$. 
    \item[4)] Compute the MI $I(Y;Y')$ from $\bar{p}(y,y')$. 
\end{enumerate}
Then, the parameter $\theta$ of the model is found by maximizing 
$I(Y;Y')$ w.r.t. $\theta$. Note that an appropriate transformation $T$ is obtained using image-specific knowledge, such as scaling, skewing, rotation, flipping, etc.

\subsection{Information Maximization for Self-Augmented Training}
\label{subsec:imsat}
In this section, we introduce IMSAT (Information Maximization for Self-Augmented Training)~\citep{hu2017}. To do so, in Appendix~\ref{subsubsec:vat}, firstly we introduce VAT~\citep{miyato2018virtual}, which is an essential regularizer for IMSAT. Then, we explain the objective of IMSAT in Appendix~\ref{subsubsec:Objective of IMSAT}.

\subsubsection{Virtual Adversarial Training}
\label{subsubsec:vat}

Virtual Adversarial Training is a regularizer forcing the smoothness on a given model in the following sense: 
\begin{equation}
\label{eq:smoothness assumption}
     x_i \approx x_j   \Rightarrow \forall y\in\{1,\cdots,C\};\; g_{\theta}^y(x_i) \approx g_{\theta}^y(x_j),
\end{equation}
where $g_\theta$ is defined by Definition~\ref{def:g_theta}.
It should be emphasized that we can train with VAT without labels. 
Let $D_{KL}\left(p_1\|p_2\right)$ denote the KL  (Kullback–Leibler) divergence~\citep{cover1999elements}  between two probability vectors $p_1 \in \Delta^{C-1}$ and $p_2 \in \Delta^{C-1}$. 
During the SGD based optimization stage, 
given a mini-batch $\mathcal{B} \subseteq \mathcal{D}$, the VAT loss, $R_{\rm vat}(\mathcal{B};\theta)$, is defined as,
\begin{equation}
\label{eq: vat-loss}
    R_{\rm vat}(\mathcal{B};\theta) = \frac{1}{|\mathcal{B}|}\sum_{x_i \in \mathcal{B}} D_{KL}\left(
    g_{\theta_l}(x_i)
    \|
    g_{\theta}(x_i + r_i^{\rm adv})
    \right),
\end{equation}
where $r_i^{\rm adv} = \arg\max_{\|r\|_2\leq\epsilon_i}D_{KL}\left(g_{\theta_l}(x_i)\|g_{\theta_l}(x_i + r)\right)$, and $\theta_l$ is the parameter obtained at the $l$-th update.
The radius $\epsilon_i$ depends on $x_i$, and in practice it is estimated via K-NN graph on $\mathcal{D}$; see~\citet{hu2017} for details. 

The approximated $r_i^{\rm adv}$ can be computed by the following three steps;
\begin{enumerate}
    \item[1)] Generate a random unit-vector $u \in \mathbb{R}^d$,
    \item[2)] Compute $v_i = \nabla_r 
    D_{KL}\left(g_{\theta_l}(x_i)\|g_{\theta_l}(x_i + r)\right)
    |_{r=\xi u}$ using the back-propagation,
    \item[3)] $r_i^{\rm adv} = \epsilon_i v_i / \|v_i\|_2$,
\end{enumerate}
where $\xi>0$ is a small positive value.

\subsubsection{Objective of IMSAT}
\label{subsubsec:Objective of IMSAT}
Given $\mathcal{D}=\{x_i\}_{i=1}^n$ and the number of clusters~$C$, IMSAT  provides estimated cluster labels, $\{\hat{y}_i\}_{i=1}^n,\, \hat{y}_i\in\{1,\cdots,C\}$, using $g_{\theta}(x)$ of Definition~\ref{def:g_theta} (statistical model for clustering). 
In IMSAT,  $g_{\theta}(x)$ 
is the simple MLP with the structure $d-1200-1200-C$.  
Using the trained model $g_{\theta^\ast}(x)$, 
we have 
$\hat{y}_i = {\rm argmax}_{y\in\{1,\cdots,C\}}g_{\theta^\ast}^y(x_i)$. 

As for training criterion of the parameter $\theta$, 
IMSAT maximizes the MI, $I(X;Y)$, with the VAT regularization. %
In order to compute $I(X ; Y)$, we assume the following two assumptions: 
1) the conditional probability $p(y|x)$ is modeled by $g_{\theta}(x)$, and 2) the marginal probability $p(x)$ is approximated by the uniform distribution on $\mathcal{D}$.
Thereafter, $I(X;Y)$ is decomposed into 
$I(X ; Y)= H(Y) - H(Y|X)$.
Here $H(Y)$ is Shannon entropy
and $H(Y|X)$ is the conditional entropy~\citep{cover1999elements}.
During the SGD-based optimization, 
given a mini-batch $\mathcal{B} \subseteq \mathcal{D}$, $H(Y)$ and  $H(Y|X)$ are respectively computed as follows:
\begin{equation}
\label{eq:imasat of conditional and mariginal with y}
    -\sum_{y=1}^C p_{\theta}(y)\log p_{\theta}(y)
    \;{\rm and}\;
    -\frac{1}{|\mathcal{B}|}\sum_{x_i \in \mathcal{B}}\sum_{y=1}^C g_{\theta}^y(x_i)
    \log g_{\theta}^y(x_i),
\end{equation}
where $p_{\theta}(y)$ is the approximate marginal probability, 
$\frac{1}{|\mathcal{B}|}\sum_{x_i \in \mathcal{B}}g_{\theta}^y(x_i)$.
The parameter $\theta$ of the model is found by solving the following minimization problem, 
\begin{equation}
\label{eq:practical imsat objective}
     {\rm min}_{\theta}\left\{R_{\rm vat}(\mathcal{B};\theta) - \mu \left( \eta H(Y) - H(Y|X) \right)\right\},
\end{equation}
where $\mu$ and $\eta$ are positive hyper-parameters.

\section{Proofs for Section~\ref{sec:proposed-constraint}}
\label{append:proof for section 3}

\subsection{Proof of Proposition~\ref{prop: relation between mi and symmetric infonce and optimal critic}}
\label{append: proof of relation between mi and symmetric infonce and optimal critic}
From the definition of the MI, $I(Z;Z') = I(Z';Z)$ holds. In addition, we have $I(Z;Z') \geq I_{\mathrm{nce},q}(Z;Z')$ and $I(Z';Z) \geq I_{\mathrm{nce},q}(Z';Z)$ for any function $q$. Therefore, the following inequality holds:
\begin{equation*}
    \begin{split}
        &I(Z;Z') = \frac{I(Z;Z') + I(Z';Z)}{2} \\
        &\geq \frac{I_{\mathrm{nce},q}(Z;Z') + I_{\mathrm{nce},q}(Z';Z)}{2}.
    \end{split}
\end{equation*}

Next, check the optimality. In order to do so, let us review the following inequality~\citep{pmlr-v97-poole19a}: %
\begin{align*}
 I_{\mathrm{nce},q}(Z;Z') 
 &=
 \mathbb{E}\left[ \log\frac{p(Z') e^{q(Z,Z')}}{\mathbb{E}_{Z'}[e^{q(Z,Z')}]} -\log p(Z')\right] \\ 
 &= 
 \mathbb{E}\left[ \log\frac{p(Z') e^{q(Z,Z')}}{\mathbb{E}_{Z'}[e^{q(Z,Z')}]}\right]+H(Z')\\
 &\leq 
 \mathbb{E}\left[\log p(Z'|Z)\right]+H(Z') \\
 &=
 \mathbb{E}_{p(Z,Z')}\left[\log \frac{p(Z,Z')}{p(Z)p(Z')}\right].
\end{align*}
The last inequality comes from the non-negativity of the KL-divergence. 
Therefore, for any $q$, InfoNCE provides a lower bound of $I(Z;Z')$. 
The equality holds if 
\begin{equation}
    \label{eq:optimal critic for infonce}
    q(z,z')= \log p(z|z') + [\text{function of $z$}].
\end{equation}
Thus, if $q$ satisfies $q(z,z')=q(z', z)$, i.e., $\log p(z|z')+h_0(z) = \log p(z'|z)+h_1(z')$ for some function $h_0(z)$ and $h_1(z')$, then the equality between the symmetric InfoNCE and $I(Z;Z')$ holds. As a result, the critic $q$, which is defined as $q(z,z')=\log{\frac{p(z,z')}{p(z)p(z')}}+c,\,c\in\mathbb{R}$, is the optimal critic.

\subsection{Proof of Proposition~\ref{prop:proposed vs iic}}
\label{append: proof of iic vs proposed}
Let us introduce data processing inequality.
Suppose that the random variables $X, Z$ are conditionally independent for a given $Y$. This situation is expressed by 
\begin{align*}
    X \leftrightarrow Y \leftrightarrow Z. 
\end{align*}
Under the above assumption, the data processing inequality
$I(X;Y) \geq I(X;Z)$
holds for the MI. In our formulation, 
the pair of random variables, $X$ and $X'$, is transformed 
to the conditional probabilities,  $p(\cdot|X)=g_{\theta}(X)$ and $p(\cdot|X')=g_{\theta}(X')$, on the $C$-dimensional simplex $\Delta^{C-1}$. 
Then, the cluster label $Y$ (resp. $Y'$) is assumed to be generated from $p(\cdot|X)$ (resp. $p(\cdot|X')$). 
This data generation process satisfies the following relationship: 
\begin{align*}
Y   \leftrightarrow  
p(\cdot \mid X)  
\leftrightarrow    
(X, X^{\prime})  
 \leftrightarrow  p(\cdot \mid X^{\prime}) 
 \leftrightarrow Y^{\prime}. 
\end{align*}
Therefore, for $X'=T(X)$, the data processing inequality leads to 
\begin{align*}
I(Y;Y')
\leq
I(p(\cdot \mid X); p(\cdot \mid X^{\prime})) = I(g_{\theta}(X);g_{\theta}(T(X))). 
\end{align*}

\subsection{Estimation Error of the Symmetric InfoNCE}
\label{append:Estimation Error of InfoNCE}
The symmetric InfoNCE provides an approximation of MI. Here, let us theoretically investigate the estimation error rate of the symmetric InfoNCE with a learnable critic.

Suppose we have training data $x_1,\ldots,x_n$ and their perturbation, $x_i':=t_i(x_i),\,i\in[n]:=\{1,\ldots,n\}$, 
where $t_i$ is a randomly generated map. We assume that $t_1,\ldots,t_n$ are i.i.d.
Recall that the empirical approximation of the InfoNCE loss $I_{\mathrm{nce},q}$ is given by
\begin{align*}
 \widehat{I}_{\mathrm{nce},q}(\theta)
 = \frac{1}{n}\sum_{i}q(g_\theta(x_i),g_\theta(x_i')) 
 - \frac{1}{n}\sum_{i}\log\bigg(\frac{1}{n}\sum_{j}e^{q(g_\theta(x_i),g_\theta(x_j'))}\bigg). 
\end{align*}
The symmetric InfoNCE is defined as $(I_{\textup{nce},q}+I_{\textup{nce},q}')/2$ and its empirical approximation is
\begin{align*}
 &\frac{\widehat{I}_{\mathrm{nce},q}(\theta)+\widehat{I}_{\mathrm{nce},q}'(\theta)}{2} \\
 &= \frac{1}{n}\sum_{i}q(g_\theta(x_i),g_\theta(x_i'))
 - \frac{1}{2n}\Bigg(\sum_{i}\log\bigg(\frac{1}{n}\sum_{j}e^{q(g_\theta(x_i),g_\theta(x_j'))}\bigg) \\ 
 &\;\;\;\;\;\;\;\;\;\;\;\;\;\;\;\;\;\;\;\;\;\;\;\;\;\;\;\;\;\;\;\;\;\;\;\;\;\;\;\;\;\;\;\;\;\;\;\;\;\;\;\;\;\;\;\;\;\;\;\;\;+\sum_{i}\log\bigg(\frac{1}{n}\sum_{j}e^{q(g_\theta(x_i'),g_\theta(x_j))}\bigg)\Bigg).
\end{align*}
Let $I_{\mathrm{sym\_nce},q}$ and $\widehat{I}_{\mathrm{sym\_nce},q}$ denote the symmetric InfoNCE and the empirical approximation,  respectively.
Let $\mathcal{Q}$ be a set of critics. The MI is approximated by
\begin{align*}
I_{\mathcal{Q}}(\theta)=\sup_{q\in\mathcal{Q}}I_{\mathrm{sym\_nce},q}(\theta). 
\end{align*}
The empirical approximation of $I_{\mathcal{Q}}(\theta)$
is given by 
$\widehat{I}_{\mathcal{Q}}(\theta)=\sup_{q\in\mathcal{Q}}\widehat{I}_{\mathrm{sym\_nce},q}(\theta)$. 
Then, the parameter $\widehat{\theta}$ of the model is given by the maximizer of $\widehat{I}_{\mathcal{Q}}(\theta)$, i.e.,
\begin{align*}
 \max_{\theta\in\Theta}\widehat{I}_{\mathcal{Q}}(\theta)\ \longrightarrow\ \widehat{\theta}. 
\end{align*}
Let $I(\theta)$ be the mutual information between $g_\theta(X)$ and $g_\theta(X')$. 
The maximizer of $I(\theta)$ (resp. $I_{\mathcal{Q}}(\theta)$) 
is denoted by $\theta^*$ (resp. $\theta_{\mathcal{Q}}\in\Theta$). 

We evaluate the mutual information at $\widehat{\theta}$, i.e., $I(\widehat{\theta})$. 
From the definition, we have 
\begin{align}
\label{eqn:eval-MI}
 0\leq I(\theta^*)  -  I(\widehat{\theta}) 
& \leq 
 I(\theta^*)  -  I_{\mathcal{Q}}(\widehat{\theta})  
 \leq 
 \underbrace{I(\theta^*)  -   I_{\mathcal{Q}}(\theta_{\mathcal{Q}})}_{\text{approximation error}\geq0}
 + 
 \underbrace{I_{\mathcal{Q}}(\theta_{\mathcal{Q}})- I_{\mathcal{Q}}(\widehat{\theta})}_{\text{estimation error}\geq0}. 
\end{align}
We consider the estimation error bound. The optimality of $\widehat{\theta}$ leads to 
\begin{align}
 0&\leq I_{\mathcal{Q}}(\theta_{\mathcal{Q}})- I_{\mathcal{Q}}(\widehat{\theta})
 \leq 
 I_{\mathcal{Q}}(\theta_{\mathcal{Q}})
 - \widehat{I}_{\mathcal{Q}}(\theta_{\mathcal{Q}})
 +\widehat{I}_{\mathcal{Q}}(\theta_{\mathcal{Q}})
 -\widehat{I}_{\mathcal{Q}}(\widehat{\theta})
 +\widehat{I}_{\mathcal{Q}}(\widehat{\theta})
 -I_{\mathcal{Q}}(\widehat{\theta})  \nonumber\\
\label{eqn:sup-BD}
& \leq 
 I_{\mathcal{Q}}(\theta_{\mathcal{Q}})
 - \widehat{I}_{\mathcal{Q}}(\theta_{\mathcal{Q}})
 +\widehat{I}_{\mathcal{Q}}(\widehat{\theta})
 -I_{\mathcal{Q}}(\widehat{\theta})
 \leq 2\sup_{\theta\in\Theta}|I_{\mathcal{Q}}(\theta) - \widehat{I}_{\mathcal{Q}}(\theta)|. 
\end{align}
Let us evaluate the worst-case gap between $I_{\mathcal{Q}}(\theta)$ and
$\widehat{I}_{\mathcal{Q}}(\theta)$: 
\begin{align*}
 I_{\mathcal{Q}}(\theta) - \widehat{I}_{\mathcal{Q}}(\theta)
 &=
 \sup_{q}\inf_{q'}I_{\mathrm{sym\_nce},q}(\theta) - \widehat{I}_{\mathrm{sym\_nce},q'}(\theta) \\
 &\leq 
 \sup_{q}I_{\mathrm{sym\_nce},q}(\theta) - \widehat{I}_{\mathrm{sym\_nce},q}(\theta). 
\end{align*}
Likewise, we have 
$\widehat{I}_{\mathcal{Q}}(\theta)- I_{\mathcal{Q}}(\theta)\leq  \sup_{q}\widehat{I}_{\mathrm{sym\_nce},q}(\theta)-I_{\mathrm{sym\_nce},q}(\theta)$. 
Therefore, 
\begin{align*}
 \sup_{\theta\in\Theta}|I_{\mathcal{Q}}(\theta) - \widehat{I}_{\mathcal{Q}}(\theta)|
 \leq 
 \sup_{\theta\in\Theta,q\in\mathcal{Q}}|I_{\mathrm{sym\_nce},q}(\theta) - \widehat{I}_{\mathrm{sym\_nce},q}(\theta)|. 
\end{align*}
To derive the convergence rate, 
we use the Uniform Law of Large Numbers (ULLN) \citep{mohri18:_found_machin_learn} to the following function classes, 
\begin{align*}
 \mathcal{G}&=\{(x,t)\mapsto q(g_\theta(x),g_\theta(t(x)))\,:\,\theta\in\Theta,\ q\in\mathcal{Q}\}, \\
 \exp\circ\mathcal{G}&=\{(x,t)\mapsto \exp\{q(r, g_\theta(t(x)))\}\,:\,\theta\in\Theta,\ q\in\mathcal{Q},\, r\in\Delta^{C-1}\}. 
\end{align*}
Suppose that the model $(g_{\theta}^{y})_{y\in[C]}$ with any permutation of cluster label is realized by the other parameter $\theta'$. 
For instance, when $C=2$, for any $\theta$ there exists $\theta'$ such that
$(g_{\theta}^2, g_{\theta}^1)=(g_{\theta'}^1, g_{\theta'}^2)$ holds. 
Then, let us define the following function class $\mathcal{N}$ 
by
\begin{align*}
\mathcal{N}=\{x\mapsto g_{\theta,1}(x)\,:\,\theta\in\Theta\}, 
\end{align*}
where $g_{\theta,1}$ is the first element of $g_{\theta}$. 
We evaluate the estimation error bound in terms of 
the Rademacher complexity of $\mathcal{N}$. 
See \citet{bartlett02:_radem_gauss_compl,mohri18:_found_machin_learn}
for details of Rademacher complexity.

We assume the following conditions: 
\begin{enumerate}
 \item[(A)] Any $q(r,r')$ in $\mathcal{Q}$ 
	    is expressed as $q(r,r')=\phi_q(r^{\top}r')$ for $r,r'\in\Delta^{C-1}$, 
	    where $\phi_q:[0,1]\rightarrow[a,b]$. 
	    We assume that the range of $\phi_q$ is uniformly bounded in the interval $[a,b]$. 
 \item[(B)] The Lipschitz constant $\|\phi_q\|_{\mathrm{Lip}}$ of $\phi_q$ is uniformly bounded, i.e., 
	    $$\sup_{q\in\mathcal{Q}}\|\phi_q\|_{\mathrm{Lip}}\leq L<\infty.$$
\end{enumerate}
We consider the Rademacher complexity of $\mathcal{G}$ and $\exp\circ\mathcal{G}$. 
Let $\sigma_i, i\in[n]$ be i.i.d. Rademacher random variables. 
Given $D=\{(x_i,x_i'),\,i\in[n]\}$, the empirical Rademacher complexity is 
\begin{align*}
 \widehat{\mathfrak{R}}_D(\mathcal{G})
 &=
 \mathbb{E}_{\sigma}\bigg[\sup_{\theta\in\Theta,q\in\mathcal{Q}}
\frac{1}{n} \sum_{i}\sigma_iq(g_\theta(x_i),g_\theta(x_i))\bigg] \\
 &\leq 
 \mathbb{E}_{\sigma}\bigg[ \sup_{\theta\in\Theta,q\in\mathcal{Q},r\in\Delta^{C-1}}
 \frac{1}{n}\sum_{i}\sigma_iq(r,g_\theta(x_i))\bigg],\\
 \widehat{\mathfrak{R}}_D(\exp\circ\mathcal{G})
 &=
 \mathbb{E}_{\sigma}\bigg[\sup_{\theta\in\Theta,q\in\mathcal{Q},r\in\Delta^{C-1}}
 \frac{1}{n}\sum_{i}\sigma_i\exp\{q(r,g_\theta(x_i))\}\bigg] \\
 &\leq
 e^{b} \mathbb{E}_{\sigma}\bigg[\sup_{\theta\in\Theta,q\in\mathcal{Q},r\in\Delta^{C-1}}
\frac{1}{n}\sum_{i}\sigma_iq(r,g_\theta(x_i))\bigg]. 
\end{align*}
The inequality in the second line is obtained by Talagrand's 
lemma~\citep{mohri18:_found_machin_learn}. 
Due to the assumption on the function $q(r,r')$, 
for $g_\theta = (g_{\theta,1},\ldots,g_{\theta,C})\in\Delta^{C-1}$ we have 
\begin{align*}
 \mathbb{E}_{\sigma}\bigg[\sup_{\theta,q,r}\frac{1}{n}\sum_{i}\sigma_iq(r,g_\theta(x_i))\bigg]
&=
 \mathbb{E}_{\sigma}\bigg[\sup_{\theta,q,r}\frac{1}{n}\sum_{i}\sigma_i\phi_q(r^\top g_\theta(x_i))\bigg] \\
&\leq
 L\mathbb{E}_{\sigma}\bigg[\sup_{\theta,r}\frac{1}{n}\sum_{i}\sigma_i r^\top g_\theta(x_i)\bigg]\\
&=
 L\mathbb{E}_{\sigma}\bigg[\sup_{\theta}\max_{c\in[C]}\frac{1}{n}\sum_{i}\sigma_i g_{\theta,c}(x_i)\bigg] \\
 &=
 L\mathbb{E}_{\sigma}\bigg[\sup_{\theta}\frac{1}{n}\sum_{i}\sigma_i g_{\theta,1}(x_i)\bigg].
\end{align*}
In the last inequality, again Talagrand's lemma is used. 
Note that since we deal with a general case in which the probability distribution of $x_i$ and $x_i'$ may not be equal to each other, it is worth considering a counterpart w.r.t. the probability distribution of $x_i$, i.e., we have
\begin{align*}
 \widehat{\mathfrak{R}}_D(\mathcal{G})
 &=
 \mathbb{E}_{\sigma}\bigg[\sup_{\theta\in\Theta,q\in\mathcal{Q}}
\frac{1}{n} \sum_{i}\sigma_iq(g_\theta(x_i),g_\theta(x_i'))\bigg] \\
 &\leq 
 \mathbb{E}_{\sigma}\bigg[ \sup_{\theta\in\Theta,q\in\mathcal{Q},r\in\Delta^{C-1}}
 \frac{1}{n}\sum_{i}\sigma_iq(r,g_\theta(x_i'))\bigg],\\
 \widehat{\mathfrak{R}}'_D(\exp\circ\mathcal{G})
 &=
 \mathbb{E}_{\sigma}\bigg[\sup_{\theta\in\Theta,q\in\mathcal{Q},r\in\Delta^{C-1}}
 \frac{1}{n}\sum_{i}\sigma_i\exp\{q(r,g_\theta(x_i'))\}\bigg] \\
 &\leq
 e^{b} \mathbb{E}_{\sigma}\bigg[\sup_{\theta\in\Theta,q\in\mathcal{Q},r\in\Delta^{C-1}}
\frac{1}{n}\sum_{i}\sigma_iq(r,g_\theta(x_i'))\bigg],
\end{align*}
and,
\begin{align*}
 \mathbb{E}_{\sigma}\bigg[\sup_{\theta,q,r}\frac{1}{n}\sum_{i}\sigma_iq(r,g_\theta(x_i'))\bigg]
&=
 \mathbb{E}_{\sigma}\bigg[\sup_{\theta,q,r}\frac{1}{n}\sum_{i}\sigma_i\phi_q(r^\top g_\theta(x_i'))\bigg] \\
&\leq
 L\mathbb{E}_{\sigma}\bigg[\sup_{\theta,r}\frac{1}{n}\sum_{i}\sigma_i r^\top g_\theta(x_i')\bigg]\\
&=
 L\mathbb{E}_{\sigma}\bigg[\sup_{\theta}\max_{c\in[C]}\frac{1}{n}\sum_{i}\sigma_i g_{\theta,c}(x_i')\bigg] \\
 &=
 L\mathbb{E}_{\sigma}\bigg[\sup_{\theta}\frac{1}{n}\sum_{i}\sigma_i g_{\theta,1}(x_i')\bigg].
\end{align*}
For our purpose it is sufficient to find the Rademacher complexity $\mathfrak{R}_n(\mathcal{N})$ (resp. $\mathfrak{R}_n'(\mathcal{N})$) of $\mathcal{N}$ w.r.t. the probability distribution of $x_i$ (resp. $x_i'$). 
For the standard neural network models, both the Rademacher complexities $\mathfrak{R}_n(\mathcal{N})$ and $\mathfrak{R}_n'(\mathcal{N})$ are of the order of $n^{-1/2}$
and the coefficient depends on the maximum norm of the weight~\citep{shalev2014understanding}. 
From the above calculation, %
we have 
\begin{align*}
\mathfrak{R}_n(\mathcal{G})\leq c\, \mathfrak{R}_n(\mathcal{N}),\quad  \mathfrak{R}_n(\exp\circ\mathcal{G})\leq c\, \mathfrak{R}_n(\mathcal{N}), 
\end{align*}
where $c$ is a positive constant depending on $b$ and $L$.
Note that the same argument holds for $\mathfrak{R}_n'(\mathcal{N})$.
In the below, $c$ is a positive constant that can be different line by line. 
Furthermore, let us evaluate the Rademacher complexity of the function set
\begin{align*}
 x\ \longmapsto\ \log\mathbb{E}_{X'}\big[e^{q(g_\theta(x)^\top g_{\theta}(X'))}\big]. 
\end{align*}
We use the upper bound of $\mathfrak{R}_n(\exp\circ\mathcal{G})$. 
The logarithmic function is Lipschitz continuous on the bounded interval $[e^a, e^b]$ and Lipschitz constant is bounded above by 
$e^{-a}$ on the interval. The empirical Rademacher complexity is given by 
\begin{align}
&\mathbb{E}_\sigma
 \bigg[ \sup_{\theta,q}\frac{1}{n}\sum_{i=1}^{n}\sigma_i \log\mathbb{E}_{X'}\bigg[e^{q(g_\theta(x_i)^\top g_{\theta}(X'))}\bigg] \bigg] \nonumber\\
& \leq 
e^{-a}\mathbb{E}_\sigma
 \bigg[ \mathbb{E}_{X'}\bigg[\sup_{\theta,q} \frac{1}{n}\sum_{i=1}^{n}\sigma_i e^{q(g_\theta(x_i)^\top g_{\theta}(X'))}\bigg] \bigg]
 \nonumber\\
& \leq 
e^{-a} \mathbb{E}_\sigma \bigg[\sup_{\theta,q,r} \frac{1}{n}\sum_{i=1}^{n}\sigma_i e^{q(g_\theta(x_i)^\top r)}\bigg]  \nonumber\\
\label{eqn:bound_log-exp}
& \leq  
Le^{b-a}\,\mathbb{E}_\sigma\bigg[\sup_{\theta} \frac{1}{n}\sum_{i=1}^{n}\sigma_i g_{\theta,1}(x_i)\bigg].
\end{align}
From the above calculation, the following theorem holds. 
\begin{thm}
 \label{thm:estimation_BD}
 Assume the condition (A) and (B). 
 Let us define $\varepsilon_{\delta,n}^{\mathcal{N}}$ as
 \begin{align*}
  \varepsilon_{\delta,n}^{\mathcal{N}}=\frac{1}{2}\left(\mathfrak{R}_n(\mathcal{N}) + \mathfrak{R}_n'(\mathcal{N}) \right) + \sqrt{\frac{\log(1/\delta)}{n}}, 
 \end{align*}
 where $\mathfrak{R}_n(\mathcal{N})$ (resp. $\mathfrak{R}_n'(\mathcal{N})$) is the Rademacher complexity of $\mathcal{N}$ for $n$ samples following the probability distribution of $x_i$ (resp. the probability distribution of $x_i'$). 
 Then, with the probability greater than $1-\delta$, we have 
\begin{align*}
I_{\mathcal{Q}}(\theta_{\mathcal{Q}})- I_{\mathcal{Q}}(\widehat{\theta}) \leq c\,\varepsilon_{\delta,n}^{\mathcal{N}}, 
\end{align*}
 where $c$ is a positive constant depending on $a,b$ and $L$. 
\end{thm}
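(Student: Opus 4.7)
The plan is to follow the reduction already laid out in the text before the theorem and then close the estimate by combining symmetrization with Talagrand's contraction lemma. Inequalities (\ref{eqn:eval-MI}) and (\ref{eqn:sup-BD}) already show that the estimation-error part of $I(\theta^*)-I(\widehat{\theta})$ is bounded by
\[
2\sup_{\theta\in\Theta,\,q\in\mathcal{Q}}\bigl|I_{\mathrm{sym\_nce},q}(\theta)-\widehat{I}_{\mathrm{sym\_nce},q}(\theta)\bigr|,
\]
so the task reduces to a uniform deviation inequality for the symmetric InfoNCE functional. I would split this functional into its two natural pieces: the linear term $\mathbb{E}[q(g_\theta(X),g_\theta(T(X)))]$ and the symmetric log-mean-exp term. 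Both are bounded by assumption (A), so for each $(\theta,q)$ the empirical estimators are bounded-difference functionals of the i.i.d.\ sample $\{(x_i,x_i')\}_{i=1}^n$.

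Next I would apply McDiarmid's bounded differences inequality to obtain, with probability at least $1-\delta$, the high-probability term $c\sqrt{\log(1/\delta)/n}$ in the bound, and then replace the expected supremum by the Rademacher complexities of the function classes $\mathcal{G}$ and $\exp\circ\mathcal{G}$ via the standard symmetrization argument. For the linear term the symmetrized empirical process is exactly $\widehat{\mathfrak{R}}_D(\mathcal{G})$; for the log-mean-exp term I would first use the Lipschitz property of $u\mapsto\log u$ on the interval $[e^a,e^b]$ (Lipschitz constant $e^{-a}$) together with Talagrand's contraction lemma, yielding a bound in terms of $\widehat{\mathfrak{R}}_D(\exp\circ\mathcal{G})$, precisely the calculation displayed in (\ref{eqn:bound_log-exp}).

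Then I would chain Talagrand contractions twice more, as indicated in the text: once to strip off $\exp$ (Lipschitz on $[a,b]$ with constant $e^b$), and once to strip off $\phi_q$ (Lipschitz constant uniformly bounded by $L$ by assumption (B)). After taking the supremum over the one-hot direction $r\in\Delta^{C-1}$ via the identity $\sup_{r\in\Delta^{C-1}}r^\top g_\theta(x)=\max_{c\in[C]}g_{\theta,c}(x)$ and one further application of Talagrand (symmetry over the coordinate index, using the permutation-invariance of $\{g_{\theta,c}\}$), the bound collapses to a constant multiple of $\mathfrak{R}_n(\mathcal{N})$ for the $X$-sample and $\mathfrak{R}_n'(\mathcal{N})$ for the $X'$-sample. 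Averaging these two contributions yields $\tfrac12(\mathfrak{R}_n(\mathcal{N})+\mathfrak{R}_n'(\mathcal{N}))$, giving the first part of $\varepsilon_{\delta,n}^{\mathcal{N}}$; combined with the concentration term one obtains the claimed bound with a constant $c$ depending only on $a$, $b$, and $L$.

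The main obstacle is the log-mean-exp term, because its second expectation inside the logarithm mixes $x_i$ with all of $x_1',\ldots,x_n'$, so it is not a sum of i.i.d.\ summands in the natural way. The workaround is exactly the one indicated in (\ref{eqn:bound_log-exp}): bound it pointwise by the supremum over a deterministic surrogate vector $r\in\Delta^{C-1}$ after pulling the $\log$ outside via its Lipschitz constant on $[e^a,e^b]$, which decouples the inner average and reduces everything to the Rademacher complexity of $\mathcal{N}$. The rest of the argument is routine: uniform concentration, symmetrization, and two or three successive applications of the contraction lemma.
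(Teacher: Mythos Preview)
Your proposal is correct and follows essentially the same route as the paper: reduce via (\ref{eqn:sup-BD}) to a uniform deviation bound, split the symmetric InfoNCE into the linear term and the two log-mean-exp terms, then combine bounded-differences concentration with symmetrization and the chain of Talagrand contractions (log, exp, $\phi_q$) together with the $r\in\Delta^{C-1}$ decoupling trick from (\ref{eqn:bound_log-exp}). One small terminological point: the reduction from $\sup_\theta\max_{c}g_{\theta,c}$ to $\sup_\theta g_{\theta,1}$ is not another Talagrand step but a direct equality from the permutation-closure assumption on the model class, exactly as you parenthetically note.
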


\begin{proof}
The proof of Theorem~\ref{thm:estimation_BD} is the following. 
From the definition of the symmetric InfoNCE, we have 
\begin{align*}
&\sup_{\theta}|I_{\mathcal{Q}}(\theta)-\widehat{I}_{\mathcal{Q}}(\theta)| \\
&\leq 
 \sup_{\theta,q}|I_{\mathrm{sym\_nce},q}(\theta)-\widehat{I}_{\mathrm{sym\_nce},q}(\theta)|\\
&\leq 
 \sup_{\theta,q}
\bigg|\frac{1}{n} \sum_{i=1}^{n}q(g_\theta(x_i),g_\theta(x_i')) - \mathbb{E}[q(g_\theta(X),g_\theta(X'))]\bigg|\\
&\;\;\;\;\;\;\;\;+
 \frac{1}{2}\sup_{\theta,q} \bigg|
 \frac{1}{n}\sum_{i=1}^{n}\log\frac{1}{n}\sum_{j=1}^{n}e^{q(g_{\theta}(x_i),g_\theta(x_j'))}
 -
 \mathbb{E}_{X}\log\mathbb{E}_{X'}\big[e^{q(g_\theta(X)^\top g_{\theta}(X'))}\big]\bigg|\\
&\;\;\;\;\;\;\;\;\;\;\;\;+
 \frac{1}{2}\sup_{\theta,q} \bigg|
 \frac{1}{n}\sum_{i=1}^{n}\log\frac{1}{n}\sum_{j=1}^{n}e^{q(g_{\theta}(x_i'),g_\theta(x_j))}
 -
 \mathbb{E}_{X'}\log\mathbb{E}_{X}\big[e^{q(g_\theta(X')^\top g_{\theta}(X))}\big]\bigg|. 
\end{align*}
From the Rademacher complexity of $\mathfrak{R}_n(\mathcal{G})$, the first term in the above is bounded above by 
$\varepsilon_{n,\delta}^{\mathcal{N}}$ up to a positive constant. 
Next, 
let us define the following:
\begin{align*}
    \varepsilon_{n,\delta}^{\mathcal{N},1}=\mathfrak{R}_n(\mathcal{N})+\sqrt{\frac{\log{(1/\delta)}}{n}}
    \quad
    \varepsilon_{n,\delta}^{\mathcal{N},2}= \mathfrak{R}_n'(\mathcal{N})+\sqrt{\frac{\log{(1/\delta)}}{n}}.
\end{align*}
It is clear that $2\varepsilon_{n,\delta}^{\mathcal{N}}=\varepsilon_{n,\delta}^{\mathcal{N},1}+\varepsilon_{n,\delta}^{\mathcal{N},2}$.
Then, the ULLN with the upper bound of Eq.\eqref{eqn:bound_log-exp} leads to the following with the probability greater than $1-\delta/4$ that
\begin{align*}
\sup_{\theta,q} \bigg|
 \frac{1}{n}\sum_{i=1}^{n}\log\mathbb{E}_{X'}\big[e^{q(g_\theta(x_i),g_{\theta}(X'))}\big]
 -
 \mathbb{E}_{X}\log\mathbb{E}_{X'}\big[e^{q(g_\theta(X),g_{\theta}(X'))}\big]
 \bigg| \leq c\,\varepsilon_{n,\delta}^{\mathcal{N},1}. 
\end{align*}
Hence, with the probability greater thatn $1-\delta/2$ we have
\begin{align*}
&\phantom{\leq}
 \bigg|
 \frac{1}{n}\sum_{i=1}^{n}\log\frac{1}{n}\sum_{j=1}^{n}e^{q(g_{\theta}(x_i),g_\theta(x_j'))}
 - \mathbb{E}_{X}\log\mathbb{E}_{X'}\big[e^{q(g_\theta(X)^\top g_{\theta}(X'))}\big]
\bigg|\\
&\leq 
 \bigg|
  \frac{1}{n}\sum_{i=1}^{n}\log\frac{1}{n}\sum_{j=1}^{n}e^{q(g_{\theta}(x_i),g_\theta(x_j'))}
 -\frac{1}{n}\sum_{i=1}^{n}\log\mathbb{E}_{X'}\big[e^{q(g_\theta(x_i), g_{\theta}(X'))}\big]
\bigg|\\
&\;\;\;\;\;\;\;\;\;\;\;\;+\bigg|
\frac{1}{n}\sum_{i=1}^{n}\log\mathbb{E}_{X'}\big[e^{q(g_\theta(x_i)^\top g_{\theta}(X'))}\big]
 -
 \mathbb{E}_{X}\log\mathbb{E}_{X'}\big[e^{q(g_\theta(X),g_{\theta}(X'))}\big]
\bigg|\\
&\leq 
 e^{-a}
 \frac{1}{n}\sum_{i=1}^{n}
 \bigg|
 \frac{1}{n}\sum_{j=1}^{n}e^{q(g_{\theta}(x_i),g_\theta(x_j'))}
 -
 \mathbb{E}_{X'}\big[e^{q(g_\theta(x_i),g_{\theta}(X'))}\big]\bigg|
 +c\,\varepsilon_{n,\delta}^{\mathcal{N},1}\\
&\leq 
 e^{-a}
 \sup_{r\in\Delta^{C-1}}
 \bigg|
 \frac{1}{n}\sum_{j=1}^{n}e^{q(r,g_\theta(x_j'))}
 -
 \mathbb{E}_{X'}\big[e^{q(r, g_{\theta}(X'))}\big]\bigg|
 +c\,\varepsilon_{n,\delta}^{\mathcal{N},1}\\
& \leq
c\,\varepsilon_{n,\delta}^{\mathcal{N}}. 
\end{align*}
Similarly, with the probability greater than $1-\delta/2$ we have
\begin{align*}
 \bigg|
 \frac{1}{n}\sum_{i=1}^{n}\log\frac{1}{n}\sum_{j=1}^{n}e^{q(g_{\theta}(x_i'),g_\theta(x_j))}
 - \mathbb{E}_{X'}\log\mathbb{E}_{X}\big[e^{q(g_\theta(X')^\top g_{\theta}(X))}\big]
 \bigg|
 \leq c\,\varepsilon_{n,\delta}^{\mathcal{N}}.
\end{align*}
Eventually, the worst-case error $\sup_{\theta}|I_{\mathcal{Q}}(\theta)-\widehat{I}_{\mathcal{Q}}(\theta)|$
is bounded above by $\varepsilon_{n,\delta}^{\mathcal{N}}$ up to a positive factor. 
The above bound with inequalities in Eq.\eqref{eqn:eval-MI} and \eqref{eqn:sup-BD} lead to the conclusion. \qed
\end{proof}

We then show that the critic functions defined in Eq.\eqref{eqn:alpha-tau-model} satisfy both the condition (A) and (B).
Recall the definition of the critic functions:
\begin{align*}
    q(z, z') = 
    \log\left(\exp_{\alpha}\big(\tau(z^\top z'-1)\big)\right),\quad\mathrm{where}\quad \alpha\in\mathbb{R},\quad \tau\geq 0.
\end{align*}

\begin{lem}
 \label{lem:properties_of_critics}
 Given real values $l\geq 0$, $w> 0$, $0<d<1$ and $s<0$, define $\Xi=\{(\alpha,\tau)\in\mathbb{R}\times\mathbb{R}_{\geq 0}\,:\, \tau\leq l,\,w\leq |\alpha-1|,\,s\leq (1-\alpha)\tau\leq 1-d\}\cup\{(1,\tau)\,:\,0\leq \tau\leq l\}$, $\phi_{(\alpha,\tau)}=\log\left(\exp_{\alpha}\big(\tau(z^\top z'-1)\big)\right)$ and $\mathcal{Q}=\{q:\Delta^{C-1}\times\Delta^{C-1}\to\mathbb{R}\,:\,\exists (\alpha,\tau)\in\Xi\,\,\mathrm{s.t.}\,q=\phi_{(\alpha,\tau)}\}$.
 Then every $q\in\mathcal{Q}$ satisfies both the condition (A) and (B).
\end{lem}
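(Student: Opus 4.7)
The plan is a direct verification broken into the $\alpha=1$ case and the $\alpha\neq 1$ case. Since $z,z'\in\Delta^{C-1}$ implies $u:=z^{\top}z'\in[0,1]$, and $\phi_{(\alpha,\tau)}$ depends on $(z,z')$ only through $u$, condition~(A) reduces to showing the uniform boundedness of the family of one-variable maps $\phi_{(\alpha,\tau)}:[0,1]\to\mathbb{R}$, and condition~(B) reduces to showing the uniform Lipschitz bound of the same family. I would start by writing down the explicit formulas
\begin{align*}
 \phi_{(\alpha,\tau)}(u) &= \tfrac{1}{1-\alpha}\log\bigl(1+(1-\alpha)\tau(u-1)\bigr) &&(\alpha\neq 1),\\
 \phi_{(1,\tau)}(u) &= \tau(u-1) &&(\alpha=1),
\end{align*}
so that the lemma becomes a bookkeeping exercise in the constants $l,w,d,s$.

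Next I would control the argument of the logarithm. For $u\in[0,1]$ we have $u-1\in[-1,0]$, so $(1-\alpha)\tau(u-1)$ lies between $0$ and $-(1-\alpha)\tau$. Splitting on the sign of $(1-\alpha)\tau$, the constraints $s\leq(1-\alpha)\tau\leq 1-d$ from the definition of $\Xi$ give
\[
 1+(1-\alpha)\tau(u-1)\in[d,\,1-s],
\]
so the argument of the logarithm stays uniformly bounded away from $0$ and from $+\infty$. This is what makes $\log$ and its derivative behave well.

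For condition~(A), the bound above yields $\log\bigl(1+(1-\alpha)\tau(u-1)\bigr)\in[\log d,\log(1-s)]$. When $\alpha<1$ one has $(1-\alpha)\tau\geq 0$, so the argument is in $[d,1]$ and the log is nonpositive; dividing by $1-\alpha\geq w>0$ gives $\phi_{(\alpha,\tau)}(u)\in[(\log d)/w,\,0]$. When $\alpha>1$ the argument is in $[1,1-s]$ and the log is nonnegative; dividing by $1-\alpha<0$ with $|1-\alpha|\geq w$ gives $\phi_{(\alpha,\tau)}(u)\in[-\log(1-s)/w,\,0]$. Finally, for $\alpha=1$ one trivially has $\phi_{(1,\tau)}(u)\in[-l,0]$. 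Hence one may take $b=0$ and $a=\min\{(\log d)/w,\,-\log(1-s)/w,\,-l\}$, verifying (A). For condition~(B), differentiating yields $\phi'_{(\alpha,\tau)}(u)=\tau/\bigl(1+(1-\alpha)\tau(u-1)\bigr)$ in the $\alpha\neq 1$ case, and $\phi'_{(1,\tau)}(u)=\tau$ in the $\alpha=1$ case. Combining $\tau\leq l$ with the lower bound $1+(1-\alpha)\tau(u-1)\geq d$ gives $|\phi'_{(\alpha,\tau)}(u)|\leq l/d$ uniformly, so one may take $L=l/d$.

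There is no genuine obstacle here; the proof is essentially a careful case split. The only point requiring some attention is that the condition $s\leq(1-\alpha)\tau\leq 1-d$ must be used in both directions: the upper bound $1-d$ secures the lower bound $d$ on the log argument when $\alpha<1$, and the lower bound $s$ secures the upper bound $1-s$ when $\alpha>1$. Missing either side would destroy uniformity. Once both directions are recorded, the $\alpha=1$ case can be absorbed by continuity of $\alpha\mapsto\phi_{(\alpha,\tau)}(u)$ or just handled separately as above, completing the verification of (A) and (B).
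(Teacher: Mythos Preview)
Your proof is correct and follows essentially the same direct case analysis as the paper: split on $\alpha=1$ versus $\alpha\neq1$, compute the range and derivative of $u\mapsto\phi_{(\alpha,\tau)}(u)$ explicitly, and read off the bounds from the constraints defining $\Xi$. If anything, your treatment of condition~(A) is slightly more careful than the paper's, since you explicitly invoke $|\alpha-1|\geq w$ to obtain an $\alpha$-independent interval $[a,b]$, whereas the paper's stated interval still carries the exponent $1/(1-\alpha)$.
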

\begin{proof}
 Let $q\in\mathcal{Q}$.
 From the definition of $\mathcal{Q}$, there exists some $(\alpha,\tau)\in \Xi$ such that $q$ is expressed as $q(r,r')=\phi_{(\alpha,\tau)}(r^{\top}r')$ for any $r,r'\in\Delta^{C-1}$.
 Moreover, $\phi_{(\alpha,\tau)}$ is uniformly bounded in the following closed interval
 $$[\min\{\log{d}^{1/(1-\alpha)},\log{(1-s)}^{1/(1-\alpha)}\},\max\{\log{d}^{1/(1-\alpha)},\log{(1-s)}^{1/(1-\alpha)}\}],$$
 when $\alpha\neq 1$, and in $[-l,0]$ when $\alpha=1$.
 Therefore, $q$ satisfies the condition (A).
 Let us show every $q=\phi_{(\alpha,\tau)}\in\mathcal{Q}$ also satisfies the condition (B).
 When $\alpha=1$, from the definition of the function $\exp_{\alpha}$, we have $\phi_{(\alpha,\tau)}(x)=\tau (x-1)$ on $x\in [0,1]$.
 Hence, $\|\phi_{(\alpha,\tau)}\|_{\mathrm{Lip}}\leq \tau\leq l<\infty$.
 When $\alpha\neq 1$, since $\phi_{(\alpha,\tau)}(x)=(1-\alpha)^{-1}\log(1+(1-\alpha)\tau(x-1))$ on $x\in[0,1]$ we have:
 \begin{itemize}
     \item When $0\leq (1-\alpha)\tau\leq 1-d$, we have $\|\phi_{(\alpha,\tau)}\|_{\mathrm{Lip}}\leq \tau/(1-(1-\alpha)\tau)\leq \tau/d<\infty$.
     \item When $s\leq (1-\alpha)\tau<0$, then we have $\|\phi_{(\alpha,\tau)}\|_{\mathrm{Lip}}\leq \tau<\infty$.
 \end{itemize}
 Therefore, there exists a non-negative constant $L$ such that $$\sup_{\phi_{(\alpha,\tau)}\in\mathcal{Q}}\|\phi_{(\alpha,\tau)}\|_{\mathrm{Lip}}\leq L<\infty.$$
 This implies that $q=\phi_{(\alpha,\tau)}$ satisfies the condition (B). \qed
\end{proof}

Now we are ready to show the main result on the statistical analysis in this section.

\begin{thm}[Formal version of Theorem~\ref{thm:Estimation Error Analysis Informal}]
\label{thm:Estimation Error Analysis Formal}   
Let $\mathcal{Q}$ be the set defined in the setting of Lemma~\ref{lem:properties_of_critics}.
Then, with the probability greater than $1-\delta$, we have 
\begin{align*}
I_{\mathcal{Q}}(\theta_{\mathcal{Q}})- I_{\mathcal{Q}}(\widehat{\theta}) \leq c\,\varepsilon_{\delta,n}^{\mathcal{N}}, 
\end{align*}
 where $c$ is a positive constant depending on $a,b$ and $L$. 
As a result, with probability at least $1-\delta$ the gap between $I(\theta^*)$ and $I(\widehat{\theta})$ is given by 
\begin{align*}
 0\leq I(\theta^*)-I(\widehat{\theta})\leq I(\theta^*)-I_{\mathcal{Q}}(\theta_{\mathcal{Q}}) + c\,\varepsilon_{n,\delta}^{\mathcal{N}}. 
\end{align*}
\end{thm}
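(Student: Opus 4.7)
The plan is to split the gap $I(\theta^{*})-I(\widehat{\theta})$ into an approximation error $I(\theta^{*})-I_{\mathcal{Q}}(\theta_{\mathcal{Q}})$ and an estimation error $I_{\mathcal{Q}}(\theta_{\mathcal{Q}})-I_{\mathcal{Q}}(\widehat{\theta})$ exactly as in \eqref{eqn:eval-MI}, and then concentrate the entire effort on the estimation error. The first step will be the standard ``optimality implies twice the uniform deviation'' trick: since $\widehat{\theta}$ maximizes $\widehat{I}_{\mathcal{Q}}$, inserting $\pm\widehat{I}_{\mathcal{Q}}(\theta_{\mathcal{Q}})\pm\widehat{I}_{\mathcal{Q}}(\widehat{\theta})$ gives
\[
 I_{\mathcal{Q}}(\theta_{\mathcal{Q}})-I_{\mathcal{Q}}(\widehat{\theta})\;\leq\; 2\sup_{\theta\in\Theta}\bigl|I_{\mathcal{Q}}(\theta)-\widehat{I}_{\mathcal{Q}}(\theta)\bigr|,
\]
and then by exchanging $\sup$ over $q$ I can further upper bound this by $\sup_{\theta,q}|I_{\mathrm{sym\_nce},q}(\theta)-\widehat{I}_{\mathrm{sym\_nce},q}(\theta)|$.

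Next, I would decompose the symmetric InfoNCE into its positive (linear) term $\tfrac{1}{n}\sum_i q(g_\theta(x_i),g_\theta(x_i'))$ and its two log-sum-exp terms, and apply the triangle inequality so that each piece can be controlled separately by a uniform law of large numbers together with a Rademacher complexity bound (via McDiarmid plus symmetrization). For the linear term this is immediate using Lemma~\ref{lem:properties_of_critics}: condition (A) gives uniform boundedness, condition (B) gives a uniform Lipschitz constant $L$, so Talagrand's contraction lemma peels off $\phi_q$ and then the remaining supremum over $(\theta,r)$ (where $r\in\Delta^{C-1}$ is an auxiliary ``dummy'' argument replacing $g_\theta(x_i')$) reduces, by taking a maximum over the coordinates of $g_\theta$ and a second Talagrand step, to the Rademacher complexity $\mathfrak{R}_n(\mathcal{N})$ of a single coordinate class $\mathcal{N}$. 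The symmetric construction gives the analogous bound $\mathfrak{R}_n'(\mathcal{N})$ for the piece involving $x_i'$, explaining the $\tfrac{1}{2}(\mathfrak{R}_n(\mathcal{N})+\mathfrak{R}_n'(\mathcal{N}))$ appearing in $\varepsilon_{\delta,n}^{\mathcal{N}}$.

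The main obstacle will be the log-sum-exp terms such as $\tfrac{1}{n}\sum_i \log\tfrac{1}{n}\sum_j e^{q(g_\theta(x_i),g_\theta(x_j'))}$, because they contain both a double empirical average and a nonlinear $\log$ transformation. My plan for this is a two-stage reduction: first, since $q$ is bounded in $[a,b]$ by condition (A), $\log$ is Lipschitz with constant $e^{-a}$ on $[e^{a},e^{b}]$, so another application of Talagrand strips the outer $\log$; second, inside I would bound the deviation of the empirical inner average $\tfrac{1}{n}\sum_j e^{q(g_\theta(x_i),g_\theta(x_j'))}$ from its conditional expectation $\mathbb{E}_{X'}[e^{q(g_\theta(x_i),g_\theta(X'))}]$, uniformly over $g_\theta(x_i)$, by freeing up that argument to range over $r\in\Delta^{C-1}$ and applying the same Lipschitz/Talagrand reduction to the exponential class $\exp\circ\mathcal{G}$. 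This uniform inner bound, plus the outer ULLN on $\mathbb{E}_{X}[\log\mathbb{E}_{X'}[\cdots]]$, yields the desired $O(\mathfrak{R}_n(\mathcal{N})+\sqrt{\log(1/\delta)/n})$ control with the same constants up to factors of $L$, $e^{b}$, and $e^{-a}$.

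Putting these pieces together, a union bound over the (constantly many) high-probability events produces $\sup_{\theta}|I_{\mathcal{Q}}(\theta)-\widehat{I}_{\mathcal{Q}}(\theta)|\leq c\,\varepsilon_{n,\delta}^{\mathcal{N}}$ with probability at least $1-\delta$, which combined with the first step gives the announced bound on $I_{\mathcal{Q}}(\theta_{\mathcal{Q}})-I_{\mathcal{Q}}(\widehat{\theta})$, and substituting back into \eqref{eqn:eval-MI} completes the proof. Lemma~\ref{lem:properties_of_critics} is the only piece of the argument that is really specific to the $\alpha$-exponential critic in \eqref{eqn:alpha-tau-model}; it is used exactly to verify conditions (A) and (B) through the explicit formula for $\phi_{(\alpha,\tau)}$ on $[0,1]$ and a case analysis on whether $\alpha=1$ or $\alpha\neq 1$, which is straightforward once the admissible range of $(\alpha,\tau)$ is pinned down.
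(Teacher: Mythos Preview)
Your proposal is correct and follows essentially the same route as the paper: the paper's proof of Theorem~\ref{thm:Estimation Error Analysis Formal} is simply ``From Theorem~\ref{thm:estimation_BD} and Lemma~\ref{lem:properties_of_critics}, we obtain the claim,'' and what you have sketched is precisely the argument of Theorem~\ref{thm:estimation_BD} (the decomposition \eqref{eqn:eval-MI}--\eqref{eqn:sup-BD}, the Talagrand reductions for $\mathcal{G}$ and $\exp\circ\mathcal{G}$ to $\mathfrak{R}_n(\mathcal{N})$, and the two-stage treatment of the log-sum-exp term via the $e^{-a}$-Lipschitz property of $\log$ and the inner uniform bound over the dummy $r\in\Delta^{C-1}$), together with Lemma~\ref{lem:properties_of_critics} to verify conditions (A) and (B) for the specific $\alpha$-exponential critic class. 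The only structural difference is that the paper factors out Theorem~\ref{thm:estimation_BD} as a standalone result under abstract conditions (A)--(B) and then specializes, whereas you have inlined everything into a single argument.
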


\begin{proof}%
 From Theorem~\ref{thm:estimation_BD} and Lemma~\ref{lem:properties_of_critics}, we obtain the claim.\qed
\end{proof}

\subsection{Proof of Theorem~\ref{thm: downstream tasks guarantee analysis}} %
\label{append: Downstream Tasks Guarantee}

We show Theorem~\ref{thm: downstream tasks guarantee analysis} based on the results by \citet{wang2022chaos}.
Recall the definition of the critic function Eq.\eqref{eqn:alpha-tau-model}; when $\alpha=1$, the critic function $q(g_{\theta}(X),g_{\theta}(T(X)))$ is just $q(g_{\theta}(X),g_{\theta}(T(X)))=\tau(g_{\theta}(X)^{\top}g_{\theta}(T(X))-1)$.
In this case, the symmetric InfoNCE loss, $-\frac{1}{2}(I_{\mathrm{nce}}+I_{\mathrm{nce}}')$, is written as
\begin{align*}
    &-\mathbb{E}_{p(Z,Z')}[\tau(Z^{\top}Z'-1)] + \frac{1}{2}(\mathbb{E}_{p(Z)}[\log\mathbb{E}_{p(Z')}[\exp(\tau(Z^{\top}Z'-1))] \\
    &\;\;\;\;\;\;\;\;\;\;\;\;\;\;\;\;\;\;\;\;\;\;\;\;\;\;\;\;\;\;\;\;\;\;\;\;\;\;\;\;\;\;\;\;\;\;\;\;\;\;\;\;\;\;\;\;+\mathbb{E}_{p(Z')}[\log\mathbb{E}_{p(Z)}[\exp(\tau(Z^{\top}Z'-1))]]]).
\end{align*}
The following Proposition~\ref{prop: upper bound for downstream tasks guarantee} provides an upper bound of the symmetric mean supervised loss involving the symmetric InfoNCE loss.

\begin{prop}
\label{prop: upper bound for downstream tasks guarantee}
 We have,
 \begin{align*}
     \mathcal{L}_{\mathrm{SCE}}^{\mu,\widetilde{\mu}}(g_\theta)\leq -\frac{1}{2}\left(I_{\mathrm{nce}}+I_{\mathrm{nce}}'\right) +\frac{1}{2}\left(\sqrt{\mathrm{Var}(Z|Y)}+\sqrt{\mathrm{Var}(Z'|Y)}+2\log{C}\right),
 \end{align*}
 where $\mathrm{Var}(Z|Y)=\mathbb{E}_{p(Y)}[\mathbb{E}_{p(Z|Y)}[\|{\tau}Z-\mu_{Y}\|_{\infty}^{2}]]$, $\mathrm{Var}(Z'|Y) =\mathbb{E}_{p(Y)}[\mathbb{E}_{p(Z'|Y)}[\|{\tau}Z'-\mu_{Y}\|_{\infty}^{2}]]$.
\end{prop}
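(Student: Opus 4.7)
The plan is to bound $\mathcal{L}_{\mathrm{CE,Raw}}^{\widetilde\mu}$ and $\mathcal{L}_{\mathrm{CE,Aug}}^{\mu}$ separately and then average, matching each cross-entropy loss to one of the two InfoNCE terms plus a variance slack. Expanding the softmax gives
\[
\mathcal{L}_{\mathrm{CE,Raw}}^{\widetilde\mu}(g_\theta)
=-\mathbb{E}_{p(Z,Y)}[Z^{\top}\widetilde\mu_{Y}]
+\mathbb{E}_{p(Z)}\!\left[\log\sum_{k=1}^{C}\exp(Z^{\top}\widetilde\mu_{k})\right],
\]
and analogously for $\mathcal{L}_{\mathrm{CE,Aug}}^{\mu}$. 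I will bound the first (``alignment'') piece against the inner-product term of the symmetric InfoNCE, and the second (``uniformity'') piece against the log-partition term.

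For the alignment piece, the algebraic identity $\tau Z^{\top}Z'=Z^{\top}\widetilde\mu_{Y}+Z^{\top}(\tau Z'-\widetilde\mu_{Y})$ under the joint $p(Z,Z',Y)$ (using that $\widetilde\mu_Y$ does not depend on $Z'$, so $\mathbb{E}_{p(Z,Y)}[Z^{\top}\widetilde\mu_{Y}]=\mathbb{E}_{p(Z,Z',Y)}[Z^{\top}\widetilde\mu_{Y}]$) yields
\[
\tau\,\mathbb{E}[Z^{\top}Z']-\mathbb{E}[Z^{\top}\widetilde\mu_{Y}]
=\mathbb{E}[Z^{\top}(\tau Z'-\widetilde\mu_{Y})].
\]
Because $Z\in\Delta^{C-1}$ satisfies $\|Z\|_{1}=1$, H\"older's inequality bounds the right-hand side in absolute value by $\mathbb{E}[\|\tau Z'-\widetilde\mu_{Y}\|_{\infty}]$, and Jensen plus the tower property then give $\mathbb{E}[\|\tau Z'-\widetilde\mu_{Y}\|_{\infty}]\leq\sqrt{\mathrm{Var}(Z'|Y)}$. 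Hence
$-\mathbb{E}[Z^{\top}\widetilde\mu_{Y}]\leq -\tau\,\mathbb{E}[Z^{\top}Z']+\sqrt{\mathrm{Var}(Z'|Y)}$.

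For the uniformity piece I will use the assumption $p(Y=k)=1/C$ together with Jensen's inequality applied inside the exponential: since $\widetilde\mu_{k}=\tau\,\mathbb{E}_{p(Z'|Y=k)}[Z']$, convexity of $\exp$ gives
\[
\exp(Z^{\top}\widetilde\mu_{k})
=\exp\!\bigl(\tau Z^{\top}\mathbb{E}_{p(Z'|Y=k)}[Z']\bigr)
\leq \mathbb{E}_{p(Z'|Y=k)}[\exp(\tau Z^{\top}Z')].
\]
Summing over $k$ and collapsing the inner sum via $p(Y=k)=1/C$ produces $\sum_{k}\exp(Z^{\top}\widetilde\mu_{k})\leq C\cdot\mathbb{E}_{p(Z')}[\exp(\tau Z^{\top}Z')]$, so
$\mathbb{E}_{p(Z)}[\log\sum_{k}\exp(Z^{\top}\widetilde\mu_{k})]\leq\log C+\mathbb{E}_{p(Z)}[\log\mathbb{E}_{p(Z')}[\exp(\tau Z^{\top}Z')]]$.

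Combining these two bounds gives
$\mathcal{L}_{\mathrm{CE,Raw}}^{\widetilde\mu}(g_\theta)\leq -I_{\mathrm{nce}}+\log C+\sqrt{\mathrm{Var}(Z'|Y)}$,
where one uses shift-invariance of InfoNCE under the constant offset $-\tau$ to move from the critic $\tau Z^{\top}Z'$ to $\tau(Z^{\top}Z'-1)$. Swapping the roles of $Z$ and $Z'$ (with $\mu$ replacing $\widetilde\mu$) yields
$\mathcal{L}_{\mathrm{CE,Aug}}^{\mu}(g_\theta)\leq -I_{\mathrm{nce}}'+\log C+\sqrt{\mathrm{Var}(Z|Y)}$,
and averaging delivers the claimed bound. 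The main technical obstacle is the alignment step: specifically, routing the gap between $\tau\mathbb{E}[Z^{\top}Z']$ and $\mathbb{E}[Z^{\top}\widetilde\mu_Y]$ through H\"older's inequality with the $\ell_1$-$\ell_\infty$ pairing, and being careful to work under the joint $p(Z,Z',Y)$ rather than silently invoking $p(Y|Z)=p(Y|Z')$ or conditional independence of $Z$ and $Z'$ given $Y$, which the paper explicitly disavows.
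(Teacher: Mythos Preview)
Your proposal is correct and follows essentially the same route as the paper's proof: both rely on Jensen's inequality for $\exp$ to pass from $\mathbb{E}_{p(Z'|Y)}[\exp(\tau Z^\top Z')]$ to $\exp(Z^\top\widetilde\mu_Y)$, H\"older's inequality with the $\ell_1$--$\ell_\infty$ pairing (using $\|Z\|_1=1$) to control $Z^\top(\tau Z'-\widetilde\mu_Y)$, and Jensen for the square root to reach $\sqrt{\mathrm{Var}(\cdot|Y)}$. The only cosmetic difference is that you start from each cross-entropy term and bound it above by the corresponding InfoNCE piece before averaging, whereas the paper starts from $-\tfrac12(I_{\mathrm{nce}}+I_{\mathrm{nce}}')$ and bounds it below by $\mathcal{L}_{\mathrm{SCE}}^{\mu,\widetilde\mu}$ minus the slack; the inequalities are the same, just read in opposite directions.
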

\begin{proof}
The proof of Proposition~\ref{prop: upper bound for downstream tasks guarantee} is mainly due to Theorem~A.3 of \citet{wang2022chaos}, but slightly different because we now focus on the symmetric InfoNCE with the critic function of Eq.\eqref{eqn:alpha-tau-model}.
We show the detail of our proof based on \citet{wang2022chaos} for the sake of completeness.
 \begin{align*}
  &-\frac{1}{2}\left(I_{\mathrm{nce}}+I_{\mathrm{nce}}'\right)\\
  &=-\mathbb{E}_{p(Z,Z')}[{\tau}Z^{\top}Z'-\tau]+\frac{1}{2}(\mathbb{E}_{p(Z)}[\log\mathbb{E}_{p(Z')}[\exp({\tau}Z^{\top}Z'-\tau)]\\
  &\;\;\;\;\;\;\;\;\;\;\;\;\;\;\;\;\;\;\;\;\;\;\;\;\;\;\;\;\;\;\;\;\;\;\;\;\;\;\;\;\;\;\;\;\;\;\;\;\;\;\;\;\;+\mathbb{E}_{p(Z')}[\log\mathbb{E}_{p(Z)}[\exp({\tau}Z^{\top}Z'-\tau)]]])\\
  &=-\mathbb{E}_{p(Z,Z')}[{\tau}Z^{\top}Z'] + \frac{1}{2}(\mathbb{E}_{p(Z)}[\log\mathbb{E}_{p(Z')}[\exp({\tau}Z^{\top}Z')]\\
  &\;\;\;\;\;\;\;\;\;\;\;\;\;\;\;\;\;\;\;\;\;\;\;\;\;\;\;\;\;\;\;\;\;\;\;\;\;\;\;\;\;\;\;\;\;\;\;\;\;\;\;\;\;+\mathbb{E}_{p(Z')}[\log\mathbb{E}_{p(Z)}[\exp({\tau}Z^{\top}Z')]]])\\
  &=-\mathbb{E}_{p(Z,Z')}[{\tau}Z^{\top}Z']+\frac{1}{2}(\mathbb{E}_{p(Z)}[\log\mathbb{E}_{p(Y)}[\mathbb{E}_{p(Z'|Y)}[\exp({\tau}Z^{\top}Z')]]]\\
  &\;\;\;\;\;\;\;\;\;\;\;\;\;\;\;\;\;\;\;\;\;\;\;\;\;\;\;\;\;\;\;\;\;\;\;\;\;\;\;\;\;\;\;\;\;\;\;\;\;\;\;\;\;+\mathbb{E}_{p(Z')}[\log\mathbb{E}_{p(Y)}[\mathbb{E}_{p(Z|Y)}[\exp({\tau}Z^{\top}Z')]]])\\
  &\geq -\mathbb{E}_{p(Z,Z')}[{\tau}Z^{\top}Z']+\frac{1}{2}(\mathbb{E}_{p(Z)}[\log\mathbb{E}_{p(Y)}[\exp(\mathbb{E}_{p(Z'|Y)}[{\tau}Z^{\top}Z'])]]\\
  &\;\;\;\;\;\;\;\;\;\;\;\;\;\;\;\;\;\;\;\;\;\;\;\;\;\;\;\;\;\;\;\;\;\;\;\;\;\;\;\;\;\;\;\;\;\;\;\;\;\;\;\;\;+\mathbb{E}_{p(Z')}[\log\mathbb{E}_{p(Y)}[\exp(\mathbb{E}_{p(Z|Y)}[{\tau}Z^{\top}Z']]]))\\
  &=-\frac{1}{2}\Big(\mathbb{E}_{p(Z,Z',Y)}[Z^{\top}\widetilde{\mu}_{Y}+Z^{\top}({\tau}Z'-\widetilde{\mu}_{Y})]\\
  &\;\;\;\;\;\;\;\;\;\;\;\;\;\;\;\;\;\;\;\;\;\;\;\;\;\;\;\;\;\;\;\;\;\;\;\;\;\;\;\;\;\;\;\;\;\;\;\;\;\;\;\;\;+\mathbb{E}_{p(Z,Z',Y)}[{Z'}^{\top}\mu_{Y}+{Z'}^{\top}({\tau}Z-\mu_{Y})]\Big)\\
  &\quad+\frac{1}{2}\left(\mathbb{E}_{p(Z)}[\log\mathbb{E}_{p(Y)}[\exp(Z^{\top}\widetilde{\mu}_{Y})]]+\mathbb{E}_{p(Z')}[\log\mathbb{E}_{p(Y)}[\exp({Z'}^{\top}\mu_{Y})]]\right)\\
  &=\frac{1}{2}\left(-\mathbb{E}_{p(Z,Z',Y)}[Z^{\top}\widetilde{\mu}_{Y}+Z^{\top}({\tau}Z'-\widetilde{\mu}_{Y})]+\mathbb{E}_{p(Z)}[\log\mathbb{E}_{p(Y)}[\exp(Z^{\top}\widetilde{\mu}_{Y})]]\right)\\
  &\quad+\frac{1}{2}\left(-\mathbb{E}_{p(Z,Z',Y)}[{Z'}^{\top}\mu_{Y}+{Z'}^{\top}({\tau}Z-\mu_{Y})]+\mathbb{E}_{p(Z')}[\log\mathbb{E}_{p(Y)}[\exp({Z'}^{\top}\mu_{Y})]]\right)\\
  &\geq \frac{1}{2}\left(-\mathbb{E}_{p(Z,Z',Y)}[Z^{\top}\widetilde{\mu}_{Y}+\|{\tau}Z'-\widetilde{\mu}_{Y}\|_{\infty}]+\mathbb{E}_{p(Z)}[\log\mathbb{E}_{p(Y)}[\exp(Z^{\top}\widetilde{\mu}_{Y})]]\right)\\
  &\quad+\frac{1}{2}\left(-\mathbb{E}_{p(Z,Z',Y)}[{Z'}^{\top}\mu_{Y}+\|{\tau}Z-\mu_{Y}\|_{\infty}]+\mathbb{E}_{p(Z')}[\log\mathbb{E}_{p(Y)}[\exp({Z'}^{\top}\mu_{Y})]]\right)\\
  &\geq \frac{1}{2}\Big(-\mathbb{E}_{p(Z,Y)}[Z^{\top}\widetilde{\mu}_{Y}]-\sqrt{\mathbb{E}_{p(Z',Y)}\|{\tau}Z'-\widetilde{\mu}_{Y}\|_{\infty}^{2}}+\mathbb{E}_{p(Z)}[\log\mathbb{E}_{p(Y)}[\exp(Z^{\top}\widetilde{\mu}_{Y})]]\Big)\\
  &\quad+\frac{1}{2}\Big(-\mathbb{E}_{p(Z',Y)}[{Z'}^{\top}\mu_{Y}]-\sqrt{\mathbb{E}_{p(Z,Y)}\|{\tau}Z-\mu_{Y}\|_{\infty}^{2}}+\mathbb{E}_{p(Z')}[\log\mathbb{E}_{p(Y)}[\exp({Z'}^{\top}\mu_{Y})]]\Big)\\
  &=\frac{1}{2}\left(-\mathbb{E}_{p(Z,Y)}[Z^{\top}\widetilde{\mu}_{Y}]-\sqrt{\mathrm{Var}(Z'|Y)}+\mathbb{E}_{p(Z)}[\log\mathbb{E}_{p(Y)}[\exp(Z^{\top}\widetilde{\mu}_{Y})]]\right)\\
  &\quad+\frac{1}{2}\left(-\mathbb{E}_{p(Z',Y)}[{Z'}^{\top}\mu_{Y}]-\sqrt{\mathrm{Var}(Z|Y)}+\mathbb{E}_{p(Z')}[\log\mathbb{E}_{p(Y)}[\exp({Z'}^{\top}\mu_{Y})]]\right)\\
  &=\frac{1}{2}\left(\mathcal{L}_{\mathrm{CE},\mathrm{Raw}}^{\widetilde{\mu}}(g_\theta)+\mathcal{L}_{\mathrm{CE},\mathrm{Aug}}^{\mu}(g_\theta)-\sqrt{\mathrm{Var}(Z'|Y)}-\sqrt{\mathrm{Var}(Z|Y)}\right)-\log{C}\\
  &=\mathcal{L}_{\mathrm{SCE}}^{\mu,\widetilde{\mu}}(g_\theta)-\frac{1}{2}\left(\sqrt{\mathrm{Var}(Z'|Y)}+\sqrt{\mathrm{Var}(Z|Y)}+2\log{C}\right).
\end{align*}
Here, in the first and the third inequality we use Jensen's inequality, and in the second inequality we use the Hölder's inequality.
\qed
\end{proof}

We next present a lower bound of the symmetric mean supervised loss.

\begin{prop}
\label{prop: lower bound for downstream tasks guarantee}
We have,
\begin{align*}
   &\;\;\;\;\mathcal{L}_{\mathrm{SCE}}^{\mu,\widetilde{\mu}}(g_\theta)-\log{C}\\
   &\geq
   -\frac{1}{2}\left(I_{\mathrm{nce}}+I_{\mathrm{nce}}'\right) - \frac{1}{2}\left(\sqrt{\mathrm{Var}(Z|Y)}+\sqrt{\mathrm{Var}(Z'|Y)}\right) - \frac{1}{2e}\mathrm{Var}(\exp({\tau}Z^{\top}Z')),
\end{align*}
where $\mathrm{Var}(\exp(\tau Z^{\top}Z'))=\mathbb{E}_{p(Z)p(Z')}[(\exp(\tau Z^{\top}Z')-\mathbb{E}_{p(Z)p(Z')}[\exp(\tau Z^{\top}Z')])^{2}]$.
\end{prop}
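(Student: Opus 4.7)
The plan is to mirror the proof of Proposition~\ref{prop: upper bound for downstream tasks guarantee}, but to reverse each of the two key inequalities used there, paying a variance penalty for each reversal. Recall that the upper-bound argument relied on (i) the standard Jensen inequality applied to the convex exponential, which pushes $\exp$ inside $\mathbb{E}_{p(Z'|Y)}$, and (ii) the inequality $\mathbb{E}[\|\cdot\|_{\infty}] \le \sqrt{\mathbb{E}[\|\cdot\|_{\infty}^{2}]}$, itself a Jensen on $\sqrt{\cdot}$. For a matching lower bound each step must be run in the opposite direction.

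First, I would start from the identity $-\tfrac12(I_{\mathrm{nce}}+I_{\mathrm{nce}}') = -\mathbb{E}_{p(Z,Z')}[\tau Z^{\top}Z'] + \tfrac12\bigl(\mathbb{E}_{p(Z)}[\log\mathbb{E}_{p(Y)}\mathbb{E}_{p(Z'|Y)}[e^{\tau Z^{\top}Z'}]] + (\text{swap } Z\leftrightarrow Z')\bigr)$ already established in the proof of Proposition~\ref{prop: upper bound for downstream tasks guarantee}. At this point, instead of invoking standard Jensen to remove the inner expectation through $\exp$, I would apply the sharpened Jensen's inequality of~\citet{liao2018sharpening} to the concave function $\log$, thereby bounding the gap $\log\mathbb{E}[e^{\tau Z^{\top}Z'}] - \log e^{\mathbb{E}[\tau Z^{\top}Z']}$ from above by a multiple of $\mathrm{Var}(e^{\tau Z^{\top}Z'})$. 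Because $\tau Z^{\top}Z' \ge 0$ implies $e^{\tau Z^{\top}Z'} \ge 1$, the supremum of $|(\log)''(u)| = 1/u^{2}$ on the relevant range is controlled, and a careful bookkeeping of the constants after the $(1/2)$-symmetrization gives the single penalty term $-\tfrac{1}{2e}\,\mathrm{Var}(e^{\tau Z^{\top}Z'})$ on the right-hand side.

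Second, after rewriting $\mathbb{E}_{p(Z'|Y)}[\tau Z^{\top}Z'] = Z^{\top}\widetilde{\mu}_{Y}$ and its mirror $\mathbb{E}_{p(Z|Y)}[\tau Z^{\top}Z'] = {Z'}^{\top}\mu_{Y}$, the expression is reorganized into the log-partition pieces $\mathbb{E}_{p(Z)}[\log\mathbb{E}_{p(Y)}[e^{Z^{\top}\widetilde{\mu}_{Y}}]]$ and $\mathbb{E}_{p(Z')}[\log\mathbb{E}_{p(Y)}[e^{{Z'}^{\top}\mu_{Y}}]]$ together with two linear cross terms of the form $\mathbb{E}[Z^{\top}(\tau Z'-\widetilde{\mu}_{Y})]$ and its twin. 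These cross terms are bounded, with the sign opposite to that used in the upper bound, by $\pm\|\tau Z'-\widetilde{\mu}_{Y}\|_{\infty}$ via Hölder's inequality $|Z^{\top}v|\le \|Z\|_{1}\|v\|_{\infty} = \|v\|_{\infty}$; taking expectations and then applying Jensen to $\sqrt{\cdot}$, namely $\mathbb{E}[\|\cdot\|_{\infty}] \le \sqrt{\mathbb{E}[\|\cdot\|_{\infty}^{2}]}$, yields exactly the subtracted terms $-\tfrac12\sqrt{\mathrm{Var}(Z'|Y)}$ and $-\tfrac12\sqrt{\mathrm{Var}(Z|Y)}$. The uniformity assumption on $p(Y)$ then converts each log-partition expression into $\mathcal{L}_{\mathrm{CE},\bullet}^{\bullet}(g_\theta) - \log C$, after which reassembling all pieces produces the claimed lower bound.

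The main obstacle is the careful application of the sharpened Jensen inequality and the resulting tracking of the coefficient $\tfrac{1}{2e}$: one must identify the correct sub-interval on which $|(\log)''|$ is to be controlled and verify that the factor arising from $e^{\tau Z^{\top}Z'}\ge 1$, combined with the $\tfrac12$ coming from symmetrizing the two InfoNCE terms, yields exactly $\tfrac{1}{2e}$ as the variance coefficient. The Hölder and Jensen-on-$\sqrt{\cdot}$ steps and the passage to the cross-entropy form are essentially mechanical, copied sign-for-sign from the upper-bound proof and requiring only that the signs are tracked correctly.
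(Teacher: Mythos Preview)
Your plan has a genuine gap at the sharpened-Jensen step. You propose to reverse the first Jensen inequality of Proposition~\ref{prop: upper bound for downstream tasks guarantee} --- namely $\mathbb{E}_{p(Z'|Y)}[e^{\tau Z^{\top}Z'}]\ge e^{Z^{\top}\widetilde{\mu}_Y}$, applied inside $\log\mathbb{E}_{p(Y)}[\,\cdot\,]$ --- by paying a variance penalty. But that inequality sits at the \emph{inner conditional} expectation $\mathbb{E}_{p(Z'|Y)}$, sandwiched between the outer $\mathbb{E}_{p(Y)}$ and the logarithm. A sharpened-Jensen correction applied there is a $Y$- and $Z$-dependent conditional variance that does not separate from the surrounding $\log\mathbb{E}_{p(Y)}[\,\cdot\,]$; you cannot obtain from it the clean product-measure quantity $\mathrm{Var}_{p(Z)p(Z')}(e^{\tau Z^{\top}Z'})$ appearing in the statement, nor does the expression ``reorganize into the log-partition pieces'' $\mathbb{E}_{p(Z)}[\log\mathbb{E}_{p(Y)}[e^{Z^{\top}\widetilde{\mu}_Y}]]$ as you claim.

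The paper does \emph{not} reverse that step. Instead it starts from $\mathcal{L}_{\mathrm{SCE}}^{\mu,\widetilde{\mu}}(g_\theta)$ and, after the H\"older and Jensen-on-$\sqrt{\cdot}$ steps you correctly describe, applies an \emph{additional standard} Jensen to collapse each CE log-partition to a linear term,
\[
\log\mathbb{E}_{p(Y)}\bigl[e^{Z^{\top}\widetilde{\mu}_Y}\bigr]\;\ge\;\mathbb{E}_{p(Y)}\bigl[Z^{\top}\widetilde{\mu}_Y\bigr]\;=\;\tau\,Z^{\top}\mathbb{E}_{p(Z')}[Z'].
\]
Only then is the sharpened Jensen invoked, on the full marginal $\mathbb{E}_{p(Z')}$ (not on $\mathbb{E}_{p(Z'|Y)}$), to pass from $\mathbb{E}_{p(Z')}[\tau Z^{\top}Z']=\mathbb{E}_{p(Z')}[\log e^{\tau Z^{\top}Z'}]$ back to the InfoNCE log-partition $\log\mathbb{E}_{p(Z')}[e^{\tau Z^{\top}Z'}]$; this is what produces the product-measure variance with coefficient $1/(2e)$. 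The missing idea in your proposal is this intermediate linearization: the two log-partition forms --- the CE one over $Y$ and the InfoNCE one over $Z'$ --- are not connected directly, but through the scalar $\tau Z^{\top}\mathbb{E}_{p(Z')}[Z']$, and the passage requires two inequalities (one standard, one sharpened) rather than a single reversal.
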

\begin{proof}
The proof of Proposition~\ref{prop: upper bound for downstream tasks guarantee} is mainly due to Theorem~A.5 of \citet{wang2022chaos}, but also slightly different.
Here, we show the detail of our proof based on \citet{wang2022chaos} for the sake of completeness.
 \begin{align*}
  &\frac{1}{2}\mathcal{L}_{\mathrm{SCE}}^{\mu,\widetilde{\mu}}(g_\theta)\\
  &=\frac{1}{2}\left(\mathcal{L}_{\mathrm{CE,Raw}}^{\widetilde{\mu}}(g_\theta)+\mathcal{L}_{\mathrm{CE,Aug}}^{\mu}(g_\theta)\right)\\
  &=\frac{1}{2}\mathbb{E}_{p(Z',Y)}[{Z'}^{\top}\mu_{Y}]-\frac{1}{2}\mathbb{E}_{p(Z,Y)}[{Z}^{\top}\widetilde{\mu}_{Y}] \\
  &\;\;\;\;+\frac{1}{2}\left(\mathbb{E}_{p(Z)}[\log\mathbb{E}_{p(Y)}[\exp(Z^{\top}\widetilde{\mu}_{Y})]]+\mathbb{E}_{p(Z')}[\log\mathbb{E}_{p(Y)}[\exp({Z'}^{\top}\mu_{Y})]]\right)+\log{C}\\
  &=-\frac{1}{2}\mathbb{E}_{p(Z,Z',Y)}[{\tau}{Z'}^{\top}Z+{Z'}^{\top}(\mu_{Y}-{\tau}Z)]-\frac{1}{2}\mathbb{E}_{p(Z,Z',Y)}[{\tau}{Z}^{\top}Z'+{Z}^{\top}(\widetilde{\mu}_{Y}-{\tau}Z')]\\ &\;\;\;\;+\frac{1}{2}\left(\mathbb{E}_{p(Z)}[\log\mathbb{E}_{p(Y)}[\exp(Z^{\top}\widetilde{\mu}_{Y})]]+\mathbb{E}_{p(Z')}[\log\mathbb{E}_{p(Y)}[\exp({Z'}^{\top}\mu_{Y})]]\right)+\log{C}\\
  &\geq -\frac{1}{2}\mathbb{E}_{p(Z,Z',Y)}[{\tau}{Z'}^{\top}Z+\|\mu_{Y}-{\tau}Z\|_{\infty}]-\frac{1}{2}\mathbb{E}_{p(Z,Z',Y)}[{\tau}{Z}^{\top}Z'+\|\widetilde{\mu}_{Y}-{\tau}Z'\|_{\infty}]\\
  &\;\;\;\;+\frac{1}{2}\left(\mathbb{E}_{p(Z)}[\log\mathbb{E}_{p(Y)}[\exp(Z^{\top}\widetilde{\mu}_{Y})]]+\mathbb{E}_{p(Z')}[\log\mathbb{E}_{p(Y)}[\exp({Z'}^{\top}\mu_{Y})]]\right)+\log{C}\\
  &\geq -\mathbb{E}_{p(Z,Z')}[{\tau}{Z'}^{\top}Z]-\frac{1}{2}\sqrt{\mathbb{E}_{p(Z,Y)}[\|\mu_{Y}-{\tau}Z\|_{\infty}^{2}]}-\frac{1}{2}\sqrt{\mathbb{E}_{p(Z',Y)}[\|\widetilde{\mu}_{Y}-{\tau}Z'\|_{\infty}^{2}]}\\
  &\quad+\frac{1}{2}\left(\mathbb{E}_{p(Z)}[\log\mathbb{E}_{p(Y)}[\exp(Z^{\top}\widetilde{\mu}_{Y})]]+\mathbb{E}_{p(Z')}[\log\mathbb{E}_{p(Y)}[\exp({Z'}^{\top}\mu_{Y})]]\right)+\log{C}\\
  &=-\mathbb{E}_{p(Z,Z')}[{\tau}{Z'}^{\top}Z]-\frac{1}{2}\sqrt{\mathrm{Var}(Z|Y)}-\frac{1}{2}\sqrt{\mathrm{Var}(Z'|Y)}\\
  &\;\;\;\;+\frac{1}{2}\left(\mathbb{E}_{p(Z)}[\log\mathbb{E}_{p(Y)}[\exp(Z^{\top}\widetilde{\mu}_{Y})]]+\mathbb{E}_{p(Z')}[\log\mathbb{E}_{p(Y)}[\exp({Z'}^{\top}\mu_{Y})]]\right)+\log{C}\\
  &\geq -\mathbb{E}_{p(Z,Z')}[{\tau}{Z'}^{\top}Z]-\frac{1}{2}\sqrt{\mathrm{Var}(Z|Y)}-\frac{1}{2}\sqrt{\mathrm{Var}(Z'|Y)}\\
  &\;\;\;\;+\frac{1}{2}\left(\mathbb{E}_{p(Z)}[\mathbb{E}_{p(Y)}[Z^{\top}\widetilde{\mu}_{Y}]]+\mathbb{E}_{p(Z')}[\mathbb{E}_{p(Y)}[{Z'}^{\top}\mu_{Y}]]\right)+\log{C}\\
  &=-\mathbb{E}_{p(Z,Z')}[{\tau}{Z'}^{\top}Z]-\frac{1}{2}\sqrt{\mathrm{Var}(Z|Y)}-\frac{1}{2}\sqrt{\mathrm{Var}(Z'|Y)}\\
  &\;\;\;\;+\frac{1}{2}\left(\mathbb{E}_{p(Z)}[\mathbb{E}_{p(Z')}[{\tau}Z^{\top}Z']]+\mathbb{E}_{p(Z')}[\mathbb{E}_{p(Z)}[{\tau}Z^{\top}Z']]\right)+\log{C}\\
  &= -\mathbb{E}_{p(Z,Z')}[{\tau}{Z'}^{\top}Z]-\frac{1}{2}\sqrt{\mathrm{Var}(Z|Y)}-\frac{1}{2}\sqrt{\mathrm{Var}(Z'|Y)}\\
  &\;\;\;\;+\frac{1}{2}\left(\mathbb{E}_{p(Z)}[\mathbb{E}_{p(Z')}[\log\exp({\tau}Z^{\top}Z')]]+\mathbb{E}_{p(Z')}[\mathbb{E}_{p(Z)}[\log\exp({\tau}Z^{\top}Z')]]\right)+\log{C}\\
  &\geq -\mathbb{E}_{p(Z,Z')}[{\tau}{Z'}^{\top}Z]-\frac{1}{2}\sqrt{\mathrm{Var}(Z|Y)}-\frac{1}{2}\sqrt{\mathrm{Var}(Z'|Y)}\\
  &\;\;\;\;+\frac{1}{2}\left(\mathbb{E}_{p(Z)}[\log\mathbb{E}_{p(Z')}[\exp({\tau}Z^{\top}Z')]]+\mathbb{E}_{p(Z')}[\log\mathbb{E}_{p(Z)}[\exp({\tau}Z^{\top}Z')]]\right)\\
  &\quad-\frac{1}{2e}\mathrm{Var}(\exp({\tau}Z^{\top}Z'))+\log{C}\\
  &=-\frac{1}{2}\left(I_{\mathrm{nce}}+I_{\mathrm{nce}}'\right)-\frac{1}{2}\sqrt{\mathrm{Var}(Z|Y)}-\frac{1}{2}\sqrt{\mathrm{Var}(Z'|Y)}-\frac{1}{2e}\mathrm{Var}(\exp({\tau}Z^{\top}Z'))+\log{C}.
\end{align*}
Where, in the first inequality we use Hölder's inequality, and in the second and the third inequality we apply Jensen's inequality.
In the last inequality, we utilize the sharpened Jensen's inequality \citep{liao2018sharpening}.
\qed
\end{proof}

As a direct result of Proposition~\ref{prop: upper bound for downstream tasks guarantee} and Proposition~\ref{prop: lower bound for downstream tasks guarantee}, we obtain the claim of Theorem~\ref{thm: downstream tasks guarantee analysis}.

\section{Further Comparison between Symmetric InfoNCE, InfoNCE, and SimCLR}
\label{append: Comparison between Symmetric InfoNCE, InfoNCE, and SimCLR}
Let us see additional differences between the symmetric InfoNCE and the original InfoNCE. To do so,  let us recall the following property:  due to the symmetrization, the degree of freedom of optimal critics is greatly reduced. Thus, 
from comparison between
$-(\hat{I}_{\text{nce}} + \hat{I}'_{\text{nce}}) / 2$ and $-\hat{I}_{\text{nce},q}$ of Eq.\eqref{eq:estimated infonce}, the symmetrization is expected to stabilize the parameter learning; see Figure~\ref{fig:difference between symmetric and original}.
    
For the comparison, we decompose Eq.\eqref{eq:empirical i_nce with T_s} into three terms like Eq.\eqref{eq:metric learning decomposition}. 
This decomposition can be expressed as a variant of Eq.\eqref{eq:metric learning decomposition}, %
where the last term of Eq.\eqref{eq:metric learning decomposition} is replaced by
$\frac{1}{|\mathcal{B}|}\sum_{x_i \in \mathcal{B}}\log\left(\sum_{x_j \in \mathcal{B}}e^{q\left(g_{\theta}(x_i), g_{\theta}\left(t_j(x_j)\right)\right)}\right)$. In the decomposition, following notations of Eq.\eqref{eq:metric learning decomposition}, the corresponding positive and negative losses in Eq.\eqref{eq:empirical i_nce with T_s} are denoted by $L^\prime_{\rm ps}$ and $L^\prime_{\rm ng}$, respectively.
Moreover, let us re-write
$L_{\rm ps} + L_{\rm ng}$
and
$L^\prime_{\rm ps} + L^\prime_{\rm ng}$  as
$\frac{1}{|\mathcal{B}|}\sum_{x_i \in \mathcal{B}} \ell_{\rm ps}(x_i) + \ell_{\rm ng}(x_i)$ and $\frac{1}{|\mathcal{B}|}\sum_{x_i \in \mathcal{B}} \ell^\prime_{\rm ps}(x_i) + 
\ell^\prime_{\rm ng}(x_i)$, respectively. Here, we name both $\ell_{\rm ps}(x_i) + \ell_{\rm ng}(x_i)$ and $\ell^\prime_{\rm ps}(x_i) + 
\ell^\prime_{\rm ng}(x_i)$, \textit{point-wise contrastive loss}. The four terms: $\ell_{\rm ps}$, $\ell_{\rm ng}$, $\ell^\prime_{\rm ps}$, and $\ell^\prime_{\rm ng}$ are defined as follows:
\begin{equation}
\label{eq: one term of symmetric infonce}
    \underbrace{-q\left(g_{\theta}(x_i), g_{\theta}\left(t_i(x_i)\right)\right)}_\text{$\ell_{\rm ps}(x_i)$: point-wise positive loss} + \underbrace{\frac{1}{2} \log\left(\sum_{x_j \in \mathcal{B}}\sum_{x_{i'} \in \mathcal{B}}e^{a\left(i,i',j\right)}\right)}_\text{$\ell_{\rm ng}(x_i)$: point-wise negative loss},
\end{equation}
\begin{equation}
    \label{eq:one term of infonce}
    \underbrace{-q\left(g_{\theta}(x_i), g_{\theta}\left(t_i(x_i)\right)\right)}_\text{$\ell^\prime_{\rm ps}(x_i)$: point-wise positive loss} +  \underbrace{\log\left(\sum_{x_j \in \mathcal{B}}e^{q\left(g_{\theta}(x_i), g_{\theta}\left(t_j(x_j)\right)\right)}\right)}_\text{$\ell^\prime_{\rm ng}(x_i)$: point-wise negative loss}.
\end{equation}
\begin{figure}[!t]
\centering
\includegraphics{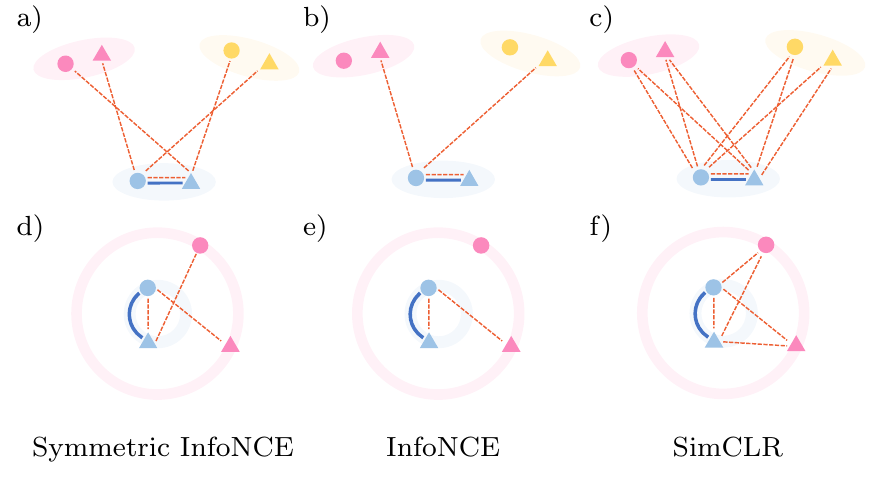}
\caption{
Illustration of positive / negative pairs of the proposed and baseline methods.
From a) to f), the colors (blue, magenta, and yellow) mean different cluster labels. The light-colored manifolds in a) and d) express true clusters. In a) (reps. d)), the set of clusters composes Three-Blobs (resp. Two-Rings).
A pair of the small circle and triangle symbols with the same color means a pair of a data point $x$ and the transformed data point $t(x)$ (i.e., so-called positive pair). 
The two data points connected by the red dash line (resp. blue straight or curved line) are enforced to be distant (resp. close) to each other. 
Especially in a) and d), the effect expressed by the red dash lines (resp. blue straight or curved line) are brought by making $\ell_{\rm ng}(x_i)$ (resp. $\ell_{\rm ps}(x_i)$) of Eq.\eqref{eq: one term of symmetric infonce} to be small. 
In b) and e), the effect expressed by the red dash lines (resp. blue straight or curved line) are brought by making $\ell^\prime_{\rm ng}(x_i)$ (resp. $\ell^\prime_{\rm ps}(x_i)$) of Eq.\eqref{eq:one term of infonce} to be small.
In c) and f), the effect expressed by the red dash lines (resp. blue straight or curved line) are brought by making $\ell''_{\rm ng}(x_i)$ (resp. $\ell''_{\rm ps}(x_i)$) of Eq.\eqref{eq: simclr pointwise contrastive loss decomposition} to be small.
}\label{fig:difference between symmetric and original}
\end{figure}
Suppose that values of the point-wise contrastive losses are small enough. In this case, we can see the  difference on stability between $-(\hat{I}_{\text{nce}} + \hat{I}'_{\text{nce}}) / 2$ and 
$-\hat{I}_{\text{nce},q}$ by comparing a) vs. b) and d) vs. e) in Figure~\ref{fig:difference between symmetric and original}. %
In this figure, it is observed that the empirical symmetric InfoNCE produces more stable contrastive effects than the empirical InfoNCE.

We also compare $-(\hat{I}_{\text{nce}} + \hat{I}'_{\text{nce}}) / 2$ and the loss of SimCLR introduced in~\citet{chen2020simple}.
To do so, consider the loss of SimCLR defined by a mini-batch $\mathcal{B}$. 
Let $L^\mathcal{B}_{\rm SimCLR}$ denote the loss of SimCLR with the mini-batch $\mathcal{B}$.
Then, let $\ell''_{\rm ps}(x_i)$ and $\ell''_{\rm ng}(x_i)$ ($x_i \in \mathcal{B}$) denote the point-wise positive loss and point-wise negative loss by,
\begin{equation}
\label{eq: simclr pointwise contrastive loss decomposition}
    L^\mathcal{B}_{\rm SimCLR} = \frac{1}{|\mathcal{B}|}\sum_{x_i \in \mathcal{B}} \ell''_{\rm ps}(x_i) + \ell''_{\rm ng}(x_i).
\end{equation}
In this case, we can see the difference via a) vs. c) and d) vs. f) in Figure~\ref{fig:difference between symmetric and original}. Since the symmetric InfoNCE produces similar contrastive effects to SimCLR does, the symmetric InfoNCE is interpreted as a simplified variant of SimCLR. We however note that it is not easy to theoretically analyze SimCLR unlike our symmetric InfoNCE, since $L^\mathcal{B}_{\rm SimCLR}$ is designed based on heuristics.

\section{Details of MIST}
\label{append:details on mist}

\subsection{Details of MIST Objective}
\label{append:details of mist objective}
To understand Eq.\eqref{eq:rewritten implementing unconstrained objective}, let us see an effect brought by minimization of each term ($R_{\rm vat}$, $-H(Y)$, $H(Y|X)$, $L_{\rm ps}$, and $L_{\rm ng}$) via Figure~\ref{fig:intuitive of our method}.
In this figure, the left pictures a) and c) show true clusters defined by the set of data points in the original space $\mathbb{R}^d$. Each color expresses a distinct true label. 
The pictures b) and d) show 
what kind of effects is brought by minimization of each term 
in the representation space $\mathbb{R}^C$. We here suppose that the appropriate hyper-parameters are used for Eq.\eqref{eq:rewritten implementing unconstrained objective}. 
In both b) and d), minimization of $R_{\rm vat}$ makes the model $g_\theta$ acquire the local smoothness; see Eq.\eqref{eq:smoothness assumption}. In addition, minimization of $L_{\rm ps}$ makes the model predict the same cluster labels for the topologically close two data points. Note that, while minimization of $L_{\rm ps}$ defined by $\mathcal{T}_{\mathfrak{e}}$ in Definition~\ref{def:process T_e} brings the similar effect (see b) in Figure~\ref{fig:intuitive of our method}) to the effect of $R_{\rm vat}$, minimization of $L_{\rm ps}$ defined by $\mathcal{T}_{\mathfrak{g}}$ in Definition~\ref{def:process T_g} brings clearly different effect from  $R_{\rm vat}$. For understanding this clear difference, observe that $x_i$ and $t_i(x_i)$ in d) are forced to be close via minimization of $L_{\rm ps}$. Minimization of $-H(Y)$ (i.e., forcing $p_\theta(y) \in \Delta^{C-1}$ to be uniform) makes the model return the non-degenerate clustering result. 
Moreover, minimization of $H(Y|X)$ makes the model return a one-hot vector. Thus, it assists each cluster to be distant. At last, 
as discussed at Section~\ref{subsec:symmetric infonce},
minimization of $L_{\rm ng}$ also makes the model  return the non-degenerate clustering result.

\begin{figure}[!t]
  \begin{algorithm}[H]
    \caption{{\bf :} MIST}
    \label{alg:proposed}
    \begin{algorithmic}[1]
     \Statex {\bf Input} Unlabeled dataset: $\mathcal{D}=\{x_i\}_{i=1}^n$. 
     Model of $p(y|x)$: $g_{\theta}(x)$. 
     Hyper-parameters: $\mu, \gamma, \eta > 0$. 
     Generative process: $\mathcal{T}$ with a conditional probability $p(t|x)$,
     where $x\in \mathbb{R}^d$ and $t: \mathbb{R}^d \to \mathbb{R}^d$. 
     If $\mathcal{D}$ is complex dataset, employ $\mathcal{T}_{\mathfrak{g}}$ of Definition~\ref{def:process T_g}. Otherwise, employ $\mathcal{T}_{\mathfrak{e}}$ of Definition~\ref{def:process T_e}.
     Mini-batch size $m$. Number of epochs: $n_{\rm ep}$. 
      \Statex \textbf{Output} Set of estimated cluster labels with $\mathcal{D}$: $\{\hat{y}_i\}_{i=1}^n$.
      \State Initialize the trainable parameter $\theta$. 
      \For{$epoch=1,\cdots,n_{\rm ep}$}
        \For{$l=0,1,\cdots,\lfloor\frac{n}{m}\rfloor$}
        \State Randomly pick up $x_{i_1},\cdots,x_{i_m}\in\mathcal{D}$. 
        \State \multiline{Compute $x_{i_k}'=t_{i_k}(x_{i_k}), i=1,\cdots,m$, where  $t_{i_k}\sim p(t|x_{i_k})$, 
        and $p(t|x_{i_k})$ is defined via either $\mathcal{T}_{\mathfrak{e}}$ or $\mathcal{T}_{\mathfrak{g}}$.}
        \State \multiline{Update the parameter $\theta$ by the SGD for the loss function in Eq.\eqref{eq:rewritten implementing unconstrained objective} 
               computed using the mini-batch $\mathcal{B}=\{x_{i_k}\}_{k=1}^m$ and $\{x_{i_k}'\}_{k=1}^m$. }
     \EndFor
     \EndFor
     \State Let $\theta^*$ be the estimated parameter. 
     \State $\hat{y}_i = \arg \max_{y \in \{1,\cdots,C\}}g_{\theta^\ast}^y(x_i)$ for $i=1,\cdots,n$. 
    \end{algorithmic}
  \end{algorithm}
\end{figure}

\begin{figure}[!t]
\centering
\def\svgwidth{\linewidth}
\includegraphics[width=\linewidth]{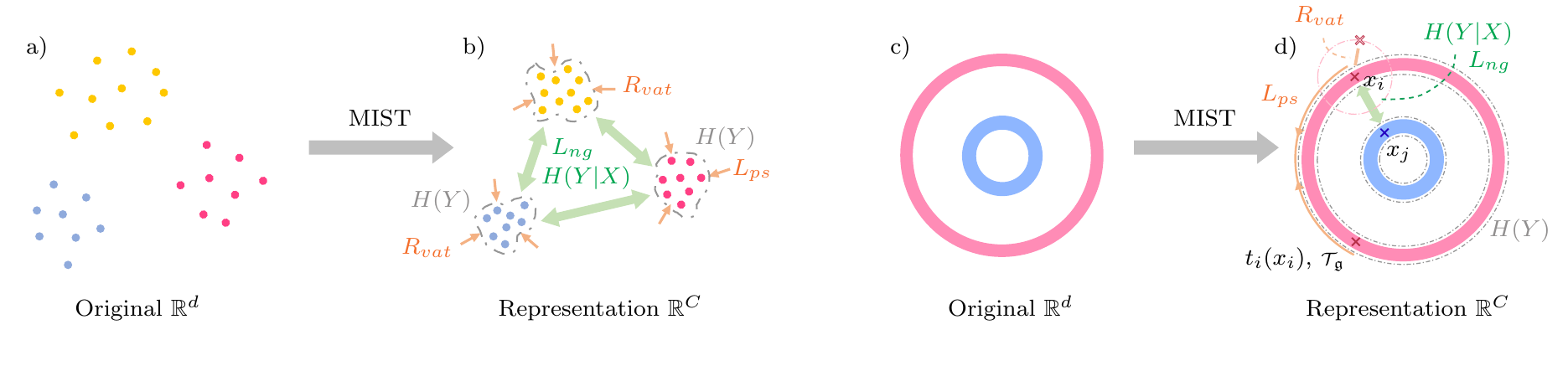}
\caption{ 
Intuitive illustration of  MIST.
Effect of each term in Eq.\eqref{eq:rewritten implementing unconstrained objective} is displayed for Three-Blobs and Two-Rings. In a) and c), the true clusters in the original space are shown, where colors mean cluster labels. In b) and d), effect of each term in Eq.\eqref{eq:rewritten implementing unconstrained objective} in the representation space is shown.
}\label{fig:intuitive of our method}
\end{figure}

\begin{figure}[!t]
\centering
\def\svgwidth{\linewidth}
\includegraphics[width=\linewidth]{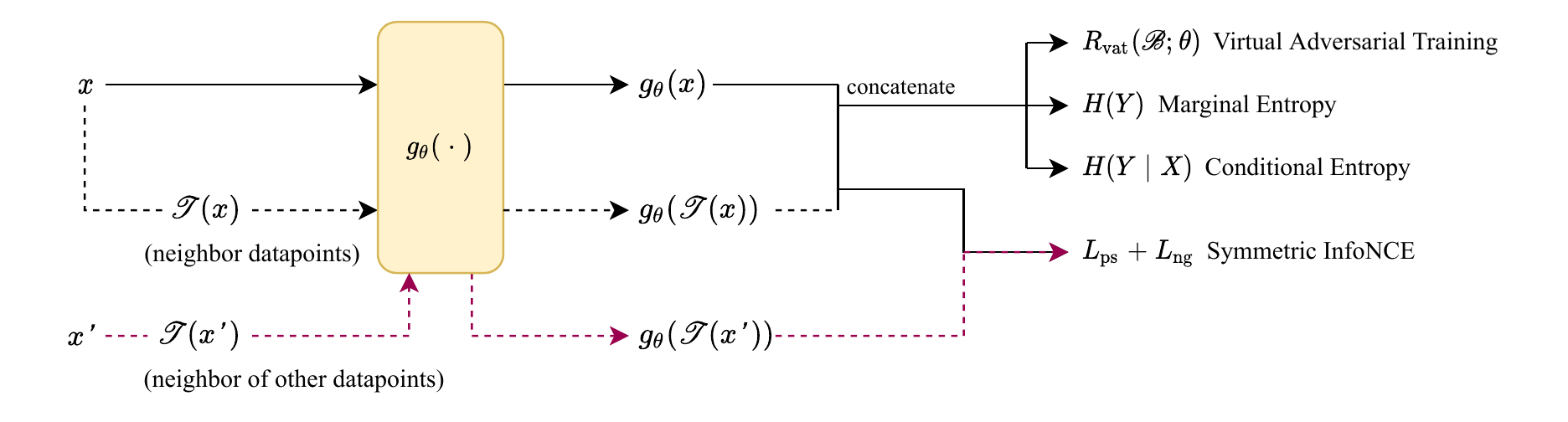}
\caption{ 
MIST architecture. 
}\label{fig:MIST diagram}
\end{figure}

As recently proposed methods that are similar to MIST, we list~\citet{van2020scan,li2020contrastive,Dang_2021_CVPR}. 
The above three methods focus on the image domain (i.e., \textsf{Scenario1} of Section~\ref{subsec:our scenario}).
In the above three methods, either InfoNCE or SimCLR is employed
to enhance the clustering performance. 
Moreover, the scenario these related works focus on is different from \textsf{Scenario2}.
Furthermore, all the three studies do not provide any theoretical analysis of their proposed methods.

\subsection{Time and Memory Complexities with $\mathcal{T}_{\mathfrak{e}}$ and $\mathcal{T}_{\mathfrak{g}}$}
\label{append:time and memory complex with t_e and t_g}
Suppose that we construct the non-approximated K-NN graph on $\mathcal{D}$ by using Euclidean distance. 
Then, the time complexity with $\mathcal{T}_{\mathfrak{e}}$  is $O(dn^2)$, 
where $d$ is the dimension of a feature vector. 
The memory complexity is $O(K_0 n)$. 
As for 
$\mathcal{T}_{\mathfrak{g}}$, time and memory complexities are  
$O\left((K_0 + \log n)n^2\right)$ and $O(n^2)$, respectively~\citep{moscovich2017minimax}. 
Note that if we construct the approximated K-NN graph on $\mathcal{D}$ 
by the Euclidean distance, the time complexity with $\mathcal{T}_{\mathfrak{e}}$ is reduced to $O(dn\log n)$~\citep{Wang2013FastAF,zhang2013fast}.

\section{Experiment Details}
\label{append: experimental details}

\subsection{Details of Datasets}
\label{append:Detail of Dataset}
We used Two-Moons\footnote{\url{https://scikit-learn.org/stable/modules/generated/sklearn.datasets.make\_moons.html} [Last accessed 23-July-2022]} and Two-Rings\footnote{\url{https://scikit-learn.org/stable/modules/generated/sklearn.datasets.make\_circles.html} [Last accessed 23-July-2022]}
in scikit-learn.  
For the former dataset, we set $0.05$ as the noise parameter. For the latter dataset
we set $0.01$ and $0.35$ as noise and factor parameters respectively. %
For SVHN, STL, CIFAR10, CIFAR100, Omniglot and Reuters10K, 
we used the datasets on GitHub\footnote{\url{https://github.com/weihua916/imsat} [Last accessed 23-July-2022]}. 
As for MNIST and 20news, Keras~\citep{geron2019hands} was used. The summary of all the datasets is shown in Table~\ref{tb:dataset statistics}. 
In the following, we review how features of the eight real-world datasets are obtained.

\begin{table*}[!t]
  \centering
   \caption{Summary of the ten datasets used in our experiments. 
   }
   \label{tb:dataset statistics}
      \scalebox{0.8}{
      \begin{tabular}{lccccc} 
      \toprule
                   & \#Points & \#Cluster & Dimension & \%Largest Cluster & \%Smallest Cluster \\ \midrule
        Two-Moons  & 5000  & 2  & 2    & 50\% & 50\% \\ 
        Two-Rings  & 5000  & 2  & 2    & 50\% & 50\% \\ 
        \midrule
        MNIST      & 70000 & 10 & 784  & 11\% & 9\% \\ 
        SVHN       & 99289 & 10 & 960  & 19\% & 6\% \\ 
        STL        & 13000 & 10 & 2048 & 10\% & 10\% \\
        CIFAR10    & 60000 & 10 & 2048 & 10\% & 10\% \\
        CIFAR100   & 60000 & 100 & 2048 & 1\% & 1\% \\ 
        Omniglot   & 40000 & 100 & 441 & 1\% & 1\% \\ 
        20news     & 18040 & 20 & 2000 & 5\% & 3\% \\ 
        Reuters10K & 10000 & 4 & 2000 & 43\% & 8\% \\ 
      \bottomrule
      \end{tabular}
                    }
\end{table*}

\begin{itemize}
    \item MNIST: It is a hand-written digits classification dataset with $28\times28$ single-channel images.
    The value of each pixel is linearly normalized into $\left[0,1\right]$ and then flattened to a $784$ dimensional feature vector.

     \item STL: It is a labelled subset of ImageNet~\citep{deng2009imagenet} with $96\times96$ colored images. We adopted features from~\citet{hu2017}, which is extracted by pre-trained 50-layer ResNets.

    \item CIFAR10: It is a dataset with ten clusters, $32\times32$ colored images. We adopted features from~\citet{hu2017}, which is extracted by pre-trained 50-layer ResNets.
    
    \item CIFAR100: It is a dataset with one hundred clusters, $32\times32$ colored images. We adopted features from~\citet{hu2017}, which is extracted by pretrained 50-layer ResNets.
    
    \item Omniglot: It is a hand-written character recognition dataset. We adopted the processing results from~\citet{hu2017}, which is an one hundred clusters dataset with twenty unique data points per class. Twenty times affine augmentations were applied as in~\citet{hu2017}, so there are $100\times20\times20=40000$ images available. Images were sized $21\times21$ single-channel, linearly normalized into $\left[0,1\right]$ and flattened into feature vectors.
    
    \item 20news: It is a dataset of news documents across twenty newsgroups. We adopted the processing code from~\citet{hu2017}. It used the data from
    python package scikit-learn~\citep{geron2019hands} and processed using tf-idf
    features with 'english' stopwords.
    
    \item SVHN: It is a dataset with street view house numbers. Following~\citet{hu2017}, we used the features they have extracted with $960$-dimensional GIST
    feature~\citep{oliva2001modeling}.

    \item Reuters10K: It is a dataset with English news stories. We adopted
    the processing results from~\citet{hu2017}. It contains four categories as labels: corporate/industrial, government/social, markets and economics.
    ten-thousands documents were randomly sampled, and processed without stop words. tf-idf features were used as in~\citet{hu2017}.
\end{itemize}

\subsection{Complex and Non-Complex Topology Datasets}
\label{append: Complex and Non-Complex Topology Datasets}

In order to characterize each dataset from some geometric point of view, we performed experiments with the K-means algorithm for these ten datasets; see the top row of Table~\ref{tb:results of clustering accuracy}.
Here, in the K-means algorithm we use the Euclidean distance to measure how far two points are apart from each other: see Chapter~22 of \citet{shalev2014understanding} for a general objective function of the K-means algorithm.
Hence, if the Top-1 accuracy with the K-means algorithm is low, then the dataset can have a complex structure so that the K-means algorithm fails to group the data points into meaningful clusters.
Utilizing the results with K-means algorithm (see the second row of Table~\ref{tb:results of clustering accuracy}), we define (non-)complex topology of a dataset as follows: 1) we say a dataset has non-complex topology if the Top-1 accuracy (\%
According to these definitions, we classify the ten datasets into two categories; two synthetic datasets, Two-Moons and Two-Rings, are of complex topology, and the others are of non-complex topology.

Note that, strictly speaking, it is difficult to provide a rigorous definition of (non-)complex topology for a real-world dataset. Instead, we state a definition inspired by our empirical observations with the K-means algorithm for the ten different datasets.

\subsection{Implementation Details with Compared Methods}
\label{append:Implementation Details with Previous Methods}

\begin{itemize}
    \item K-means: sklearn.cluster.KMeans from scikit-learn.
    \item SC: sklearn.cluster.SpectralClustering from scikit-learn with 50 - 'nearest neighbors' Graph and 'amg' eigen solver.
    \item GMMC: sklearn.mixture.GaussianMixture from scikit-learn with diagonal covariance matrices.

    \item DEC\footnote{\url{https://github.com/XifengGuo/DEC-keras} [Last accessed 23-July-2022]}: 
    Keras implementation of \citet{xie2016unsupervised} is used. 

    \item SpectralNet\footnote{\url{https://github.com/KlugerLab/SpectralNet} [Last accessed 23-July-2022]}: 
    We used the version at commit \emph{ce99307} with tensorflow 1.15, keras 2.1.6, Ubuntu 18.04 since we found that this is the only configuration that reproduces paper result in our environments. For real-world datasets, we used the 10-dimensional VaDE representation obtained in this work (see implementation details of VaDE) as input to SpectralNet. 10 neighbors were used with approximated nearest neighbor search. For Toy-sets, we have used the raw 2-dimensional input with official hyper-parameter setups for "CC" dataset in SpectralNet.

    \item VaDE\footnote{\url{https://github.com/GuHongyang/VaDE-pytorch} [Last accessed 23-July-2022]}: 
    We added the constraint that Gaussian Mixture component weight $\pi > 0$ to avoid numerical instabilities. We did not use the provided pretraining weights since we cannot reproduce the pretraining process for all datasets.

    \item IMSAT\footnote{\url{https://github.com/betairylia/IMSAT\_torch} [Last accessed 23-July-2022]}: Given an unlabeled dataset $\mathcal{D}$ and the number of clusters, we train a clustering model of IMSAT by using $\mathcal{D}$ via Eq.\eqref{eq:practical imsat objective}. In addition, we define the adaptive radius $\epsilon_i$ in VAT as same with one defined in MIST: see also Appendix~\ref{subsec:Hyperparameter Setting with Proposed Method}. Moreover, for synthetic datasets, we set $(0.1, 0.5)$ to $(\lambda_1, \lambda_2)$ of Eq.\eqref{eq:practical imsat objective}, and set $0.1$ to $\xi$ in VAT.  For real-world datasets, we set $(0.1, 4)$ to $(\lambda_1, \lambda_2)$ of Eq.\eqref{eq:practical imsat objective}, and set $10$ to $\xi$ in VAT. 

    \item IIC\footnote{\url{https://github.com/betairylia/MIST} [Last accessed 23-July-2022]}: Since we consider \textsf{Scenario2}, we cannot define the transformation function via the domain-specific knowledge. Therefore, we define it via $\mathcal{T}_{\mathfrak{e}}$ of Definition~\ref{def:process T_e} for all ten datasets as follows. For the synthetic datasets and the image datasets, $K_0 = 10$ is used. For the text datasets, $K_0 = 100$ is used. The above values of $K_0$ are selected by the hyper-parameter tuning.   %

    \item 
    CatGAN: We adopted the implementation from here\footnote{\url{https://github.com/xinario/catgan\_pytorch} [Last accessed 23-July-2022]} and moved it to GPU. 
    Since the original CatGAN experiments have used CNNs and cannot be applied to general-purpose datasets, we substituted CNNs in both generator and discriminator with a 4-layer MLP.

    \item SELA: 
    We used the official implementation\footnote{\url{https://github.com/yukimasano/self-label} [Last accessed 23-July-2022]} 
    with single head and known cluster numbers. We replaced the Convolutional Network in the original work by a simple MLP identical to our MIST implementation as we focusing on general purpose unsupervised learning instead of images. We also disabled data-augmentation steps presented in the original work of SELA.

    \item SCAN: 
    We adopted the loss computation part from official implementation\footnote{\url{https://github.com/wvangansbeke/Unsupervised-Classification} [Last accessed 19-July-2022]} and used MIST's framework to implement SCAN. Since we focus on generic datasets without specific domain knowledge, data augmentations are removed and SCAN learns solely on nearest neighbors. Same input data (and feature extraction steps) as MIST are used for our SCAN implementation.

\end{itemize}

\subsection{Two-Dimensional Visualization}
\label{append:Visualization Details in Fig of imsat and spectralnet weakness}
Panels a)$\sim$h) in Figure~\ref{fig:weakness of imsat and spectralnet} were obtained by the following procedure. For two-dimensional visualization with real-world datasets, we employ UMAP~\citep{mcinnes2018umap}, 
and implement it using the public code\footnote{\url{https://pypi.org/project/umap-learn/} [Last accessed 23-July-2022]}, where we set ten and two to "n$\_$neighbors" and "n$\_$components". In addition, we fix the above two parameters with UMAP for all visualization of real-world datasets.

    \begin{figure}[!t]
    \centering
    \def\svgwidth{\linewidth}
    \includegraphics[scale=0.9]{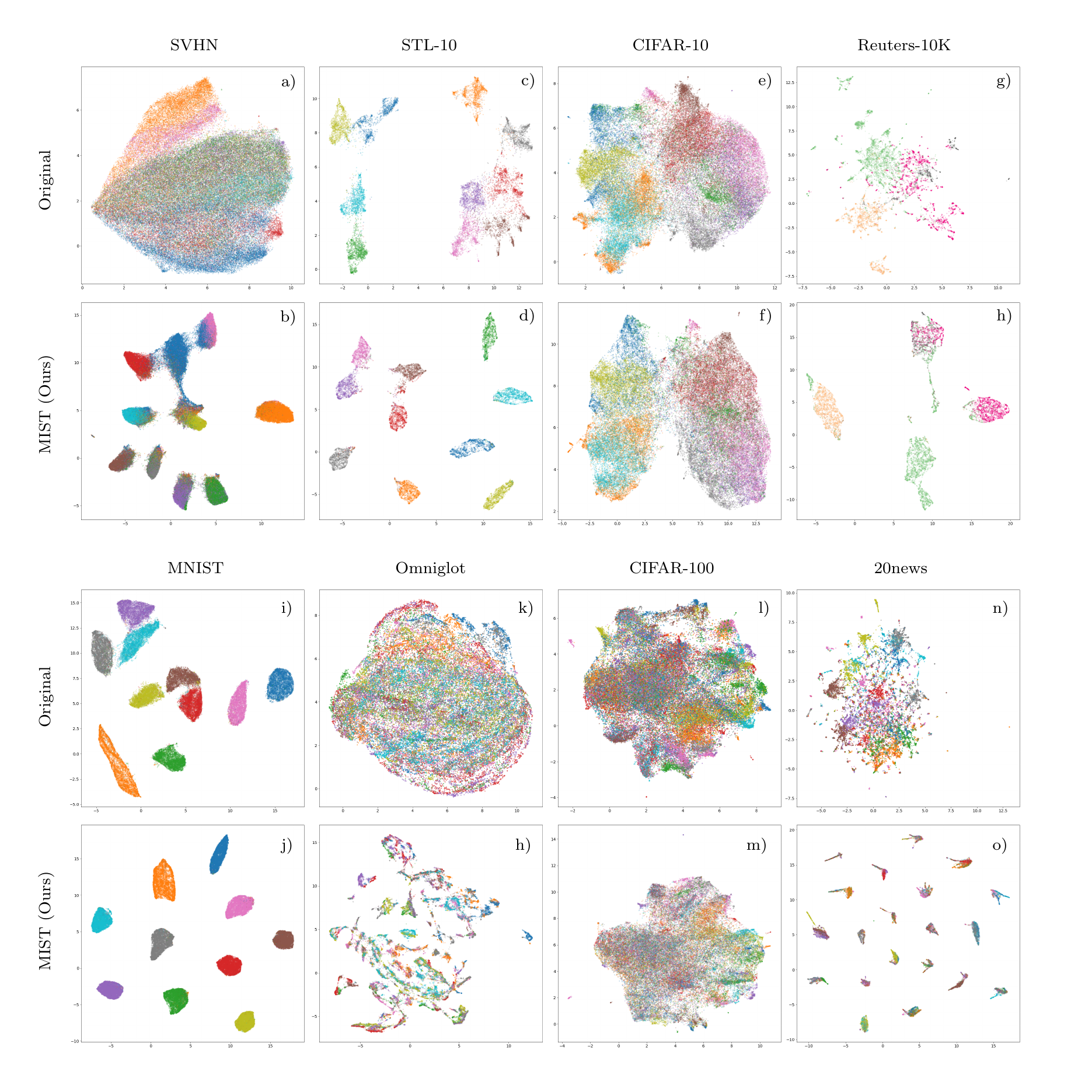}
    \caption{Two-dimensional visualizations of original datasets and their representations by MIST. Visualization of a original dataset is obtained via the same manner with panel a) of Figure~\ref{fig:weakness of imsat and spectralnet}, while that of MIST representation is obtained via the same manner with panel g) of Figure~\ref{fig:weakness of imsat and spectralnet}.
    }
    \label{fig:umap visualization svhn, stl, cifar10 and r10k}
    \end{figure}

\begin{itemize}
    \item a): Input MNIST dataset $\mathcal{D}_{\rm mnist} =\{x_i\}_{i=1}^n$, where $x_i \in \mathbb{R}^{784}$ and $n=70000$, to UMAP. Then, we obtain the two-dimensional vectors of $\mathcal{D}_{\rm mnist}$. Then, assign true labels to the vectors.

    \item c): Firstly, using $\mathcal{D}_{\rm mnist}$, train IMSAT of Eq.\eqref{eq:practical imsat objective} where $(\lambda_1, \lambda_2)=(0.1, 4)$. Moreover, for VAT in IMSAT, we set ten to $\xi$, and define the adaptive radius $\epsilon_i$ as same with one defined in MIST; see also Appendix~\ref{subsec:Hyperparameter Setting with Proposed Method}. Input $\mathcal{D}_{\rm mnist}$ to the trained clustering model whose last layer (a softmax function) is removed.
    Then, get the output whose dimension is $C=10$. Thereafter, we feed the output to UMAP, and we obtain the two-dimensional vectors. Thereafter, assign true labels to the vectors.

    \item e): 
    Firstly, using $\mathcal{D}_{\rm mnist}$, train a clustering MLP of SpectralNet. SpectralNet's official hyper-parameter setups are used.
    Input $\mathcal{D}_{\rm mnist}$ to the trained clustering model whose last layer (a softmax function) is removed.
    Then, get the output whose dimension is $C=10$. Thereafter, we feed the output to UMAP, and we obtain the two-dimensional vectors. Thereafter, assign true labels to the vectors.

    \item g): Firstly, using $\mathcal{D}_{\rm mnist}$, train clustering neural network model with MIST whose hyper-parameters are defined in Appendix~\ref{subsec:Hyperparameter Setting with Proposed Method}. Input $\mathcal{D}_{\rm mnist}$ to the trained clustering model whose last layer defined by the softmax is removed, and get the output. Then, we input the output to UMAP, and we obtain the two-dimensional vectors. Thereafter, assign true labels to the vectors.
    
    \item b), d), f), h): Since a data point in Two-Rings dataset $\mathcal{D}_{\rm two\_rings}$ already belongs to two-dimensional space, we just visualize the data point location with its label information in the panel b). With detail of $\mathcal{D}_{\rm two\_rings}$, see Appendix~\ref{append:Detail of Dataset}.
    For the panels d), f) and h), we firstly predict the cluster labels by using corresponding clustering method.
    Then, visualize the data point location with its predicted cluster label.   
    
\end{itemize}

In Figure~\ref{fig:umap visualization svhn, stl, cifar10 and r10k},
we additionally show two-dimensional visualization results 
of all eight real-world datasets.
In this figure, visualizations of the first row were obtained by the same manner with the panel a) of Figure~\ref{fig:weakness of imsat and spectralnet}. Visualizations (by MIST) of the second row in Figure~\ref{fig:umap visualization svhn, stl, cifar10 and r10k} were obtained by the same manner with the panel g) of Figure~\ref{fig:weakness of imsat and spectralnet}.

\subsection{Hyper-Parameter Tuning}
\label{subsec:Hyperparameter Setting with Proposed Method}

\begin{table}[!t]
  \centering
   \caption{All hyper-parameters related to MIST of Algorithm~\ref{alg:proposed}. In the first (resp. second) column, the hyper-parameters (resp. reference) are shown. 
   }
   \label{tb:summary of hyperparameters}
   \scalebox{0.85}{
      \begin{tabular}{cc} 
      \toprule
      Hyper-Parameters         &  Reference       \\ \midrule%
      $(\mu, \eta, \gamma)$    & MIST objective of Eq.\eqref{eq:rewritten implementing unconstrained objective} \\ 
      $(\alpha, \tau)$       &    Critic function of Eq.\eqref{eqn:alpha-tau-model}      \\ %
      $(K_0, \beta)$           & Definition~\ref{def:process T_e} and~\ref{def:process T_g}\\  
      $(\xi, \epsilon_i)$    &   VAT of Eq.\eqref{eq: vat-loss} \\ \midrule
      $(m, n_{\rm ep})$ & Algorithm~\ref{alg:proposed} \\
      \bottomrule
      \end{tabular}
      }
\end{table}

Table~\ref{tb:summary of hyperparameters} shows all hyper-parameters related to MIST algorithm. 
Throughout numerical experiments of Section~\ref{sec:numerical-experiments}, we set $250, 50$ as $m,n_{\rm ep}$, respectively. In addition, following~\citet{hu2017}, we respectively fix $\epsilon_i$ of VAT to $\epsilon_i = 0.25 \times \left\|x_i - x^{(10)}_i\right\|_2$, where $x_i \in \mathcal{D}$ and $x^{(10)}_i$ is the tenth nearest neighbor data point from $x_i$ on $\mathcal{D}$ with the Euclidean metric. Note that for the synthetic datasets (resp. real-world datasets), the generative process $\mathcal{T}_{\mathfrak{g}}$ of Definition~\ref{def:process T_g} (resp. the generative process $\mathcal{T}_{\mathfrak{e}}$ of Definition~\ref{def:process T_e}) is employed.

\begin{table}[!t]
  \centering
   \caption{Candidates with some hyper-parameters related in MIST and MIST with $\hat{I}_{\rm nce}$ of Table~\ref{tb:results of clustering accuracy}.
   The symbol of $\leftarrow$ indicates the same value to the cell in the left. Real-world \& Image means MNIST, SVHN, STL, CIFAR10, CIFAR100, Omniglot. Real-world \& Text means 20news and Reuters10K. Synthetic means both Two-Moons and Two-Rings. 
   }
   \label{tb:candidates of hyperparameters}
   \scalebox{0.8}{
      \begin{tabular}{cccc} 
      \toprule
      Hyper-Parameters         &  Real-world \& Image & Real-world \& Text    &  Synthetic   \\ \midrule%
      \multirow{2}{*}{$(\mu, \eta, \gamma)$}    &  $\{(0.045, 5, 1.5), (0.045, 6, 1.5),$ & \multirow{2}{*}{$\leftarrow$}  & $\{(0.1, 15, 10), (0.1, 15.5, 10),$\\ 
      &$\;\;\;(0.05, 5, 1.5), (0.04, 6, 1.5)\}$ &&$(0.1, 16, 10)\}$ \\
      $\alpha$                 &    $\{0,1,2\}$        & $\leftarrow$ & $\leftarrow$   \\ %
      $\tau$                   &    $\{0.01,0.05,0.1,1,10\}$            & $\leftarrow$   &        $\leftarrow$    \\
      $(K_0, \beta)$           &    $\{(5, 0), (7,0), (10,0), (15,0)\}$ &  $\{(50j,2/3), (50j,4/5)\;|\;j\in\{1,..,4\}\}$     &  $\{(15, j/10)\;|\;j\in\{0,..,10\}\}$    \\  
      $\xi$                    &   $\{0.1, 1, 10, 100\}$      & $\leftarrow$ & $\leftarrow$ \\ 
      \bottomrule
      \end{tabular}
      }
\end{table}

In numerical experiments of Table~\ref{tb:results of clustering accuracy}, the other hyper-parameters are tuned within the corresponding candidates shown in Table~\ref{tb:candidates of hyperparameters}. Those candidates were decided by the following procedure:
\begin{itemize}
    \item $(\mu, \eta, \gamma)$: Since MIST is based on IMSAT, following~\citet{hu2017}, for real-world datasets, we manually search efficient candidates, which safisfy the following criterion, inside the region including $\mu\eta = 0.4$ and $\mu=0.1$: candidates which work well for MIST and MIST with $\hat{I}_{\rm nce}$ of Table~\ref{tb:results of clustering accuracy}.
    Note that, in IMSAT objective of Eq.\eqref{eq:practical imsat objective}, the authors set $\mu=0.1$ and $\eta=4$ in their official code. For the synthetic datasets, the candidates were decided via totally manual searching.
    
    \item $(\alpha, \tau)$ and $(K_0, \beta)$: We essentially conducted manual searching for candidates, which can be efficient for both MIST and MIST with $\hat{I}_{\rm nce}$. When we select the candidates of $K_0$, 
    we follow the same strategy of~\citet{shaham2018}. 
    
    \item $\xi$: 
    We chose values that are around ten, since ten is set as $\xi$ in the official IMSAT code.
\end{itemize}

As for criterion of hyper-parameter tuning of the MIST and MIST with $\hat{I}_{\rm nce}$, we employed the following: for each (either real-world or synthetic), the most efficient $(\mu, \eta, \gamma, \alpha, \tau, \xi)$ should be found, while $(K_0, \beta)$ can be adaptive for ten datasets. To find the best efficient one, we used a tuning method described in Appendix G of~\citet{hu2017}, where a set of hyper-parameters, that can have the highest average clustering accuracy over several datasets, are selected. The tuning result of the MIST is shown in Table~\ref{tb:selected hyperparameters}.

\begin{table}[!t]
  \centering
   \caption{Selected hyper-parameters with MIST of Table~\ref{tb:results of clustering accuracy}. The symbol of $\leftarrow$ indicates the same value to the cell in the left. 
   }
   \label{tb:selected hyperparameters}
   \scalebox{0.8}{
      \begin{tabular}{ccccccc} 
      \toprule
                               &  MNIST, SVHN, STL, CIFAR100, Omniglot  &  CIFAR10  &    20news    &  Reuters10K  &  Two-Moons & Two-Rings        \\ \midrule%
      $(\mu, \eta, \gamma)$    &    { $(0.045, 5, 1.5)$}      &$\leftarrow$&$\leftarrow$&$\leftarrow$& $(0.1, 15.5, 10)$  & $\leftarrow$ \\ %
      $(\alpha,\tau)$                 &    $(1,0.05)$                       &$\leftarrow$&$\leftarrow$&$\leftarrow$&$\leftarrow$ & $\leftarrow$       \\ %
      $(K_0, \beta)$           &  $(7,0)$                        & $(15,0)$  & $(200, 4/5)$ & $(50, 4/5)$  &      $(15, 0)$  & $(15, 0.6)$\\ %
      $\xi$     & 10 & $\leftarrow$ & $\leftarrow$ & $\leftarrow$ & 0.1 & $\leftarrow$\\
      \bottomrule
      \end{tabular}
      }
\end{table}

Moreover, for all combinations except for (\ctext{B}, \ctext{C}) in Table~\ref{table: ablation study result}, we at first manually selected the candidates of hyper-parameters. Then, for each combination, we conducted hyper-parameter tuning, whose criterion is same with one employed for tuning hyper-parameters in MIST of Table~\ref{tb:results of clustering accuracy}. The tuning results are shown in Table~\ref{tb:selected hyperparameters for ABCD ablation}.

\begin{table}[!t]
  \centering
   \caption{
   Selected value for each hyper-parameter with experiments related to Table~\ref{table: ablation study result}.
   From the second to sixth columns, hyper-parameter values selected for real-world datasets are shown. The symbol of "$-$" means that 
   the hyper-parameter is not needed. 
   The symbol of $\leftarrow$ indicates the same value to the cell in the left. In the last column, hyper-parameter values to define six combinations are shown. As for $(\alpha, K_0, \beta, \xi)$, the same values shown in Table~\ref{tb:selected hyperparameters} are employed.
   }
   \label{tb:selected hyperparameters for ABCD ablation}
   \scalebox{1}{
      \begin{tabular}{ccccccc} 
      \toprule
      &  (\ctext{D})  & (\ctext{B}, \ctext{D})  &  (\ctext{A}, \ctext{D})  &  (\ctext{B}, \ctext{C}, \ctext{D}) & (\ctext{A}, \ctext{B}, \ctext{C}) &  Two-Rings  \\ \midrule%
      $\mu$    & $-$ &  $\leftarrow$ & 0.045 & $-$ & 0.1 & 0.1 \\ %
      $\eta$   & $-$  & 1 & $-$ & 1 & 4 & 15.5 \\ %
      $\gamma$ & $-$  & 10 & 1.5 & 10 & $-$ & 10 \\
      $\tau$   & 1.0  & $\leftarrow$ & $\leftarrow$ & 0.1 & $-$ & 1.0 \\%
      \bottomrule
      \end{tabular}
      }
\end{table}

\clearpage

\bibliographystyle{apalike}

\end{document}